\documentclass[conference]{IEEEtran}
\usepackage{times}

\usepackage[numbers, sort&compress]{natbib}
\usepackage[table,xcdraw]{xcolor}
\usepackage{amsmath,amsfonts,amssymb,amsthm}
\usepackage[ruled, lined, linesnumbered, commentsnumbered, longend]{algorithm2e}
\usepackage{bbm}
\usepackage{bm}
\usepackage{multirow}
\usepackage{graphicx}
\usepackage{subcaption}
\usepackage{float}
\usepackage{siunitx}
\usepackage{relsize}
\usepackage{mdframed}
\usepackage{xspace}
\usepackage{url}
\usepackage{boldline}
\usepackage{stfloats}
\let\labelindent\relax
\usepackage{enumitem}
\usepackage{pifont}
\usepackage{nicefrac}
\usepackage{booktabs}
\usepackage[absolute,overlay]{textpos}
\usepackage{fancyvrb}
\DefineVerbatimEnvironment{codesnippet}{Verbatim}{
  fontsize=\scriptsize
}

\renewcommand{\IEEEQED}{\IEEEQEDopen}

\newcommand{\mycroppedimg}[1]{%
  \includegraphics[
    width=\linewidth,
    trim=15cm 5cm 15cm 5cm, % L B R T
    clip
  ]{#1}%
}

\DeclareMathOperator*{\argmax}{arg\,max}

\DeclareMathOperator{\tr}{tr}

\newtheorem{problem}{Problem}

\newtheorem{assumption}{Assumption}
\newtheorem{theorem}{Theorem}
\newtheorem{lemma}{Lemma}
\newtheorem{corollary}{Corollary}
\newtheorem{proposition}{Proposition}
\newtheorem{definition}{Definition}
\newtheorem{remark}{Remark}
\definecolor{MaterialBlueGray10}{HTML}{CFD8DC}
\definecolor{MaterialBlueGray900}{HTML}{263238}
\definecolor{MaterialBlue10}{HTML}{E3F2FD}
\definecolor{MaterialBlue900}{HTML}{0D47A1}
\newtheorem{running-example}{Running Example}
\newtheorem*{running-example*}{Running Example}
\newcommand{\RNum}[1]{\uppercase\expandafter{\romannumeral #1\relax}}
\newcommand\Mark[1]{\textsuperscript#1}

\usepackage[bookmarks=true]{hyperref}
\hypersetup{
    colorlinks,
    linkcolor={red!50!black},
    citecolor={blue!50!black},
    urlcolor={blue!80!black}
}

\begin{document}

\begin{textblock}{9}(1.412, 0.343)
{
\sffamily
\noindent\textbf{\textsf{Robotics: \hspace{.09ex}Science and Systems 2026}} \\
\noindent\textbf{\textsf{Sydney, Australia, July 13-July 17, 2026}}
}
\end{textblock}

\title{Learning What Matters: Adaptive Information-Theoretic Objectives for Robot Exploration}

\author{
\IEEEauthorblockN{Youwei Yu\Mark{1},
Jionghao Wang\Mark{2},
Zhengming Yu\Mark{2},
Wenping Wang\Mark{2} and
Lantao Liu\Mark{1}}
\IEEEauthorblockA{\Mark{1}Luddy School of Informatics, Computing, and Engineering,
Indiana University, Bloomington, IN, USA\\
Email: \texttt{\{youwyu, lantao\}@iu.edu}}
\IEEEauthorblockA{\Mark{2}Department of Computer Science \& Engineering,
Texas A\&M University, College Station, TX, USA\\
Email: \texttt{\{jionghao, yuzhengming, wenping\}@tamu.edu}}
}

\IEEEoverridecommandlockouts

\maketitle

\def\thefootnote{}\footnotetext{Website with additional information, videos, and code: \url{https://www.youwei-yu.com/qoed}.}
\def\thefootnote{\arabic{footnote}}

\begin{abstract}
Designing learnable information-theoretic objectives for robot exploration remains challenging. Such objectives aim to guide exploration toward data that reduces uncertainty in model parameters, yet it is often unclear what information the collected data can actually reveal. Although reinforcement learning (RL) can optimize a given objective, constructing objectives that reflect parametric learnability is difficult in high-dimensional robotic systems. Many parameter directions are weakly observable or unidentifiable, and even when identifiable directions are selected, omitted directions can still influence exploration and distort information measures.
To address this challenge, we propose Quasi-Optimal Experimental Design (Q{\footnotesize OED}), an adaptive information objective grounded in optimal experimental design. Q{\footnotesize OED} (i) performs eigenspace analysis of the Fisher information matrix to identify an observable subspace and select identifiable parameter directions, and (ii) modifies the exploration objective to emphasize these directions while suppressing nuisance effects from non-critical parameters. Under bounded nuisance influence and limited coupling between critical and nuisance directions, Q{\footnotesize OED} provides a constant-factor approximation to the ideal information objective that explores all parameters.
We evaluate Q{\footnotesize OED} on simulated and real-world navigation and manipulation tasks, where identifiable-direction selection and nuisance suppression yield performance improvements of \SI{35.23}{\percent} and \SI{21.98}{\percent}, respectively. When integrated as an exploration objective in model-based policy optimization, Q{\footnotesize OED} further improves policy performance over established RL baselines.
\end{abstract}

\IEEEpeerreviewmaketitle

\section{Introduction}
In robot exploration, information-theoretic objectives provide a principled mechanism for making effective use of limited interaction budgets by encouraging actions that collect data reducing uncertainty about unknown quantities.
Bayesian optimal experimental design (BOED)~\cite{rainforth2024modern} formalizes this idea by selecting
actions that maximize expected information gain about a chosen set of parameters~\cite{rainforth2024modern}. This perspective has been used across robotics,
including active perception for scene representations~\cite{jiang2023fisherrf,XieY-RSS-25,Strong25nbv}, sim-to-real transfer via simulator calibration~\cite{memmel2024asid,sobanbabu2025sampling}, and manipulation via contact-rich interactions~\cite{SathyanarayanH-RSS-25}. Due to its statistical rigor, BOED can substantially improve exploration efficiency by prioritizing informative data.

However, the success of BOED-based exploration depends on a well-specified objective---in particular, choosing parameters that the collected data
can reliably reveal (i.e., that are sufficiently observable and identifiable). In practice, learnability varies widely across settings: geometry can
be more observable than appearance in active perception~\cite{jiang2023fisherrf,XieY-RSS-25,Strong25nbv}, and mass/motor properties can be easier to infer than aerodynamic effects in system identification~\cite{sobanbabu2025sampling}. In challenging scenarios, such as a quadruped traversing ice, it may be difficult to determine whether abnormal behavior is caused by the environment or by degraded actuators~\cite{Ewen-RSS-24}. When the objective is well defined, BOED can also yield interpretable behaviors (e.g., rubbing to estimate friction)~\cite{SathyanarayanH-RSS-25}. Nonetheless, specifying the ``critical'' parameters typically requires domain expertise and often assumes they can be pre-specified \emph{a priori}. Moreover, naively ignoring unselected parameters can distort information objectives and lead exploration to pursue spurious or uninformative directions.

This work addresses the challenge of adaptively designing learnable information-theoretic objectives.
We focus on identifying critical parameters online and reducing the influence of nuisance parameters on the exploration objective.
To this end, we propose Quasi-Optimal Experimental Design (Q{\footnotesize OED}).
Q{\footnotesize OED} first leverages eigenspace analysis of the Fisher information matrix to identify an observable subspace and select identifiable parameter
coordinates. It then constructs an adaptive information objective that emphasizes the critical coordinates while suppressing nuisance directions.

Our main contributions include:
\begin{enumerate}
    \item \textbf{Identifying Critical Parameters.} 
    Many physical parameters in robotic systems are weakly observable or unidentifiable. We analyze the eigenstructure of the FIM to characterize an observable subspace and select a compact set of identifiable parameter directions.
    \item \textbf{Adaptive Information Objective.} 
    The proposed adaptive information-theoretic objective emphasizes critical directions of interest while suppressing the influence of nuisance parameters. Under bounded nuisance effects and limited coupling between critical and nuisance directions, this objective yields a constant-factor approximation to the ideal objective that explores all parameters.
    \item \textbf{Q{\footnotesize OED} for Model-Based Policy Optimization.}
    Beyond pure exploration, we integrate Q{\footnotesize OED} into model-based policy optimization (MBPO). We learn a physics-conditioned forward dynamics model that enables estimation of the Q{\footnotesize OED} objective, and augment the task reward with this objective to guide robot exploration.
\end{enumerate}

\section{Related Work}
Exploration is a central challenge in robotics because real-world interaction is expensive. In this section, we review (i) exploration methods in RL, (ii) BOED for exploration objectives, and (iii) model-based policy optimization, highlighting the practical gaps that arise when bringing BOED into real-world robotic learning.

\subsection{Learning to Explore}
Exploration in robotics has moved from heuristics-driven~\cite{cao2023representation} to learning-based methods.
In RL, exploration is typically implemented through additional reward bonuses and is often grouped into two families:
\emph{uncertainty-driven} (information gathering) and \emph{intrinsic-motivation} (curiosity).

\textbf{Uncertainty-driven exploration.}
It encourages actions that reduce uncertainty.
A common distinction is between:
(i) uncertainty about learned parameters (e.g., value functions), and
(ii) irreducible randomness in the environment (e.g., stochastic transitions).
In discrete settings, the uncertainty can be propagated with Bayesian updates~\cite{dearden1998bayesian}.
In continuous domains, it is often approximated using low-dimensional structure~\cite{azizzadenesheli2018efficient} or ensembles/bootstrapping~\cite{pmlr-v48-osband16,JMLR:v20:18-339}.
Exploration is then driven by reducing parameter uncertainty~\cite{dearden1998bayesian}, controlling information regret~\cite{pmlr-v75-kirschner18a}, or reducing uncertainty in predicted returns~\cite{moerland2017efficient}.
Irreducible randomness is commonly modeled with distributional RL~\cite{bellemare2017distributional,mavrin2019distributional}.
Although theoretically distinct, these uncertainty sources often overlap in practice~\cite{moerland2017efficient,nikolov2018informationdirected}.

\textbf{Curiosity-driven exploration.}
Curiosity bonuses encourage agents to seek experiences that are either novel or hard to predict.
For novelty, classic count-based methods in tabular RL~\cite{NIPS2017_3a20f62a} have been extended to continuous spaces using density estimation~\cite{NIPS2016_afda3322,choshen2018dora}.
Go-Explore~\cite{ecoffet2021first} further biases exploration by returning to previously discovered states and expanding from them.
For prediction-based curiosity, Random Network Distillation (RND)~\cite{burda2018exploration} uses the prediction error of a fixed random target network as an intrinsic reward.
Related methods use forward dynamics prediction~\cite{stadie2015incentivizing,4141061} or inverse dynamics to emphasize controllable novelty~\cite{pathakICMl17curiosity}.
Physics-based curiosity can also be defined through parameter estimation error~\cite{denil2016learning}, but it typically assumes a small and specified set of parameters.

\subsection{Bayesian Optimal Experiment Design}
BOED selects experiments (e.g., actions) to maximize expected information gain about unknown parameters. However, classical BOED typically relies on two assumptions: (i) a compact, expert-defined set of parameters, and (ii) an accurate probabilistic model for predicting information gain.

\textbf{Choosing parameters of interest.}
BOED traditionally assumes key parameters are pre-specified by experts.
With a good choice of parameters, BOED has achieved strong results in locomotion~\cite{sobanbabu2025sampling}, underactuated control~\cite{6882246}, and manipulation~\cite{oliveira2024bayesian,SathyanarayanH-RSS-25}.
However, success hinges on parameters that are both observable and identifiable (e.g., masses and damping in a cart--pendulum system~\cite{6882246}). This assumes that unselected parameters do not alter experiment utility. While active subspaces~\cite{ActiveSubspaces} provide tools for dimensionality reduction, they are typically applied to static experimental design. Standard Bayesian active learning can fail in the presence of nuisance parameters~\cite{sloman2024bayesian}. Nonetheless, identifying nuisance parameters and accounting for their influence remains a significant gap.

\textbf{Statistical model.} 
In policy learning, the statistical model usually represents forward dynamics~\cite{sobanbabu2025sampling}. Robotics applications often employ manually-designed physics models with Gaussian noise to ensure analytical tractability~\cite{memmel2024asid,sobanbabu2025sampling,SathyanarayanH-RSS-25}. While effective in controlled settings, these are sensitive to model mismatch. Conversely, model-based reinforcement learning (MBRL) excels at learning dynamics~\cite{hansen2024tdmpc} but often lacks the tractable likelihoods required for BOED. Consequently, a practical gap persists between predicting the next state and estimating the information gain it yields.

\subsection{Model-Based Policy Optimization}
Much recent progress in robotic RL relies on models that can generate large amounts of training experience with little additional data collection.
These models are typically either (i) physics simulators or (ii) learned dynamics models.

\textbf{Simulators.}
Physics simulators act as a library of validated physical rules and have enabled impressive sim-to-real transfer in agile locomotion and manipulation~\cite{David24anymalparkour,zhuang2024humanoid,Fabian24DTC,Handa23DeXtreme,Villasevil-RSS-24}.
Ongoing work continues to improve simulator fidelity and breadth~\cite{newton}, but accurately modeling every aspect of the real world (e.g., fluids, contact micro-effects) remains difficult, and the sim-to-real gap can still limit performance.

\textbf{Learned world models.}
MBRL learns a dynamics model directly from real experience and uses it to generate imagined rollouts in parallel~\cite{janner2019mbpo,li2025robotic}.
Training policies using imagined rollouts from a learned model is often referred to as model-based policy optimization~\cite{janner2019mbpo}.
Our proposed approach is based on this framework, but we emphasize an additional requirement that is critical for BOED:
the model should support likelihood-based reasoning about unknown parameters.

In contrast to existing work, our model is designed to be both a forward dynamics model for imagined rollouts and a statistical model that supports BOED-driven exploration.

% ===========================================================================================
\section{Preliminary}
We study a hidden-parameter Markov decision process (MDP) defined by the tuple $\langle \mathcal{S}, \mathcal{A}, \Phi, p_0, P, p(\bm{\phi}), R, T, \gamma \rangle$. States are $\mathbf{s}\in\mathcal{S}\subseteq\mathbb{R}^{d_s}$, actions are $\bm{a}\in\mathcal{A}\subseteq\mathbb{R}^{d_a}$, and the hidden parameters are $\bm{\phi}\in\Phi\subseteq\mathbb{R}^{m}$ with prior $p(\bm{\phi})$. The initial state is $\mathbf{s}_0\sim p_0(\mathbf{s}_0)$. Given $(\mathbf{s}_t,\bm{a}_t,\bm{\phi})$, the next state is sampled from the transition model $P(\cdot\vert \mathbf{s}_t,\bm{a}_t,\bm{\phi})$, i.e.,
$\mathbf{s}_{t+1}\sim p(\mathbf{s}_{t+1}\vert \mathbf{s}_t,\bm{a}_t,\bm{\phi})$.
The per-step reward is $R:\mathcal{S}\times\mathcal{A}\to\mathbb{R}$, the horizon is $T\in\mathbb{N}^+$, and $\gamma\in(0,1]$ is the discount factor. We write a trajectory prefix as
$ \bm{\tau}_t = [\mathbf{s}_0,\bm{a}_0,\mathbf{s}_1,\dots,\bm{a}_{t-1},\mathbf{s}_t] $. For a stochastic policy $\bm{\pi}(\bm{a}\vert \mathbf{s})$, the induced distribution over $\bm{\tau}_t$ is
\begin{equation}
\label{eq:traj-dist}
p(\bm{\tau}_t\vert \bm{\phi},\bm{\pi})
=
p_0(\mathbf{s}_0)\prod_{k=0}^{t-1}
\bm{\pi}(\bm{a}_k\vert \mathbf{s}_k)\,
p(\mathbf{s}_{k+1}\vert \mathbf{s}_k,\bm{a}_k,\bm{\phi}).
\end{equation}

\begin{mdframed}[hidealllines=true,backgroundcolor=gray!8,innerleftmargin=.2cm,
  innerrightmargin=.2cm]
\begin{problem}[Policy learning with an exploration objective]
\label{problem:mpc}
We learn a stochastic policy $\bm{\pi}(\bm{a}\vert \mathbf{s})$ that maximizes reward while also collecting data that helps estimate
the hidden parameters $\bm{\phi}$. Concretely, we solve
\begin{equation}
\label{eqn:raw-mdp}
\begin{aligned}
\bm{\pi}^{\star}
&=
\argmax_{\bm{\pi}\in\Pi}\;
\mathbb{E}_{ p(\bm{\phi}),\; p(\bm{\tau}\vert \bm{\phi},\bm{\pi})}
\Bigg[
\\
&\qquad
\sum_{t=0}^{T-1}\gamma^t
\Big(
R(\mathbf{s}_t,\bm{a}_t)
+\alpha\,\mathcal{B}(\bm{\phi} \vert \bm{\tau}_t)
\Big)
\Bigg]
\\
\text{s.t.}\quad
&\mathbf{s}_{t+1}\sim
p(\mathbf{s}_{t+1}\vert \mathbf{s}_t,\bm{a}_t,\bm{\phi}),
\mathbf{s}_t\in\mathcal{S},
\quad
\bm{a}_t\in\mathcal{A},
\quad
\forall t.
\end{aligned}
\end{equation}
where $\mathcal{B}(\bm{\phi} \vert \bm{\tau}_t)$ is an exploration objective computed from the trajectory prefix $\bm{\tau}_t$, and $\alpha\ge 0$ controls the trade-off between task reward and exploration.
\end{problem}
\end{mdframed}

Across the literature, $\bm{\phi}$ can represent different unknowns, ranging from explicit physical parameters~\cite{memmel2024asid}
to weights of neural networks~\cite{eschenhagen2023kroneckerfactored}. In this paper, $\bm{\phi}$ denotes physical parameters (e.g., friction). These parameters are interpretable and can be used as inputs to a forward dynamics model. We focus on physically meaningful parameters to avoid the degeneracy caused by arbitrary unit scalings.

\subsection{Exploration Objective from Information Gain}
\label{sec:prelimilary-info-gain}
We design the exploration objective $\mathcal{B}_t$ to encourage collecting data that is informative about $\bm{\phi}$.
Intuitively, data is informative if changing $\bm{\phi}$ would noticeably change the likelihood of the observed transitions.
We motivate this using Bayesian optimal experiment design (BOED) and the Fisher information matrix (FIM).

\begin{mdframed}[hidealllines=true,backgroundcolor=MaterialBlueGray10!40,innerleftmargin=.2cm,
  innerrightmargin=.2cm]
\begin{definition}[BOED with Fisher Information]
\label{def:boed-fim}
BOED chooses a design to maximize the expected information gain (EIG) about unknown parameters. In our setting, the ``design'' is the policy $\bm{\pi}$. We quantify the information content of a trajectory $\bm{\tau}_t$ with respect to $\bm{\phi}$ using FIM, which measures the sensitivity of the trajectory distribution to parameter perturbations. For a trajectory prefix $\bm{\tau}_t$, define the score
$\mathbf{g}(\bm{\tau}_t,\bm{\phi}) := \nabla_{\bm{\phi}} \log p(\bm{\tau}_t \vert \bm{\phi},\bm{\pi})$. The FIM is
\begin{equation}
\label{eqn:fim}
\bm{\mathcal{F}}_{\bm{\phi}}
=
\mathbb{E}_{\bm{\tau}_t \sim p(\bm{\tau}_t \vert \bm{\phi},\bm{\pi})}
\Big[
\mathbf{g}(\bm{\tau}_t,\bm{\phi})\;
\mathbf{g}(\bm{\tau}_t,\bm{\phi})^{\top}
\Big] ,
\end{equation}
assuming standard regularity conditions (Sect.~2.3.1 in~\cite{schervish2012theory}). Since the policy $\bm{\pi}(\bm{a}|\bm{s})$ and initial state distribution $p(\bm{s}_0)$ are independent of the parameters $\bm{\phi}$, their gradients vanish. The score depends solely on the transition dynamics:
\begin{equation}
\label{eq:qom-fim}
\nabla_{\bm{\phi}} \log p(\bm{\tau}_t \vert \bm{\phi},\bm{\pi})
=
\sum_{k=0}^{t-1}
\nabla_{\bm{\phi}}
\log p(\mathbf{s}_{k+1} \vert \mathbf{s}_k, \bm{a}_k, \bm{\phi}) .
\end{equation}
The full derivation is given in Appendix~\ref{apdx:deri-traj-log-likelihood}.
\end{definition}
\end{mdframed}

To obtain a single scalar measure of informativeness, we summarize the FIM using its trace (i.e., T-optimality~\cite{optimal-design-exps}):
\begin{equation}
\label{eq:trace-bonus}
\mathrm{Ideal\ BOED\ Objective:}\quad
\mathcal{B}_{\text{BOED}}(\bm{\phi} \vert \bm{\tau}_t)
=
\operatorname{tr}\!\left(\bm{\mathcal{F}}_{\bm{\phi}}\right).
\end{equation}

\subsection{Challenges to Address}
\label{subsec:challenges}

The success of BOED-style exploration depends on ideally an expert-curated, low-dimensional set of critical parameters. This expert curation relies on the implicit assumption that other excluded parameters exert negligible influence on the policy exploration. Real robots often provide neither. This leads to two questions that we address in this paper:
\begin{enumerate}   
    \item[\textbf{Q1}]\label{preliminary:q2} \textbf{Critical parameter subspace.}
    In high-dimensional systems, only a few ``critical'' parameters affect what the information can reveal~\cite{karakida2019universal,WIELAND202160} (see Fig.~\ref{fig:mujoco-fisher-info} for an illustration). Can an agent identify these critical parameters automatically, without expert assumptions?
    
    \item[\textbf{Q2}]\label{preliminary:q3} \textbf{Subspace-optimal experimental design.}
    Simply optimizing for critical parameters is insufficient; discarded parameters inevitably affect the information objective. Consequently, how can we design an objective to compensate for these unmodeled effects, ensuring the policy remains near optimal to the full BOED---the ideal objective computed over the entire parameter space?
\end{enumerate}

\begin{figure}[t!]
    \centering
    \includegraphics[width=1\linewidth]{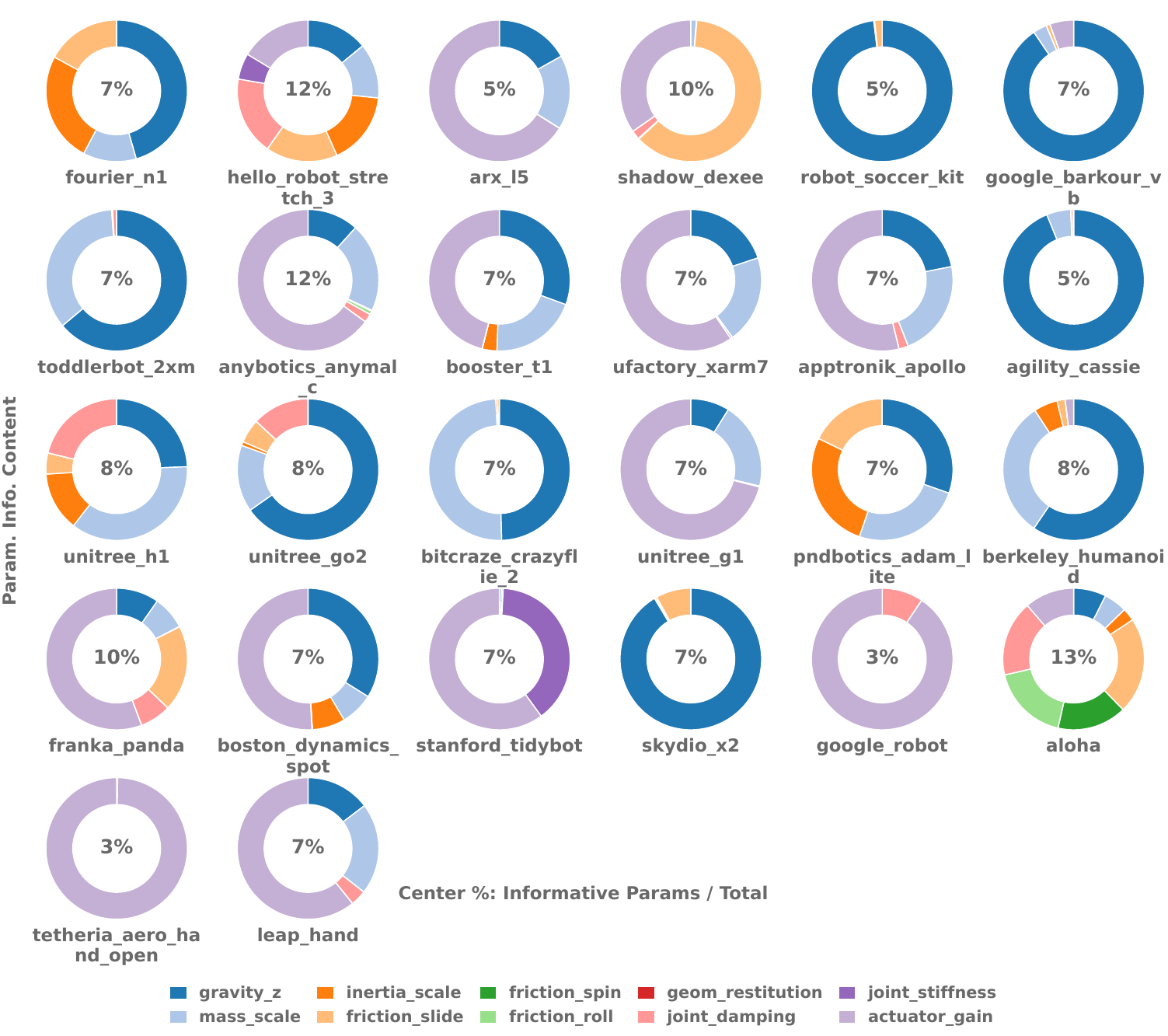}
    \vspace*{-0.2in}
    \caption{\small Fisher information matrix values for physical parameters of \num{26} representative robots. Color denotes parameter type; size indicates information content. The large variation in information distribution indicates that information can be concentrated in only a few critical physical parameters.}
    \label{fig:mujoco-fisher-info}
\end{figure}

\section{Method}
This section details our design of an exploration objective to focus on \emph{critical} parameter directions that are learnable from data.
Sect.~\ref{sec:finding-policy-critical} identifies a compact, non-redundant set of critical parameters, and
Sect.~\ref{sec:qoed} introduces Q{\footnotesize OED}, which prioritizes these parameters while suppressing the influence of the remaining ones.
Sect.~\ref{sec:qoed-mbpo} integrates Q{\footnotesize OED} into model-based policy optimization to improve policy performance.

\subsection{Selecting Critical Parameters}
\label{sec:finding-policy-critical}

To answer \textbf{Q1} presented in Sect.~\ref{subsec:challenges}, we select critical parameters in three steps: (i) estimate the current parameter values, (ii) find the well-observed subspace via eigenspace analysis of the FIM, and (iii) select a compact set of parameter coordinates within the observable subspace that are weakly correlated with each other, improving identifiability.

\textbf{Step \RNum{1} -- Parameter estimation.}
Because the FIM depends on the parameter value, we first estimate $\bm{\phi}$ from data.
Given an observed trajectory prefix $\bm{\tau}_t^{\mathrm{obs}}$ with action sequence
$\mathbf{A}_t = [\bm{a}_0, \dots, \bm{a}_{t-1}]$, we estimate $\bm{\phi}$ by matching rollouts to the observed trajectory:
\begin{equation}
\label{eqn:lse}
    \hat{\bm{\phi}}
    =
    \arg\min_{\bm{\phi}}
    \mathbb{E}_{\bm{\tau}_t \sim p(\bm{\tau}_t \vert \bm{\phi},\mathbf{A}_t)}
    \left[ \| \bm{\tau}_{t}^{\mathrm{obs}} - \bm{\tau}_t \|_2^2 \right].
\end{equation}

Here $p(\bm{\tau}_t \vert \bm{\phi},\mathbf{A}_t)$ is the trajectory distribution with fixed actions $\mathbf{A}_t$, i.e., $ p(\bm{\tau}_t \vert \bm{\phi},\mathbf{A}_t) = p(\mathbf{s}_0)\prod_{k=0}^{t-1} p(\mathbf{s}_{k+1} \vert \mathbf{s}_k, \bm{a}_k, \bm{\phi}) $. See Appendix~\ref{apdx:deri-traj-log-likelihood} for the derivation. Although differentiability would allow gradient-based optimization, we use the cross-entropy method (CEM)~\cite{rubinstein1999cross}, a derivative-free optimizer, which is effective in practice~\cite{memmel2024asid}. We maintain a Gaussian belief over parameters,
$p(\bm{\phi})=\mathcal{N}(\bm{\mu}_{\bm{\phi}},\mathbf{\Sigma}_{\bm{\phi}})$, and update uncertainty approximately as $ \mathbf{\Sigma}_{\bm{\phi}}
\leftarrow
\big(
    \bm{\mathcal{F}}_{\hat{\bm{\phi}}}
    +
    \mathbf{\Sigma}_{\bm{\phi}}^{-1}
\big)^{-1} $, analogous to updates in extended Kalman filtering~\cite{Schmidt-RSS-14}. This Gaussian approximation may overestimate uncertainty, but it empirically facilitates parameter convergence in our setting~\cite{Schmidt-RSS-14, SathyanarayanH-RSS-25}.

\textbf{Step \RNum{2} -- Identify the well-observed subspace.}
We use eigenvalues and eigenvectors of the FIM to identify the critical parts of the parameter space. Eigenvalues indicate how much information the trajectory provides along different directions in parameter space (larger is better). In high-dimensional settings, many FIM eigenvalues are often close to zero~\cite{karakida2019universal}, meaning the data carries little information about many directions (as shown in  Fig.~\ref{fig:mujoco-fisher-info}). Eigenvectors indicate which physical parameters are contributing together in a particular informative direction (in some cases, individual parameters may not be separable: different physical parameters can produce nearly indistinguishable behavior~\cite{WIELAND202160}, which prevents reliable identification of individual parameters). We start with the following assumption for FIM's eigen-decomposition.

\begin{mdframed}[hidealllines=true,backgroundcolor=MaterialBlueGray10!40,innerleftmargin=.2cm,
  innerrightmargin=.2cm]
\begin{assumption}[Regularity]
\label{assump:regularity}
Assume $p(\bm{\tau}_t \vert \bm{\phi},\bm{\pi})$ is continuously differentiable in $\bm{\phi}$ and its score has bounded norm:
\begin{equation*}
    \big\| \nabla_{\bm{\phi}} \log p(\bm{\tau}_t \vert \bm{\phi},\bm{\pi}) \big\|
    \leq \delta_{\mathrm{reg}},
    \qquad \forall \bm{\phi} \in \Phi,
\end{equation*}
for some constant $\delta_{\mathrm{reg}} \ge 0$. This holds under standard smoothness and boundedness conditions; for example, if $\bm{\phi}$ is compact and the score is uniformly bounded.
\end{assumption}
\end{mdframed}

\begin{mdframed}[hidealllines=true,backgroundcolor=MaterialBlue10!40,innerleftmargin=.2cm,
  innerrightmargin=.2cm]
\begin{lemma}
\label{lem:fim-direction}
For any $\bm{\phi} \in \Phi$, the FIM $\bm{\mathcal{F}}_{\bm{\phi}} \in\mathbb{R}^{m\times m}$ is symmetric positive semidefinite and admits an eigen-decomposition
\begin{equation*}
    \bm{\mathcal{F}}_{\bm{\phi}}
    = \mathbf{W} \mathbf{\Lambda} \mathbf{W}^{\top}, 
    \quad
    \mathbf{\Lambda} = \operatorname{diag}(\lambda_1, \dots, \lambda_m),
\end{equation*}
where $ \lambda_1 \ge \dots \ge \lambda_m \ge 0 $ and $\mathbf{W}=[\mathbf{w}_1,\dots,\mathbf{w}_m]\in \mathbb{R}^{m \times m}$ is orthonormal.
The Fisher information along eigen-direction $\mathbf{w}_i$ equals $\lambda_i$:
\begin{equation*}
    \mathbb{E}_{\bm{\tau} \sim p(\bm{\tau} \vert \bm{\phi},\bm{\pi}) }
    \Big[
        \big(
            \nabla_{\bm{\phi}} \log p(\bm{\tau} \vert \bm{\phi},\bm{\pi})^{\top} \mathbf{w}_i
        \big)^2
    \Big]
    = \lambda_i.
\end{equation*}
\end{lemma}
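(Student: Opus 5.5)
The plan is to verify the claims in order: the FIM is well defined and finite, it is symmetric positive semidefinite, the spectral theorem gives the decomposition, and the directional information is computed directly.

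\textbf{Well-definedness.} By Assumption~\ref{assump:regularity}, the score $\mathbf{g}=\nabla_{\bm{\phi}}\log p(\bm{\tau}_t\vert\bm{\phi},\bm{\pi})$ exists and satisfies $\|\mathbf{g}\|\le\delta_{\mathrm{reg}}$. Hence every entry of $\mathbf{g}\mathbf{g}^\top$ is bounded by $\delta_{\mathrm{reg}}^2$. The expectation in \eqref{eqn:fim} is therefore finite, and $\bm{\mathcal{F}}_{\bm{\phi}}\in\mathbb{R}^{m\times m}$ is a well-defined real matrix.

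\textbf{Symmetry.} Each outer product $\mathbf{g}\mathbf{g}^\top$ is symmetric. Expectation is linear and commutes with transposition, so $\bm{\mathcal{F}}_{\bm{\phi}}^\top=\bm{\mathcal{F}}_{\bm{\phi}}$.

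\textbf{Positive semidefiniteness.} Fix any $\mathbf{v}\in\mathbb{R}^m$. Moving the deterministic vector inside the expectation gives
\begin{equation*}
\mathbf{v}^\top\bm{\mathcal{F}}_{\bm{\phi}}\mathbf{v}=\mathbb{E}\big[(\mathbf{g}^\top\mathbf{v})^2\big]\ge 0 .
\end{equation*}

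\textbf{Decomposition and eigenvalue signs.} Since $\bm{\mathcal{F}}_{\bm{\phi}}$ is real symmetric, the spectral theorem gives an orthonormal eigenbasis $\mathbf{W}$ and real eigenvalues with $\bm{\mathcal{F}}_{\bm{\phi}}=\mathbf{W}\mathbf{\Lambda}\mathbf{W}^\top$. Permuting the columns of $\mathbf{W}$ orders the eigenvalues as $\lambda_1\ge\dots\ge\lambda_m$. Nonnegativity follows from semidefiniteness: take $\mathbf{v}=\mathbf{w}_i$, so that $\lambda_i=\mathbf{w}_i^\top\bm{\mathcal{F}}_{\bm{\phi}}\mathbf{w}_i\ge0$, using $\|\mathbf{w}_i\|=1$.

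\textbf{Directional information.} For the displayed identity, apply the same computation as in the semidefiniteness step with $\mathbf{v}=\mathbf{w}_i$:
\begin{equation*}
\mathbb{E}\big[(\mathbf{g}^\top\mathbf{w}_i)^2\big]=\mathbf{w}_i^\top\bm{\mathcal{F}}_{\bm{\phi}}\mathbf{w}_i=\mathbf{w}_i^\top\mathbf{W}\mathbf{\Lambda}\mathbf{W}^\top\mathbf{w}_i=\mathbf{e}_i^\top\mathbf{\Lambda}\mathbf{e}_i=\lambda_i .
\end{equation*}
The third equality holds because orthonormality gives $\mathbf{W}^\top\mathbf{w}_i=\mathbf{e}_i$.

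\textbf{Main obstacle.} There is no real difficulty; the only delicate point is the technical justification. Interchanging expectation with the quadratic form needs integrability of $\mathbf{g}\mathbf{g}^\top$, which the bounded-score assumption supplies. The score itself must exist, which continuous differentiability provides. When eigenvalues repeat, $\mathbf{W}$ is not unique. The identity nonetheless holds for any orthonormal choice, since only $\mathbf{W}^\top\mathbf{W}=\mathbf{I}$ is used.
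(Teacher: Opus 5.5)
Your proof is correct and follows essentially the same route as the paper. Both arguments use the symmetry of the outer product, positive semidefiniteness via $\mathbf{v}^\top\bm{\mathcal{F}}_{\bm{\phi}}\mathbf{v}=\mathbb{E}[(\mathbf{g}^\top\mathbf{v})^2]\ge 0$, the spectral theorem, and evaluation of the quadratic form at $\mathbf{w}_i$. Your additions make explicit points the paper leaves implicit: finiteness of the expectation from the bounded-score assumption, ordering the eigenvalues by permuting columns, and validity for any orthonormal $\mathbf{W}$ when eigenvalues repeat.
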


\begin{proof}
Please refer to Appendix~\ref{apdx:proof-fim-direction-lemma}.
\end{proof}
\end{mdframed}

Lemma~\ref{lem:fim-direction} shows that each eigenvalue $\lambda_i$ measures information along eigen-direction $\mathbf{w}_i$.
If $\lambda_i=0$, then the score has zero projection onto $\mathbf{w}_i$, and trajectories provide no local information in that direction.
The Cram\'er--Rao lower bound~\cite{Rao_1947} further implies that small $\lambda_i$ correspond to large lower bounds on estimation error. 

\begin{mdframed}[hidealllines=true,backgroundcolor=MaterialBlueGray10!40,innerleftmargin=.2cm,
  innerrightmargin=.2cm]
\begin{proposition}[Multivariate Cram\'er--Rao lower bound~\cite{Rao_1947}]
\label{thm:cramer-rao}
Let $\hat{\bm{\phi}}$ be any unbiased estimator of the true parameters $\bm{\phi}$.
Under standard regularity conditions,
\begin{equation*}
    \mathrm{Cov}(\hat{\bm{\phi}})
    \succeq
    \bm{\mathcal{F}}_{\bm{\phi}}^{-1},
\end{equation*}
where $\bm{\mathcal{F}}_{\bm{\phi}}$ is the FIM. Consequently,
\begin{equation*}
\label{eq:mse_bound}
    \mathbb{E}  [ \lVert \hat{\bm{\phi}} - \bm{\phi} \rVert^2 ]
    =
    \mathrm{tr}\big( \mathbf{\Sigma}_{\hat{\bm{\phi}}} \big)
    \;\ge\;
    \mathrm{tr}\big( \bm{\mathcal{F}}_{\bm{\phi}}^{-1} \big).
\end{equation*}
\end{proposition}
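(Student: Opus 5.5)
The plan is the classical covariance-inequality argument built on the score $\mathbf{g}(\bm{\tau},\bm{\phi})=\nabla_{\bm{\phi}}\log p(\bm{\tau}\vert\bm{\phi},\bm{\pi})$ from Definition~\ref{def:boed-fim}. We assume $\bm{\mathcal{F}}_{\bm{\phi}}$ is invertible, which is implicit in the statement, i.e.\ $\lambda_m>0$ in Lemma~\ref{lem:fim-direction}. We also assume the regularity conditions, as in Assumption~\ref{assump:regularity}, that permit differentiating under the integral sign.

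\textbf{Step 1 (score identities).} First, the score has zero mean:
\[
\mathbb{E}[\mathbf{g}]=\int \nabla_{\bm{\phi}} p(\bm{\tau}\vert\bm{\phi},\bm{\pi})\,d\bm{\tau}=\nabla_{\bm{\phi}}1=\mathbf{0}.
\]
Second, unbiasedness means $\int \hat{\bm{\phi}}(\bm{\tau})\,p(\bm{\tau}\vert\bm{\phi},\bm{\pi})\,d\bm{\tau}=\bm{\phi}$. Differentiating this identity in $\bm{\phi}$ gives
\[
\mathbb{E}\big[\hat{\bm{\phi}}\,\mathbf{g}^{\top}\big]=\mathbf{I}_m .
\]
Because $\mathbb{E}[\mathbf{g}]=\mathbf{0}$, the same relation holds for the centered cross-covariance: $\mathrm{Cov}(\hat{\bm{\phi}},\mathbf{g})=\mathbf{I}_m$. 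The only delicate point in the whole proof is the interchange of derivative and integral. We would justify it by dominated convergence, using the bounded score from Assumption~\ref{assump:regularity} together with a finite second moment of $\hat{\bm{\phi}}$.

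\textbf{Step 2 (joint covariance is PSD).} Stack $\mathbf{z}=(\hat{\bm{\phi}}-\bm{\phi},\,\mathbf{g})$. Its covariance is
\[
\begin{pmatrix}\mathbf{\Sigma}_{\hat{\bm{\phi}}} & \mathbf{I}\\ \mathbf{I} & \bm{\mathcal{F}}_{\bm{\phi}}\end{pmatrix}\succeq 0 .
\]
This matrix is positive semidefinite because it is a covariance matrix. Taking the Schur complement with respect to the invertible block $\bm{\mathcal{F}}_{\bm{\phi}}$ gives
\[
\mathbf{\Sigma}_{\hat{\bm{\phi}}}-\bm{\mathcal{F}}_{\bm{\phi}}^{-1}\succeq 0 .
\]
An equivalent route is to expand $\mathbb{E}\|\mathbf{u}^{\top}(\hat{\bm{\phi}}-\bm{\phi})-\mathbf{u}^{\top}\bm{\mathcal{F}}_{\bm{\phi}}^{-1}\mathbf{g}\|^2\ge 0$ for every $\mathbf{u}$.

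\textbf{Step 3 (trace consequence).} Unbiasedness means the mean squared error has no bias term, so
\[
\mathbb{E}\|\hat{\bm{\phi}}-\bm{\phi}\|^2=\operatorname{tr}\,\mathbf{\Sigma}_{\hat{\bm{\phi}}}.
\]
The trace is monotone under the Loewner order, since the trace of a PSD matrix is nonnegative. Therefore $\operatorname{tr}\,\mathbf{\Sigma}_{\hat{\bm{\phi}}}\ge\operatorname{tr}(\bm{\mathcal{F}}_{\bm{\phi}}^{-1})$.

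Finally, by Lemma~\ref{lem:fim-direction},
\[
\operatorname{tr}(\bm{\mathcal{F}}_{\bm{\phi}}^{-1})=\sum_i \lambda_i^{-1}.
\]
This makes explicit the earlier remark that small eigenvalues force a large lower bound on the estimation error.
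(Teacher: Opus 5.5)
Your proof is correct and is the standard covariance-inequality argument (zero-mean score, $\mathrm{Cov}(\hat{\bm{\phi}},\mathbf{g})=\mathbf{I}$, Schur complement of the joint covariance, then trace monotonicity); the paper gives no proof of its own and simply invokes Rao's classical result, whose proof is essentially this one. One detail in Step~1 is worth writing out: dominated convergence needs a bound that is uniform over a neighborhood of $\bm{\phi}$, and the bounded score supplies it via $p(\bm{\tau}\vert\bm{\phi}')\le e^{\delta_{\mathrm{reg}}\|\bm{\phi}'-\bm{\phi}\|}\,p(\bm{\tau}\vert\bm{\phi})$ for $\bm{\phi}$ interior to $\Phi$.
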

\end{mdframed}

Let $\delta_{\mathrm{eig}} > 0$ be an eigenvalue threshold and define the index set of well-observed directions
$\mathbf{o} = \{ i \mid \lambda_i \ge \delta_{\mathrm{eig}} \}$, with $ |\mathbf{o}| = n$. We partition the eigen-decomposition as
\begin{equation}
\label{eq:fim-split}
\begin{aligned}
    \mathbf{\Lambda} &=
    \begin{bmatrix}
        \mathbf{\Lambda}_{\mathbf{o}} & \mathbf{0} \\
        \mathbf{0} & \mathbf{\Lambda}_{\overline{\mathbf{o}}}
    \end{bmatrix},
    \
    \mathbf{W} =
    \begin{bmatrix}
        \mathbf{W}_{\mathbf{o}} & \mathbf{W}_{\overline{\mathbf{o}}}
    \end{bmatrix}, \ 
    \mathbf{o} = \{ i \vert \lambda_i \ge \delta_{\mathrm{eig}} \},
\end{aligned}
\end{equation}
where $\mathbf{W}_{\mathbf{o}}=\mathbf{W}[:,\mathbf{o}]\in\mathbb{R}^{m\times n}$ spans the well-observed subspace and
$\mathbf{W}_{\overline{\mathbf{o}}}\in\mathbb{R}^{m\times(m-n)}$ spans the weakly observed (or unobserved) subspace.
Importantly, individual physical parameters may not be observable by themselves; the observable direction is defined as $\mathbf{W}_{\mathbf{o}}^{\top} \bm{\phi}$ in the literature~\cite{ActiveSubspaces}, which represents linear combinations of physical parameters.

\textbf{Step \RNum{3} -- Select identifiable parameter coordinates.}
Even if a direction is well observed, it can correspond to a linear combination of multiple physical parameters. To obtain a compact and interpretable set of coordinates, we select a subset of parameter indices $\mathbf{k}\subseteq \{1,\dots,m\}$ that (i) have strong components in the well-observed subspace and (ii) are not redundant with each other. Let $\mathbf{r}_j^{\top}$ be the $j$-th row of $\mathbf{W}_{\mathbf{o}}$. Intuitively, $\mathbf{r}_j$ describes how parameter $j$ loads onto the well-observed directions. Selecting indices $\mathbf{k}$ gives the row-submatrix $\mathbf{W}_{\mathbf{k}\mathbf{o}} = \mathbf{W}[\mathbf{k},\mathbf{o}]\in\mathbb{R}^{|\mathbf{k}|\times n}$. With a budget $|\mathbf{k}| \le n$, we choose
\begin{equation}
\label{eq:submodular-obj}
\begin{aligned}
    \mathbf{k}
    &=
    \argmax_{\mathbf{k}\subseteq\{1,\dots,m\}:\; |\mathbf{k}| \le n}
       \; \log\det\!\Big(
            \mathbf{W}_{\mathbf{k}\mathbf{o}}\;
            \mathbf{W}_{\mathbf{k}\mathbf{o}}^{\top}
       \Big)
    \\
    &\text{s.t.}\quad
       \big| \cos ( \mathbf{r}_i, \mathbf{r}_j ) \big| \le \delta_{\mathrm{cos}},
       \quad \forall\, i\neq j,\; i,j\in\mathbf{k},
\end{aligned}
\end{equation}
where $\cos(\mathbf{r}_i, \mathbf{r}_j) = \nicefrac{ \mathbf{r}_i^{\top} \mathbf{r}_j }{ \|\mathbf{r}_i\|\|\mathbf{r}_j\| }$.
The log-determinant term favors a diverse set of rows that spans the well-observed subspace, and the cosine constraint prevents selecting highly correlated parameters.
We approximately solve Eq.~\eqref{eq:submodular-obj} with a lazy-greedy procedure~\cite{10.1007/BFb0006528} that adds one index at a time and rejects candidates that violate the cosine constraint.
In the rare case where two parameters induce the same observable direction, this procedure keeps only one representative.

\textbf{Solution to Q1.} Steps \RNum{1}--\RNum{3} output the index set $\mathbf{k}$. We refer to the selected coordinates $\bm{\phi}_{\mathbf{k}} = \bm{\phi}[\mathbf{k}]$ as the \emph{critical (identifiable) parameters}. A straightforward exploration objective is the FIM restricted to these coordinates. Let $\bm{\mathcal{F}}_{\mathbf{k}\mathbf{k}} = \bm{\mathcal{F}}_{\bm{\phi}}[\mathbf{k},\mathbf{k}]$ be the principal submatrix of the FIM. We define
\begin{equation}
\label{eqn:naive-bonus}
    \mathrm{Agnostic\,\,QOED\,\,Objective:} \quad \mathcal{B}_{\text{Agnostic}}(\bm{\phi} \vert \bm{\tau}_t)
    =
    \operatorname{tr}\!\left(\bm{\mathcal{F}}_{\mathbf{k}\mathbf{k}}\right),
\end{equation}
where the term ``Agnostic'' indicates that this objective ignores discarded (i.e., non-critical) parameters.

% \begin{figure}[t!]
%     \centering
%     \includegraphics[width=1\linewidth]{media/demo.pdf}
%     \vspace*{-0.2in}
%     \caption{\small \textbf{Information gain trajectories.} \emph{Top:} Fisher information landscape over mass $\theta_1$ and friction $\theta_2$ with induced trajectories for the box pushing task.
%     \emph{Bottom:} local geometry of the dynamics. Optimizing information in both parameters can drift off the identifiable ridge.
%     Restricting the objective to selected coordinates can stay closer to the ridge but may still excite non-identifiable directions.
%     Our Q{\footnotesize OED} method (Sect.~\ref{sec:qoed}) adds constraints/regularization to suppress these directions and keep trajectories near the ridge.}
%     \label{fig:demo-fisher}
% \end{figure}

\begin{figure}[t!]
    \centering
    \vspace*{0.1in}
    
    {\footnotesize
    \noindent
    \makebox[0.333\linewidth][c]{Vanilla BOED Eq.~\eqref{eq:trace-bonus}}%
    \makebox[0.333\linewidth][c]{Agnostic Q{\footnotesize OED} Eq.~\eqref{eqn:naive-bonus}}%
    \makebox[0.333\linewidth][c]{Q{\footnotesize OED} (ours) Eq.~\eqref{eq:t-optimal-k}}%
    }

    % \vspace{-0.08in}

    \includegraphics[
        width=\linewidth,
        trim=0 0 0 55,
        clip
    ]{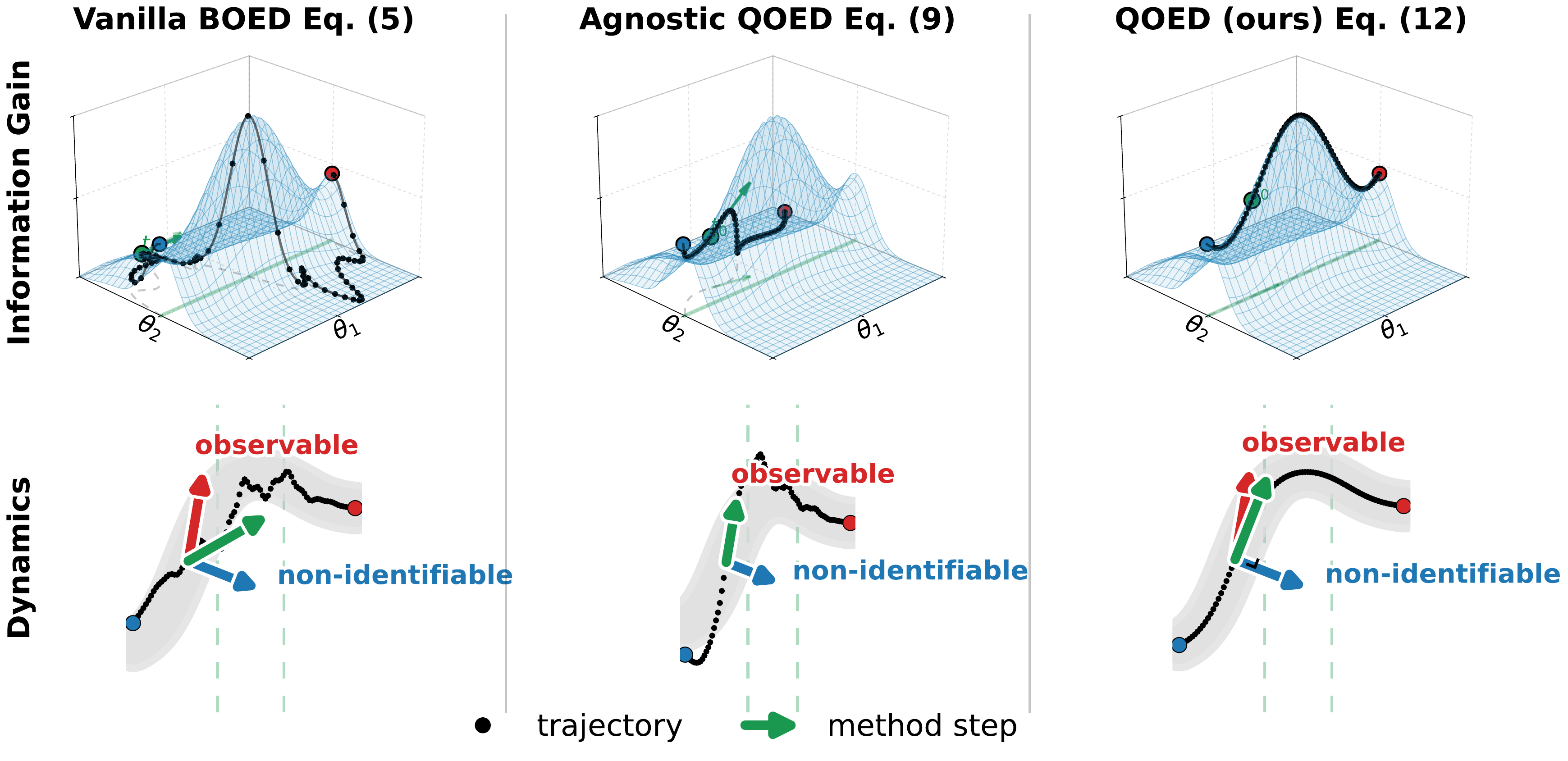}

    % \vspace*{-0.1in}

    \caption{\small
    \textbf{Information gain trajectories.}
    \emph{Top:} Fisher information landscape over mass $\theta_1$ and friction $\theta_2$ with induced trajectories for the box pushing task.
    \emph{Bottom:} local geometry of the dynamics.
    Optimizing information in both parameters can drift off the identifiable ridge.
    Restricting the objective to selected coordinates can stay closer to the ridge but may still excite non-identifiable directions.
    Our Q{\footnotesize OED} method (Sect.~\ref{sec:qoed}) adds constraints/regularization to suppress these directions and keep trajectories near the ridge.
    }

    \label{fig:demo-fisher}
\end{figure}

\subsection{Adaptive Information-Theoretic Objective}
\label{sec:qoed}

We now present our \emph{Quasi-Optimal Experimental Design} (Q{\footnotesize OED}). To answer \textbf{Q2} presented in Sect.~\ref{subsec:challenges}, Q{\footnotesize OED} prioritizes information gain in \emph{critical} parameter coordinates $\bm{\phi}_{\mathbf{k}}$ while \emph{suppressing} the influence of the remaining ones.
This operation is important because {\em non-critical or weakly identifiable parameters can still influence the dynamics and induce non-negligible Fisher information}. In such settings, simply dropping parameters as in Eq.~\eqref{eqn:naive-bonus} can distort the information objective~\cite{ActiveSubspaces}. Formally, denote the score as $\mathbf{g}=[\mathbf{g}_{\mathbf{k}};\mathbf{g}_{\overline{\mathbf{k}}}]$ so that $\bm{\mathcal{F}}_{\bm{\phi}}=\mathbb{E}[\mathbf{g}\mathbf{g}^\top]$ implies the block form in Eq.~\eqref{eq:fim-block-qoed}.
The agnostic objective $\mathcal{B}_{\text{Agnostic}}=\operatorname{tr}(\bm{\mathcal{F}}_{\mathbf{k}\mathbf{k}})$ maximizes the retained
score energy $\mathbb{E}\|\mathbf{g}_{\mathbf{k}}\|_2^2$ while treating $\mathbf{g}_{\overline{\mathbf{k}}}$ as irrelevant.
However, the discarded energy
$\mathbb{E}\|\mathbf{g}_{\overline{\mathbf{k}}}\|_2^2=\operatorname{tr}(\bm{\mathcal{F}}_{\overline{\mathbf{k}}\overline{\mathbf{k}}})$
can be non-negligible, and $\mathbf{g}_{\mathbf{k}}$ can be correlated with $\mathbf{g}_{\overline{\mathbf{k}}}$
(via $\bm{\mathcal{F}}_{\mathbf{k}\overline{\mathbf{k}}}$). As a result, maximizing $\operatorname{tr}(\bm{\mathcal{F}}_{\mathbf{k}\mathbf{k}})$ can overestimate how much the data truly informs
$\bm{\phi}_{\mathbf{k}}$: the apparent information may arise mainly from correlations with nuisance directions
(via $\bm{\mathcal{F}}_{\mathbf{k}\overline{\mathbf{k}}}$).

\textbf{Solution to Q2.}
Let $\mathbf{k}$ denote the \emph{critical parameter indices} returned by Sect.~\ref{sec:finding-policy-critical}, and let $\overline{\mathbf{k}}=\{1,2,\dots,m\}\setminus \mathbf{k}$ be the complementary indices. We block-partition the FIM as
\begin{equation}
\label{eq:fim-block-qoed}
\bm{\mathcal{F}}_{\bm{\phi}}
=
\begin{bmatrix}
\bm{\mathcal{F}}_{\mathbf{k}\mathbf{k}} & \bm{\mathcal{F}}_{\mathbf{k}\overline{\mathbf{k}}} \\
\bm{\mathcal{F}}_{\overline{\mathbf{k}}\mathbf{k}} & \bm{\mathcal{F}}_{\overline{\mathbf{k}}\overline{\mathbf{k}}}
\end{bmatrix}.
\end{equation}

Q{\footnotesize OED} uses the \emph{nuisance-adjusted} Fisher information
for $\bm{\phi}_{\mathbf{k}}$, given by the Schur complement\footnote{We use the standard stabilization
$\bm{\mathcal{F}}_{\overline{\mathbf{k}}\overline{\mathbf{k}}}\leftarrow
\bm{\mathcal{F}}_{\overline{\mathbf{k}}\overline{\mathbf{k}}}+\epsilon \mathbf{I}$ before inversion.}
\begin{equation}
\label{eq:schur-qoed}
\bm{\mathcal{I}}_{\mathbf{k}\vert\overline{\mathbf{k}}}
=
\bm{\mathcal{F}}_{\mathbf{k}\mathbf{k}}
-
\bm{\mathcal{F}}_{\mathbf{k}\overline{\mathbf{k}}}\,
\bm{\mathcal{F}}_{\overline{\mathbf{k}}\overline{\mathbf{k}}}^{-1}\,
\bm{\mathcal{F}}_{\overline{\mathbf{k}}\mathbf{k}}.
\end{equation}

We define the Q{\footnotesize OED} information objective using the trace:
\begin{equation}
\label{eq:t-optimal-k}
\mathrm{QOED\,\,Objective:}\qquad\mathcal{B}_{\text{QOED}}(\bm{\phi}\vert \bm{\tau}_t)
=
\operatorname{tr}\left(
\bm{\mathcal{I}}_{\mathbf{k}\vert\overline{\mathbf{k}}}
\right).
\end{equation}

\begin{algorithm}[h!]
\caption{Q{\footnotesize OED} Information Objective}
\label{alg:explr-bonus}
\SetKwInOut{KwIn}{Input}
\SetKwInOut{KwDefault}{Default}
\SetKwInOut{KwReturn}{Return}

\KwIn{Trajectories $\{ \bm{\tau}_{t,n} \}_{n=1}^{N} $, parameter $ \bm{\phi} $, policy $\bm{\pi}$}
\KwDefault{Thresholds of eigenvalue $\delta_{\mathrm{eig}}=0.1$, $\alpha_{\mathrm{eig}}=0.01$, and similarity $\delta_{\mathrm{cos}}=0.95$}

\tcc{\small Fisher Information Matrix Eq.~\eqref{eqn:fim}}
$\bm{\mathcal{F}}_{\bm{\phi}}(\bm{\pi}) \approx \frac{1}{N}\sum_{n=1}^{N} \mathbf{g}_n \mathbf{g}_n^\top,\quad
\mathbf{g}_n = \nabla_{\bm{\phi}} \log p(\bm{\tau}_{t,n} \vert \bm{\phi},\bm{\pi})$

\tcc{\small eigen-decomposition}
$\mathbf{W}\mathbf{\Lambda}\mathbf{W}^{\top} \gets \bm{\mathcal{F}}_{\bm{\phi}}(\bm{\pi})$

\tcc{\small Observable Subspace Eq.~\eqref{eq:fim-split}}
$\delta_{\mathrm{eig}} \gets \max(\delta_{\mathrm{eig}},\; \alpha_{\mathrm{eig}}\cdot \max \mathrm{diag}(\mathbf{\Lambda}))$

Split $(\mathbf{\Lambda},\mathbf{W})$ into $(\mathbf{\Lambda}_{\mathbf{o}},\mathbf{W}_{\mathbf{o}})$, $(\mathbf{\Lambda}_{\bar{\mathbf{o}}},\mathbf{W}_{\bar{\mathbf{o}}})$ with $\delta_{\mathrm{eig}}$

\tcc{\small Identifiable Parameters Eq.~\eqref{eq:submodular-obj}}
Greedy select $\mathbf{k}$ from $\mathbf{W}_{\mathbf{o}}$ with $ \delta_{\mathrm{cos}} $

$\overline{\mathbf{k}} \gets \{1, 2, \dots, |\bm{\phi}| \} \setminus \mathbf{k}$

\KwReturn{$
\operatorname{tr}\big( \bm{\mathcal{F}}_{\mathbf{k}\mathbf{k}} \big)
- \operatorname{tr} \!\Big(
\bm{\mathcal{F}}_{\mathbf{k}\overline{\mathbf{k}}}
\bm{\mathcal{F}}_{\overline{\mathbf{k}}\overline{\mathbf{k}}}^{-1}
\bm{\mathcal{F}}_{\overline{\mathbf{k}}\mathbf{k}}
\Big)
$}
\end{algorithm}

Alg.~\ref{alg:explr-bonus} summarizes the computation of the Q{\footnotesize OED} information objective. We use a relative threshold $\alpha_{\mathrm{eig}}\cdot \max\mathrm{diag}(\mathbf{\Lambda})$ to make the eigenvalue split robust to global scalings. Here, we also give an interpretation of the Schur complement. Partition the score as $\mathbf{g}=[\mathbf{g}_{\mathbf{k}};\mathbf{g}_{\overline{\mathbf{k}}}]$.
Then the Schur complement $\bm{\mathcal{I}}_{\mathbf{k}\mid\overline{\mathbf{k}}}$ equals the covariance of the residual score after removing
the best linear prediction from $\mathbf{g}_{\overline{\mathbf{k}}}$:
$
\operatorname{tr}(\bm{\mathcal{I}}_{\mathbf{k}\mid\overline{\mathbf{k}}})
=
\min_{\mathbf{A}}\ \mathbb{E}\!\left[\|\mathbf{g}_{\mathbf{k}}-\mathbf{A}\mathbf{g}_{\overline{\mathbf{k}}}\|_2^2\right]
$. Thus Q{\footnotesize OED} rewards information about the critical parameters that cannot be \emph{linearly predicted} from the non-critical ones. As shown in Fig.~\ref{fig:demo-fisher}, Q{\footnotesize OED} stays near the identifiable ridge while optimizing the full trace drifts off the ridge. On the other hand, from an eigen-subspace viewpoint~\cite{ActiveSubspaces}, projecting the score onto a selected FIM eigenspace discards an expected squared score magnitude equal to the sum of the omitted eigenvalues (Appendix~\ref{apdx:proof-score-subspace}). The following theorem shows that optimizing Q{\footnotesize OED} is quasi-optimal, in the sense that it achieves a constant-factor approximation to the full BOED  objective.

\begin{mdframed}[hidealllines=true,backgroundcolor=MaterialBlue10!40,innerleftmargin=.2cm,
  innerrightmargin=.2cm]
\begin{theorem}[Quasi-optimality w.r.t. full BOED]
\label{thm:qoed-quasiopt-trace}
Make the policy dependence explicit by writing
$\bm{\mathcal{F}}^{\bm{\pi}} := \bm{\mathcal{F}}_{\bm{\phi}}$ in Eq.~\eqref{eqn:fim}.
Fix a set of critical indices $\mathbf{k}$ (with complement $\overline{\mathbf{k}}$) and let
$ \bm{\mathcal{I}}^{\bm{\pi}}_{\mathbf{k}\mid\overline{\mathbf{k}}}
:=
\bm{\mathcal{F}}^{\bm{\pi}}_{\mathbf{k}\mathbf{k}}
-
\bm{\mathcal{F}}^{\bm{\pi}}_{\mathbf{k}\overline{\mathbf{k}}}\,
 \left( \bm{\mathcal{F}}^{\bm{\pi}}_{\overline{\mathbf{k}}\overline{\mathbf{k}}} \right)^{-1}\,
\bm{\mathcal{F}}^{\bm{\pi}}_{\overline{\mathbf{k}}\mathbf{k}} $. Define the full BOED and Q{\footnotesize OED} objectives
\begin{equation}
\label{eq:objectives-trace-main}
\mathcal{B}_{\text{BOED}}(\bm{\pi}) := \tr\!\big(\bm{\mathcal{F}}^{\bm{\pi}}\big),
\qquad
\mathcal{B}_{\text{QOED}}(\bm{\pi}) := \tr\!\big(\bm{\mathcal{I}}^{\bm{\pi}}_{\mathbf{k}\mid\overline{\mathbf{k}}}\big).
\end{equation}

Let $\bm{\pi}^\star \in \argmax_{\bm{\pi}\in\Pi}\mathcal{B}_{\text{BOED}}(\bm{\pi})$
and $\widehat{\bm{\pi}} \in \argmax_{\bm{\pi}\in\Pi}\mathcal{B}_{\text{QOED}}(\bm{\pi})$.
Assume $\bm{\mathcal{F}}^{\bm{\pi}}_{\mathbf{k}\mathbf{k}}\succ \mathbf{0}$ and
$\bm{\mathcal{F}}^{\bm{\pi}}_{\overline{\mathbf{k}}\overline{\mathbf{k}}}\succ \mathbf{0}$ for all $\bm{\pi}\in\Pi$ and define
\begin{equation}
\label{eq:eta-beta-main}
\begin{aligned}
\eta
&=
\sup_{\bm{\pi}\in\Pi}
\frac{\tr(\bm{\mathcal{F}}^{\bm{\pi}}_{\overline{\mathbf{k}}\overline{\mathbf{k}}})}{\tr(\bm{\mathcal{F}}^{\bm{\pi}}_{\mathbf{k}\mathbf{k}})},
\\
\beta
&=
\sup_{\bm{\pi}\in\Pi}
\left\|
\big(\bm{\mathcal{F}}^{\bm{\pi}}_{\mathbf{k}\mathbf{k}}\big)^{-1/2}
\bm{\mathcal{F}}^{\bm{\pi}}_{\mathbf{k}\overline{\mathbf{k}}}
\big(\bm{\mathcal{F}}^{\bm{\pi}}_{\overline{\mathbf{k}}\overline{\mathbf{k}}}\big)^{-1/2}
\right\|_2^2.
\end{aligned}
\end{equation}

If $\eta<\infty$ and $\beta<1$, then the Q{\footnotesize OED}-optimal policy is a constant-factor approximation to the full BOED-optimal policy:
\begin{equation}
\label{eq:quasiopt-main}
\tr\!\big(\bm{\mathcal{F}}^{\widehat{\bm{\pi}}}\big)
\;\ge\;
\frac{1-\beta}{1+\eta}\;
\tr\!\big(\bm{\mathcal{F}}^{\bm{\pi}^\star}\big).
\end{equation}
\end{theorem}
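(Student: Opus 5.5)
Write $\mathbf{C}^{\bm{\pi}} := (\bm{\mathcal{F}}^{\bm{\pi}}_{\mathbf{k}\mathbf{k}})^{-1/2}\bm{\mathcal{F}}^{\bm{\pi}}_{\mathbf{k}\overline{\mathbf{k}}}(\bm{\mathcal{F}}^{\bm{\pi}}_{\overline{\mathbf{k}}\overline{\mathbf{k}}})^{-1/2}$. The proof sandwiches the QOED objective between multiples of the full trace, uniformly over $\bm{\pi}\in\Pi$, and then chains these bounds through the optimality of $\widehat{\bm{\pi}}$. We establish two per-policy inequalities:
\begin{equation*}
(1-\beta)\,\tr(\bm{\mathcal{F}}^{\bm{\pi}}_{\mathbf{k}\mathbf{k}}) \le \tr(\bm{\mathcal{I}}^{\bm{\pi}}_{\mathbf{k}\mid\overline{\mathbf{k}}}) \le \tr(\bm{\mathcal{F}}^{\bm{\pi}}_{\mathbf{k}\mathbf{k}}),
\qquad
\tr(\bm{\mathcal{F}}^{\bm{\pi}}_{\mathbf{k}\mathbf{k}}) \le \tr(\bm{\mathcal{F}}^{\bm{\pi}}) \le (1+\eta)\,\tr(\bm{\mathcal{F}}^{\bm{\pi}}_{\mathbf{k}\mathbf{k}}).
\end{equation*}
The second pair is immediate. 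The trace splits as $\tr(\bm{\mathcal{F}}^{\bm{\pi}}) = \tr(\bm{\mathcal{F}}^{\bm{\pi}}_{\mathbf{k}\mathbf{k}}) + \tr(\bm{\mathcal{F}}^{\bm{\pi}}_{\overline{\mathbf{k}}\overline{\mathbf{k}}})$. Both blocks are PSD, and by definition of $\eta$ the second block's trace is at most $\eta\,\tr(\bm{\mathcal{F}}^{\bm{\pi}}_{\mathbf{k}\mathbf{k}})$.

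For the Schur complement, we factor
\[
\bm{\mathcal{I}}^{\bm{\pi}}_{\mathbf{k}\mid\overline{\mathbf{k}}} = (\bm{\mathcal{F}}^{\bm{\pi}}_{\mathbf{k}\mathbf{k}})^{1/2}\big(\mathbf{I}-\mathbf{C}^{\bm{\pi}}(\mathbf{C}^{\bm{\pi}})^{\top}\big)(\bm{\mathcal{F}}^{\bm{\pi}}_{\mathbf{k}\mathbf{k}})^{1/2}.
\]
Since $\|\mathbf{C}^{\bm{\pi}}\|_2^2\le\beta$, we have $\mathbf{0}\preceq\mathbf{C}^{\bm{\pi}}(\mathbf{C}^{\bm{\pi}})^\top\preceq\beta\mathbf{I}$. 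Hence $(1-\beta)\mathbf{I}\preceq \mathbf{I}-\mathbf{C}^{\bm{\pi}}(\mathbf{C}^{\bm{\pi}})^\top\preceq\mathbf{I}$. Congruence by $(\bm{\mathcal{F}}^{\bm{\pi}}_{\mathbf{k}\mathbf{k}})^{1/2}$ preserves the Loewner order, giving $(1-\beta)\bm{\mathcal{F}}^{\bm{\pi}}_{\mathbf{k}\mathbf{k}}\preceq\bm{\mathcal{I}}^{\bm{\pi}}_{\mathbf{k}\mid\overline{\mathbf{k}}}\preceq\bm{\mathcal{F}}^{\bm{\pi}}_{\mathbf{k}\mathbf{k}}$, and taking traces gives the first pair. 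The hypotheses $\bm{\mathcal{F}}^{\bm{\pi}}_{\mathbf{k}\mathbf{k}}\succ\mathbf{0}$ and $\bm{\mathcal{F}}^{\bm{\pi}}_{\overline{\mathbf{k}}\overline{\mathbf{k}}}\succ\mathbf{0}$ make the square roots and inverses well defined.

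Chaining then gives
\begin{align*}
\tr(\bm{\mathcal{F}}^{\widehat{\bm{\pi}}})
&\ge \tr(\bm{\mathcal{F}}^{\widehat{\bm{\pi}}}_{\mathbf{k}\mathbf{k}})
 \ge \tr(\bm{\mathcal{I}}^{\widehat{\bm{\pi}}}_{\mathbf{k}\mid\overline{\mathbf{k}}})
 \ge \tr(\bm{\mathcal{I}}^{\bm{\pi}^\star}_{\mathbf{k}\mid\overline{\mathbf{k}}}) \\
&\ge (1-\beta)\,\tr(\bm{\mathcal{F}}^{\bm{\pi}^\star}_{\mathbf{k}\mathbf{k}})
 \ge \frac{1-\beta}{1+\eta}\,\tr(\bm{\mathcal{F}}^{\bm{\pi}^\star}).
\end{align*}
The third inequality uses that $\widehat{\bm{\pi}}$ maximizes $\mathcal{B}_{\text{QOED}}$ over $\Pi$, and the last uses $\tr(\bm{\mathcal{F}}^{\bm{\pi}^\star})\le(1+\eta)\tr(\bm{\mathcal{F}}^{\bm{\pi}^\star}_{\mathbf{k}\mathbf{k}})$.

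The only step needing care is the Loewner sandwich for the Schur complement. We must verify the factorization identity and check that $\mathbf{C}\mathbf{C}^\top\preceq\|\mathbf{C}\|_2^2\mathbf{I}$ holds with the paper's convention that $\beta$ is the squared spectral norm. Everything else is trace additivity and optimality of the two policies. Note that $\beta<1$ is needed only to make the constant positive, and $\eta<\infty$ only to make it nonzero.
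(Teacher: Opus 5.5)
Your proof is correct and follows essentially the same route as the paper: the same factorization of the confounding term through $\mathbf{C}^{\bm{\pi}}$ (the paper's $\mathbf{M}$), the same bound $\tr(\bm{\mathcal{F}}^{\bm{\pi}})\le(1+\eta)\tr(\bm{\mathcal{F}}^{\bm{\pi}}_{\mathbf{k}\mathbf{k}})$, and the identical optimality chain through $\widehat{\bm{\pi}}$ and $\bm{\pi}^\star$. The only cosmetic difference is that you first derive the matrix sandwich $(1-\beta)\bm{\mathcal{F}}^{\bm{\pi}}_{\mathbf{k}\mathbf{k}}\preceq\bm{\mathcal{I}}^{\bm{\pi}}_{\mathbf{k}\mid\overline{\mathbf{k}}}\preceq\bm{\mathcal{F}}^{\bm{\pi}}_{\mathbf{k}\mathbf{k}}$ by congruence and then take traces, whereas the paper bounds the trace of the confounding penalty directly via $\tr(\mathbf{A}\mathbf{B})\le\|\mathbf{B}\|_2\tr(\mathbf{A})$ for PSD $\mathbf{A},\mathbf{B}$.
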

\begin{proof}
Please refer to Appendix~\ref{apdx:proof-quasi-optimal} for a justification of $\eta$ and $\beta$, an extension to our time-varying $\mathbf{k}$, and a proof that the agnostic objective is not quasi-optimal in general.
\end{proof}
\end{mdframed}

\subsection{Policy Learning with Q{\footnotesize OED}}
\label{sec:qoed-mbpo}
We consider Problem~\ref{problem:mpc} in a model-based policy optimization (MBPO) setting~\cite{janner2019mbpo}.
Because analytical physics models can be mismatched to real dynamics, we learn a differentiable dynamics model
\begin{equation}
\label{eq:demo-transition}
q_{\bm{\theta}}\!\left(\mathbf{s}_{t+1}\vert \mathbf{s}_t,\bm{a}_t,\bm{\phi}\right),
\end{equation}
parameterized by $\bm{\theta}$. Together with a policy $\bm{\pi}(\bm{a}\vert \mathbf{s})$, this induces a trajectory likelihood $q_{\bm{\theta}}(\bm{\tau}_t\vert \bm{\phi},\bm{\pi})$ that serves as a surrogate for the real likelihood $p(\bm{\tau}_t\vert \bm{\phi},\bm{\pi})$.
We instantiate $q_{\bm{\theta}}$ with a shortcut model~\cite{frans2025one}: we sample latent noise $\bm{\delta}\sim\mathcal{N}(\bm{0},\bm{I})$ and generate a one-step state increment via a transport map $T_{\bm{\theta}}$,
\begin{equation}
\label{eq:demo-sm-gen}
\bm{\delta}\sim\mathcal{N}(\bm{0},\bm{I}),
\qquad
\mathbf{s}_{t+1}
=
\mathbf{s}_t
+
T_{\bm{\theta}}\!\left(\bm{\delta},\mathbf{s}_t,\bm{a}_t,\bm{\phi}\right).
\end{equation}

We train $T_{\bm{\theta}}$ using the shortcut-model objective~\cite{frans2025one} (Appendix~\ref{apdx:fmpe}). If $T_{\bm{\theta}}$ is invertible in
$\bm{\delta}$, then the conditional log-density follows by change of variables~\cite{chen2018neural}:
\begin{equation}
\label{eq:demo-sm-loglik}
\log q_{\bm{\theta}}(\mathbf{s}_{t+1}\vert \mathbf{s}_t,\bm{a}_t,\bm{\phi})
=
\log p_0(\bm{\delta})
-
\log\left|
\det \nabla_{\bm{\delta}}
T_{\bm{\theta}}(\bm{\delta},\mathbf{s}_t,\bm{a}_t,\bm{\phi})
\right|,
\end{equation}
where $p_0$ is the standard normal density and $\bm{\delta}$ is the latent satisfying
$\mathbf{s}_{t+1}-\mathbf{s}_t=T_{\bm{\theta}}(\bm{\delta},\mathbf{s}_t,\bm{a}_t,\bm{\phi})$.
This yields a tractable estimate of the FIM; the trajectory-level derivation is in Appendix~\ref{apdx:deri-traj-log-likelihood}, following Eq.~\eqref{eq:qom-fim}. We parameterize $T_{\bm{\theta}}$ with a Transformer~\cite{NIPS2017_3f5ee243}: state, action, and parameter inputs are embedded by separate MLPs, processed by a six-block Transformer with Adaptive Layer Normalization (AdaLN), and mapped to the output by a final AdaLN modulation layer. Architecture details are in Appendix~\ref{apdx:model-struc}.

\begin{table*}[b!]
\centering
\footnotesize
\setlength{\tabcolsep}{9pt}
\renewcommand{\arraystretch}{1.0}

\begin{tabular}{@{}llcccccc@{}}
\toprule
\multirow{2}{*}{Env.} & \multirow{2}{*}{Noise} &
\multicolumn{2}{c}{\textbf{Q{\footnotesize OED} (Ours)}} &
\multicolumn{2}{c}{\textbf{Q{\footnotesize OED}-Agnostic}} &
\multicolumn{2}{c}{\textbf{BOED}} \\
\cmidrule(lr){3-4}\cmidrule(lr){5-6}\cmidrule(lr){7-8}
& &
\textbf{Param.\ Est.\,$\downarrow$} & \textbf{Dyn.\ Pred.\,$\downarrow$} &
\textbf{Param.\ Est.\,$\downarrow$} & \textbf{Dyn.\ Pred.\,$\downarrow$} &
\textbf{Param.\ Est.\,$\downarrow$} & \textbf{Dyn.\ Pred.\,$\downarrow$} \\
\midrule

\multirow{3}{*}{\texttt{Go1}}
& $1\sigma$ & $\textcolor{gray}{848.29\pm26.95}$ & $\bm{28.57\pm4.56}$
& $\bm{847.95\pm27.0}$ & $59.64\pm18.08$
& {$848.15\pm26.92$} & \textcolor{gray}{$119.04\pm94.74$} \\
& $2\sigma$ & $\bm{847.89\pm27.0}$ & $\bm{32.22\pm4.97}$
& $\textcolor{gray}{848.08\pm26.99}$ & \textcolor{gray}{$73.16\pm35.84$}
& {$847.91\pm27.0$} & {$59.36\pm22.39$} \\
& $3\sigma$ & \textcolor{gray}{$847.98\pm26.99$} & $\bm{37.91\pm5.17}$
& ${833.94\pm25.42}$ & $47.19\pm10.54$
& \bm{$825.69\pm27.69$} & \textcolor{gray}{$101.88\pm9.51$} \\
\addlinespace[1ex]

\multirow{3}{*}{\texttt{G1}}
& $1\sigma$ & $\bm{963.02\pm33.73}$ & $\bm{37.65\pm1.24}$
& \textcolor{gray}{$1059.73\pm30.58$} & $39.52\pm1.75$
& $1014.05\pm56.02$ & \textcolor{gray}{$44.93\pm2.43$} \\
& $2\sigma$ & $\bm{1051.11\pm35.28}$ & $\bm{35.61\pm0.88}$
& \textcolor{gray}{$1063.02\pm33.95$} & $37.18\pm1.02$
& $1062.99\pm33.74$ & \textcolor{gray}{$40.73\pm1.23$} \\
& $3\sigma$ & $\textcolor{gray}{1063.17\pm33.72}$ & $\bm{35.49\pm0.70}$
& {$1063.11\pm33.75$} & $43.46\pm1.27$
& \bm{$1063.05\pm33.73$} & \textcolor{gray}{$44.52\pm1.45$} \\
\addlinespace[1ex]

\multirow{3}{*}{\texttt{Jackal}}
& $1\sigma$ & $\bm{291.81\pm75.29}$ & $\bm{1.00\pm0.13}$
& $369.74\pm102.23$ & \textcolor{gray}{$1.02\pm0.11$}
& \textcolor{gray}{$370.74\pm101.87$} & $1.01\pm0.08$ \\
& $2\sigma$ & $\bm{296.90\pm66.54}$ & $\bm{1.64\pm0.11}$
& $348.36\pm91.93$ & $1.73\pm0.16$
& \textcolor{gray}{$363.12\pm99.15$} & \textcolor{gray}{$6.57\pm3.34$} \\
& $3\sigma$ & $\bm{293.43\pm63.38}$ & $\bm{2.64\pm0.25}$
& $360.77\pm87.60$ & $4.92\pm2.40$
& \textcolor{gray}{$375.36\pm95.16$} & \textcolor{gray}{$5.29\pm2.38$} \\
\addlinespace[1ex]

\multirow{3}{*}{\texttt{Hand}}
& $1\sigma$ & $\bm{50.98\pm3.89}$ & $\bm{3.55\pm0.57}$
& {$52.68\pm2.72$} & $4.66\pm0.68$
& \textcolor{gray}{$52.68\pm2.84$} & \textcolor{gray}{$4.75\pm0.69$} \\
& $2\sigma$ & $\bm{51.74\pm3.7}$ & $\bm{3.86\pm0.55}$
& $52.59\pm2.99$ & $4.57\pm0.63$
& \textcolor{gray}{$52.67\pm2.72$} & \textcolor{gray}{$4.72\pm0.67$} \\
& $3\sigma$ & $\bm{52.59\pm2.99}$ & $\bm{4.8\pm0.61}$
& \textcolor{gray}{$53.18\pm2.91$} & \textcolor{gray}{$5.16\pm0.69$}
& ${52.68\pm2.72}$ & ${5.04\pm0.7}$ \\
\bottomrule
\end{tabular}
\vspace*{-0.05in}
\caption{\small Parameter-estimation and dynamics-prediction RMSE ($\times 100$), averaged over \num{25} seeds and (when available) over \texttt{Flat} and \texttt{Rough} environments.
Lower is better; best results are in \textbf{bold}, and least favorable results are in \textcolor{gray}{gray}.}
\label{tab:sim-physic-params}
\end{table*}

To learn the policy, we follow the MBPO paradigm~\cite{li2025robotic} using a learned dynamics model $q_{\bm{\theta}}$ to train a PPO~\cite{schulman2017proximal} policy $\bm{\pi}$. Specifically, we follow the Robotic World Model~\cite{li2025robotic} pipeline. To prevent catastrophic failure and initialize the dynamics model with physics knowledge, we pretrain both the policy $\bm{\pi}$ and the dynamics model $q_{\bm{\theta}}$ in simulation with domain randomization of physics parameters~\cite{chen2022understanding}. Upon deployment, we transition to online learning to bridge the reality gap. At each iteration:
\begin{enumerate}[leftmargin=*]
    \item Collect a transition $(\mathbf{s}_t,\bm{a}_t,\hat{\bm{\phi}},\mathbf{s}_{t+1})$ with the current policy $\bm{\pi}$, where $\hat{\bm{\phi}}$ is estimated using Eq.~\eqref{eqn:lse}, and update
$\mathcal{D}\leftarrow \mathcal{D}\cup\{(\mathbf{s}_t,\bm{a}_t,\hat{\bm{\phi}},\mathbf{s}_{t+1})\}$.
    \item Update dynamics $ q_{\bm{\theta}} $ with the shortcut model objective~\cite{frans2025one} using data sampled from dataset $\mathcal{D}$. 
    \item Roll out imagined trajectories with domain randomization branched from states in $\mathcal{D}$ under $q_{\bm{\theta}}$, $\bm{\pi}$, and $p(\bm{\phi})$. Update $\bm{\pi}$ with rewards augmented by Alg.~\ref{alg:explr-bonus}.
\end{enumerate}

Exploration terminates when the trace of the parameter posterior covariance falls below $\delta_{\mathrm{var}}$ and the dynamics prediction error falls below $\delta_{\mathrm{dyn}}$. The prediction error is measured as the horizon-$H$ $\ell_2$ deviation between real states and model rollouts.

\section{Simulation Experiment}
\label{sec:sim-exp}

We evaluate the two questions raised in the preliminary section. \textbf{(Q1)} Does the Q{\footnotesize OED} strategy collect information that improves identification of unknown parameters? \textbf{(Q2)} Does Q{\footnotesize OED} improve policy learning compared with state-of-the-art methods? What is the contribution of each component? Finally, we compare the learned dynamics model against the purely analytical physics model.

\textbf{Environments.}
We evaluate on seven environments in MuJoCo~\cite{todorov2012mujoco}, covering locomotion and manipulation platforms: Unitree Quadruped \texttt{Go1-Flat} and \texttt{Go1-Rough} (\num{54}), Humanoid \texttt{G1-Flat} and \texttt{G1-Rough} (\num{119}), Clearpath Vehicle \texttt{Jackal-Flat} and \texttt{Jackal-Rough} (\num{31}), and the dexterous Inspire-FTP \texttt{Hand-Rotate} (\num{104}). The numbers in parentheses denote the dimensionality of the physical parameters. \texttt{Flat} and \texttt{Rough} indicate flat ground and uneven terrain, respectively. All policies are pretrained in mjlab~\cite{mjlab} with \num{100} episodes, using the mjlab default configuration, and then run in MuJoCo with \num{20} episodes, on a single NVIDIA 4090 GPU. In MuJoCo, we use a single robot as a surrogate for real-world challenges, and we randomize physics coefficients by sampling from mjlab's default domain randomization distribution. We additionally test multiple noise levels, with the first level set to $\sigma = 0.025$. Full settings are provided in Appendix~\ref{apdx:model-struc}.

\subsection{Does Q{\footnotesize OED} yield informative data?}
In the MuJoCo evaluation, we analyze \textbf{Q1} using the RMSE of (i) parameter estimates and (ii) dynamics predictions. We compare Q{\footnotesize OED} against two ablations. \textbf{(1) Q{\footnotesize OED}-A{\footnotesize GNOSTIC}} only considers identifiable parameters~\cite{memmel2024asid,sobanbabu2025sampling}, defined in Eq.~\eqref{eqn:naive-bonus} as $ \mathcal{B}_{\text{Agnostic}}(\bm{\phi} \vert \bm{\tau}_t) = \tr\left(\bm{\mathcal{F}}_{\mathbf{k}\mathbf{k}}\right)$, which is agnostic to the discarded parameters. \textbf{(2) B{\footnotesize OED}} is the canonical and ideal formulation that uses all parameters, defined in Eq.~\eqref{eq:trace-bonus} as $\mathcal{B}_{\text{BOED}}(\bm{\phi} \vert \bm{\tau}_t) = \tr\left(\bm{\mathcal{F}}_{\bm{\phi}}\right)$. We do not include an ``observable-subspace'' BOED variant because the observable directions $\mathbf{W}_{\mathbf{o}}^{\top}\bm{\phi}$ need not map uniquely to physical parameters. All methods use the same CEM optimizer and learned dynamics model, running \num{5} optimization steps with \num{2048} samples per step. Following~\cite{MurilloGonzalezA-RSS-25}, we use $\alpha=1$, $H=\SI{2}{\second}$, $\delta_{\mathrm{var}}=0.05^2$, and $\delta_{\mathrm{dyn}}=1$ as defaults.

\begin{figure*}[t!]
  \centering
  
  % --- Top Row: Simulation Snapshots ---
  \begin{subfigure}[b]{0.135\textwidth}
    \centering
    \mycroppedimg{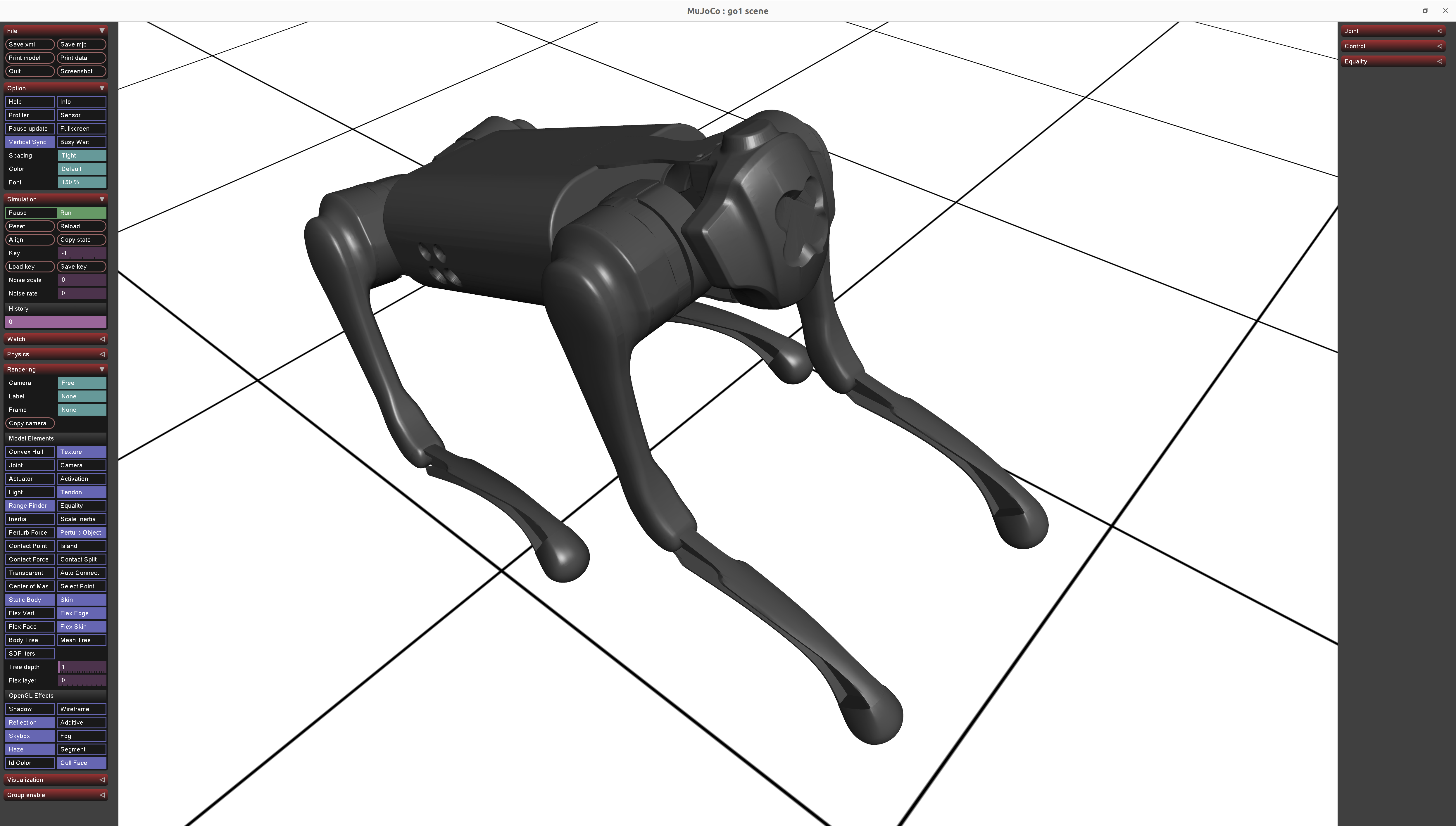}
    \caption*{Go1 Flat}
    \label{fig:ex1}
  \end{subfigure}
  \hfill
  \begin{subfigure}[b]{0.135\textwidth}
    \centering
    \mycroppedimg{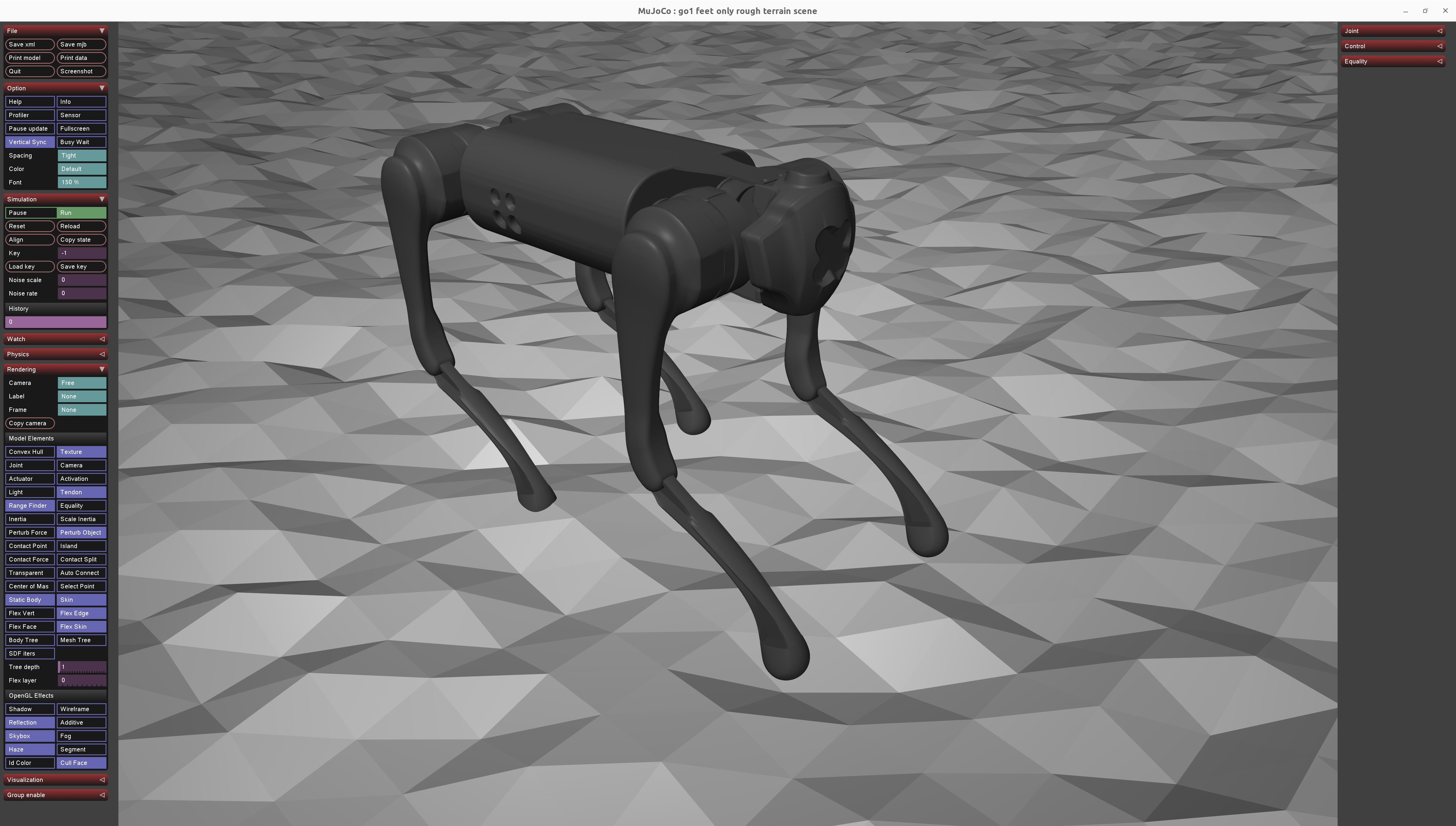}
    \caption*{Go1 Rough}
    \label{fig:ex2}
  \end{subfigure}
  \hfill
  \begin{subfigure}[b]{0.135\textwidth}
    \centering
    \mycroppedimg{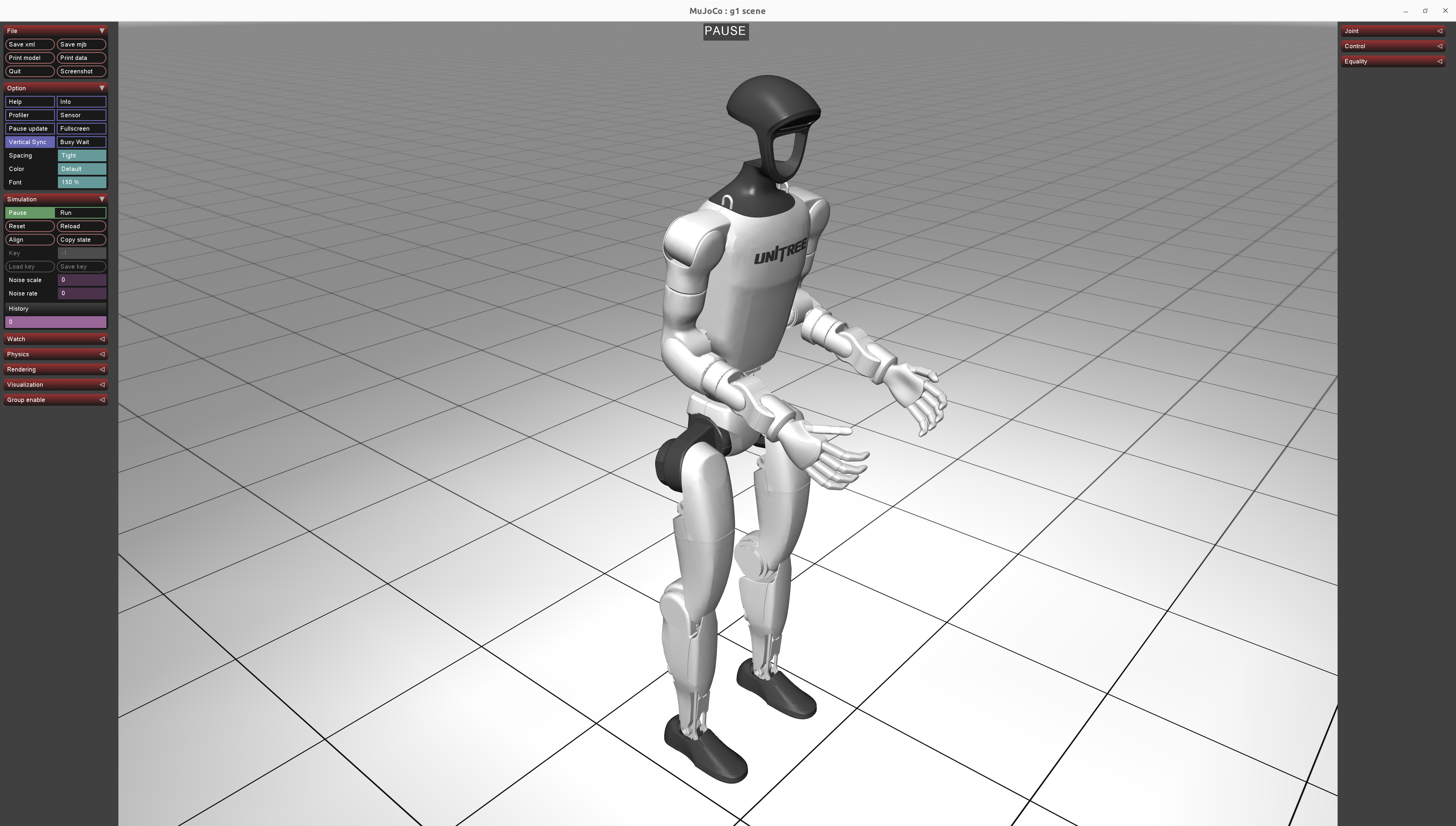}
    \caption*{G1 Flat}
    \label{fig:ex3}
  \end{subfigure}
  \hfill
  \begin{subfigure}[b]{0.135\textwidth}
    \centering
    \mycroppedimg{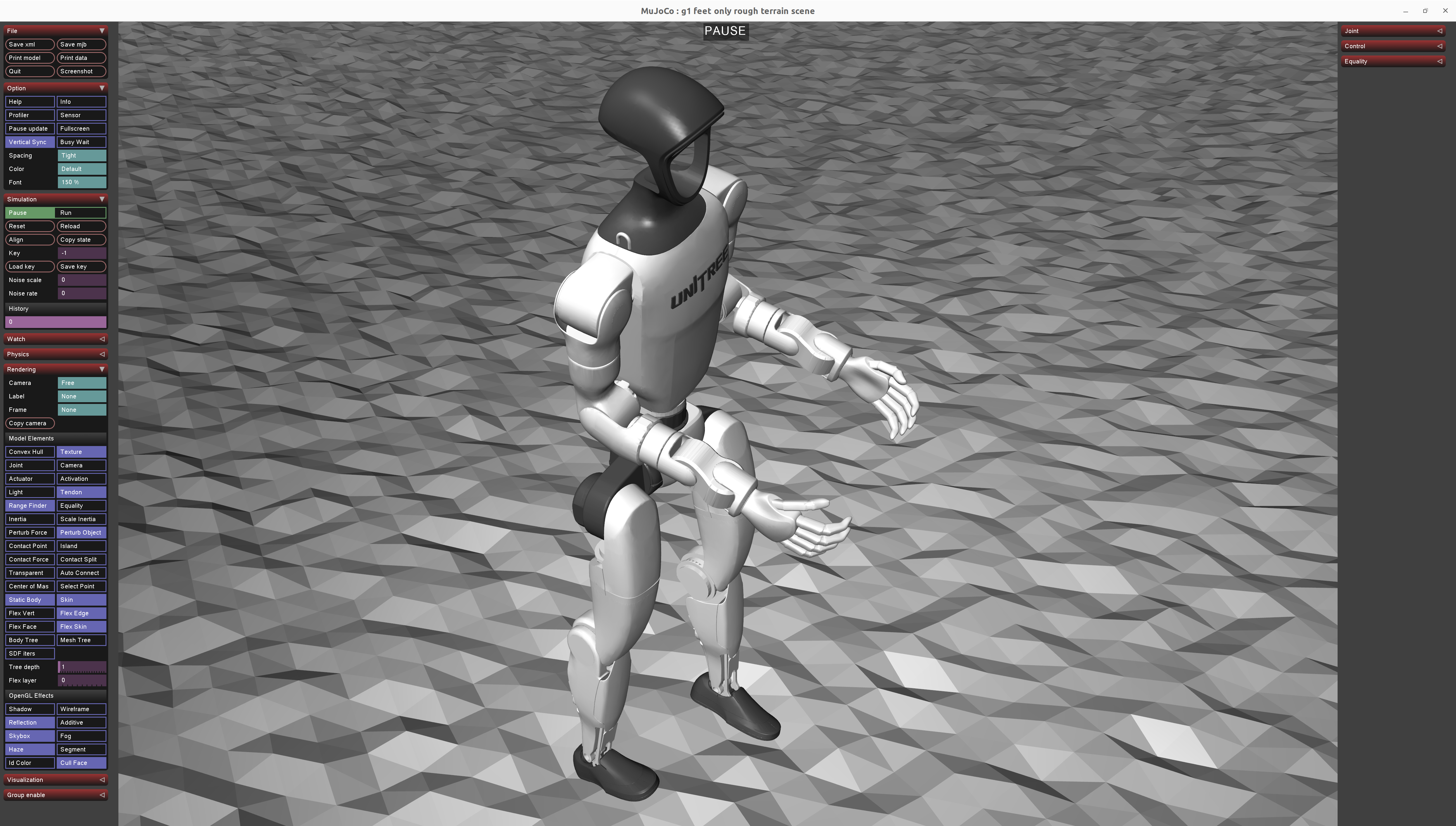}
    \caption*{G1 Rough}
    \label{fig:ex4}
  \end{subfigure}
  \hfill
  \begin{subfigure}[b]{0.135\textwidth}
    \centering
    \mycroppedimg{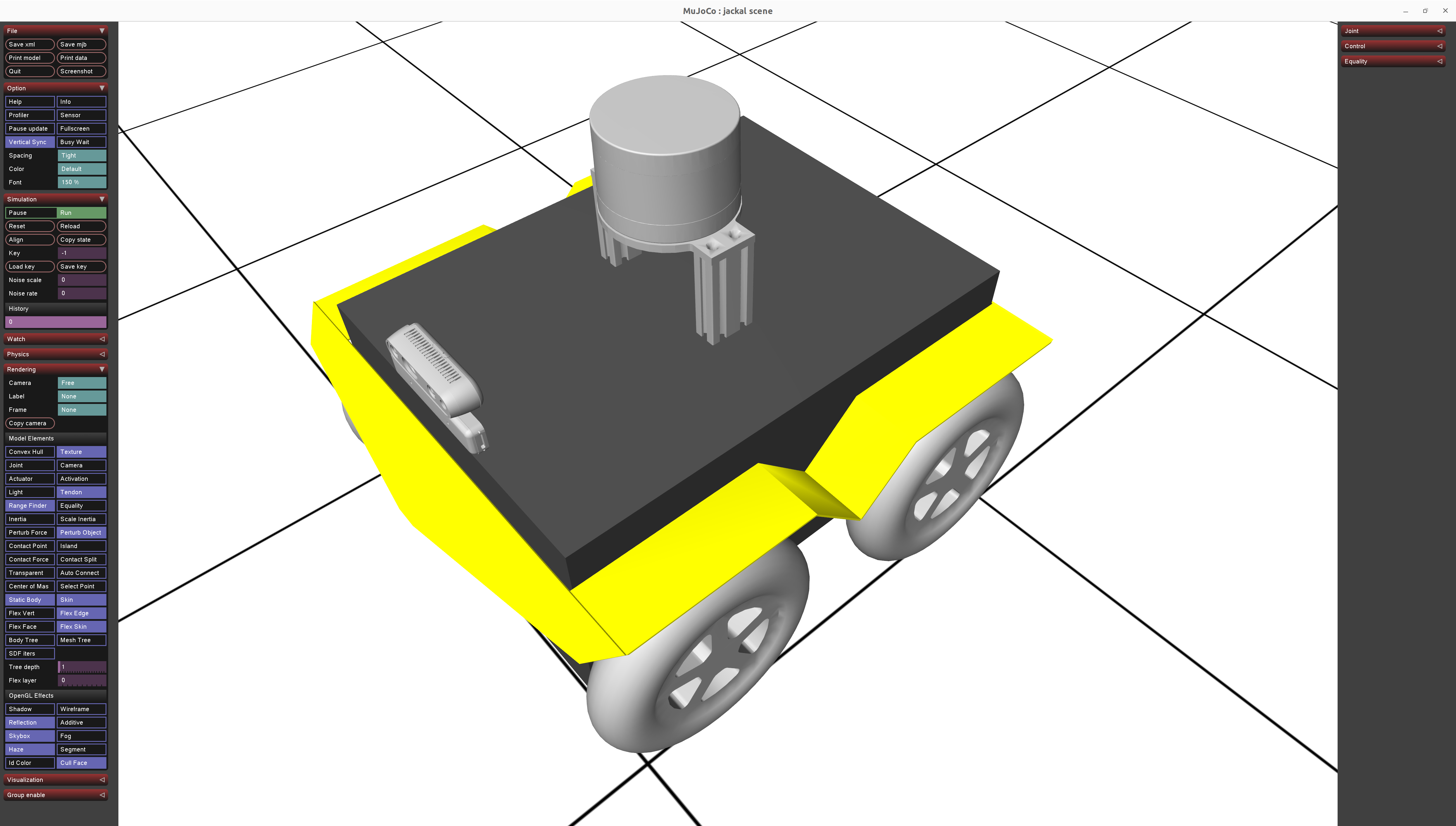}
    \caption*{Jackal Flat}
    \label{fig:ex5}
  \end{subfigure}
  \hfill
  \begin{subfigure}[b]{0.135\textwidth}
    \centering
    \mycroppedimg{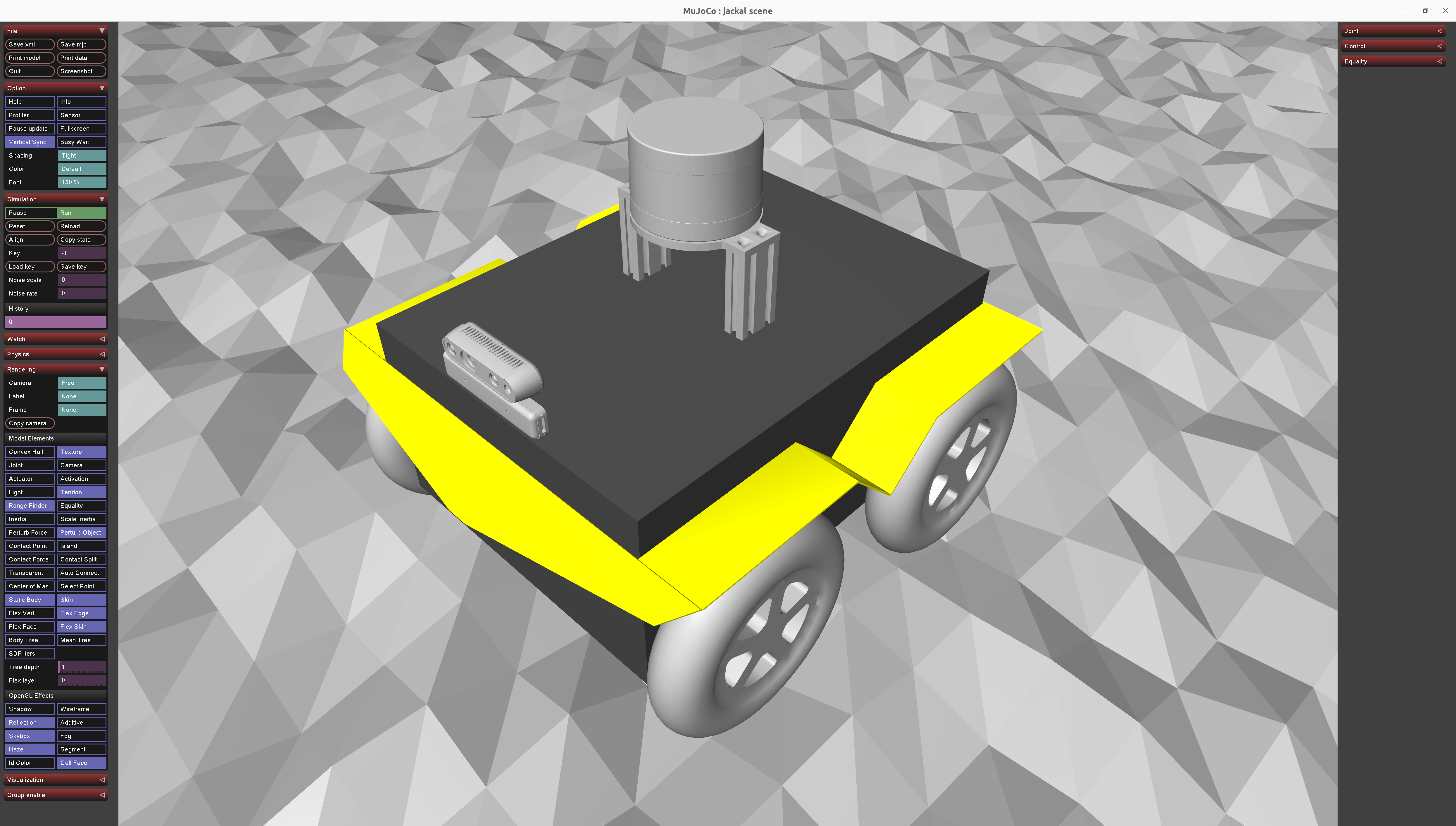}
    \caption*{Jackal Rough}
    \label{fig:ex6}
  \end{subfigure}
  \hfill
  \begin{subfigure}[b]{0.135\textwidth}
    \centering
    \mycroppedimg{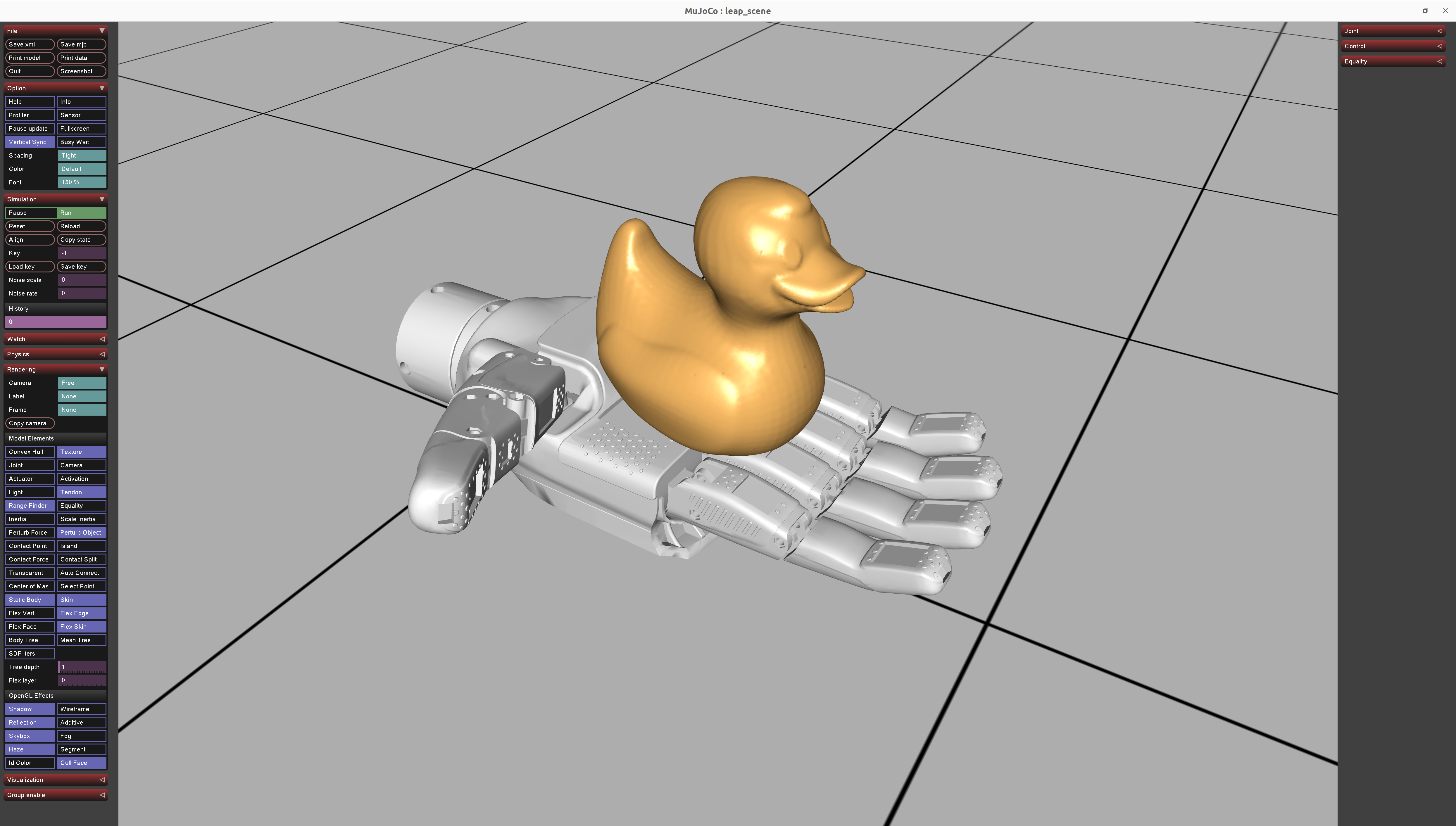}
    \caption*{Hand Rotate}
    \label{fig:ex7}
  \end{subfigure}

  % --- Vertical Spacing ---
  \vspace*{0.05in} 

  % --- Bottom Row: Policy Plot ---
  \includegraphics[width=0.985\textwidth]{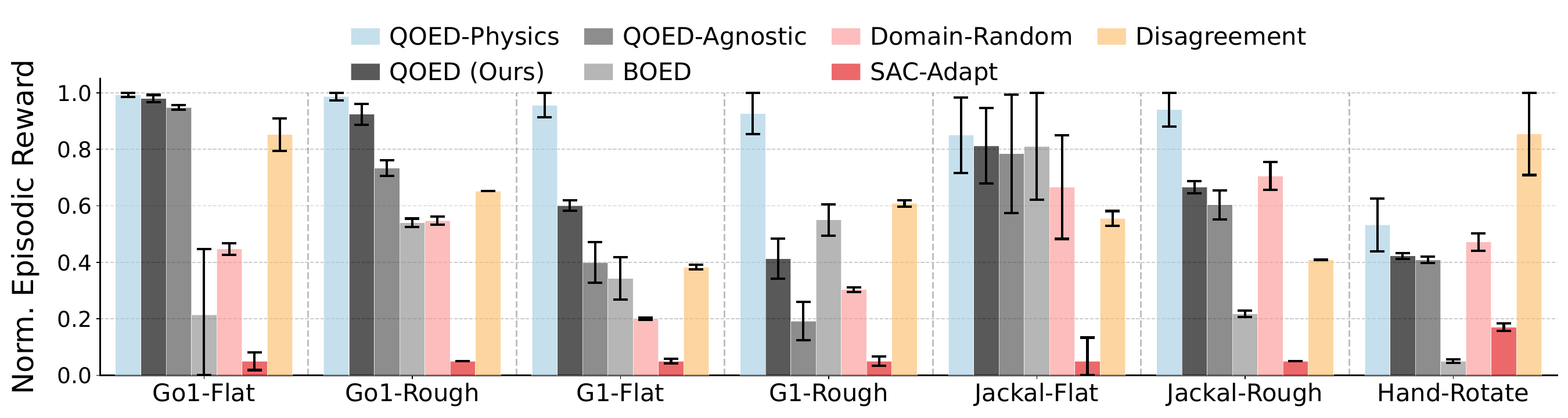}
  
  \vspace*{-0.1in}
  \caption{\small Policy performance across diverse robot environments. Q{\footnotesize OED}-P{\footnotesize HYSICS} with ground-truth physics consistently outperforms baselines, validating our adaptive information objective. Q{\footnotesize OED} with learned dynamics also performs well, highlighting the promise of learned models for exploration.}
  \label{fig:sim-policy}
\end{figure*}

\textbf{Results.} 
Table~\ref{tab:sim-physic-params} reports (i) the RMSE of parameter estimates and (ii) the RMSE of dynamics prediction when using the within-episode parameter estimate, averaged over \num{25} random seeds per scenario. Across all scenarios and noise levels, Q{\footnotesize OED} achieves the strongest dynamics-prediction RMSE and competitive parameter-estimation RMSE. Q{\footnotesize OED} improves dynamics prediction by $\bm{21.98\%}$ over Q{\footnotesize OED}-A{\footnotesize GNOSTIC} and by $\bm{35.23\%}$ over B{\footnotesize OED}. B{\footnotesize OED} often produces poor estimates because it does not explicitly account for observability and can allocate exploration effort to directions that are hard to learn. Q{\footnotesize OED}-A{\footnotesize GNOSTIC} restricts exploration to the identifiable subspace, but it
still underperforms Q{\footnotesize OED} because it treats the discarded parameters as irrelevant; in practice, these dropped directions can remain coupled to the critical ones and degrade estimation. Although the parameter estimation RMSE is similar across methods, the prediction RMSE differs because not all parameters contribute equally to the dynamics. Q{\footnotesize OED} gathers data that is informative about the critical parameters while reducing the influence of the dropped ones, enabling CEM to recover the parameters that matter most for accurate prediction. Our CEM runtime averages \SI{6.3}{\milli\second} and FIM estimation runtime averages \SI{28.3}{\milli\second}. Overall, the results highlight the importance of identifying critical parameters and suppressing the influence of discarded directions.

\textbf{Why dynamics prediction improves despite similar parameter errors?}
To explain why dynamics-prediction RMSE can improve even when aggregate parameter RMSE changes less, we perform a post-hoc attribution analysis. Specifically, by fixing all non-critical parameters to their ground-truth values, we measure how much of the dynamics-prediction error is explained by the identified critical parameter set. The critical parameters account for $65.9\%$ (Go1), $53.1\%$ (G1), $51.9\%$ (Jackal), and $47.2\%$ (Hand) of the total dynamics-prediction error. For G1, for example, Q{\footnotesize OED} identifies only \num{16} parameters ($13.4\%$ of the full parameter space), indicating that a compact identifiable subset can explain a disproportionate share of dynamics prediction.

\textbf{Robustness to Hyperparameters.}
We assess Q{\footnotesize OED} robustness to $\delta_{\mathrm{eig}}$, $\alpha_{\mathrm{eig}}$, and $\delta_{\mathrm{cos}}$ using a grid search. We sweep $\delta_{\mathrm{eig}} \in [0.05,0.5]$ (step $0.05$), $\alpha_{\mathrm{eig}} \in [0.005,0.05]$ (step $0.005$), and $\delta_{\mathrm{cos}} \in [0.9,0.99]$ (step $0.01$). In \texttt{Jackal} environment, Q{\footnotesize OED} achieves an average dynamics prediction RMSE of \num{2.42\pm0.11}, while Q{\footnotesize OED}-A{\footnotesize GNOSTIC} yields \num{4.77\pm2.51}. These results indicate that Q{\footnotesize OED} is robust to the choice of hyperparameters over a wide range. While better settings may exist for specific tasks, we use our defaults because they are simple and perform well.

\subsection{Does Q{\footnotesize OED} help policy learning?}
\label{sec:sim-policy}

\textbf{Baselines and Ablations.}
We compare against the following strong exploration baselines. \textbf{(1) S{\footnotesize AC}-A{\footnotesize DAPT}} (SAC~\cite{haarnoja2018soft}) adaptively tunes the weights of task and exploration rewards~\cite{chen22eipo} to balance exploration and performance.
\textbf{(2) D{\footnotesize ISAGREEMENT}} (SAC) follows an explore-then-exploit schedule: it maximizes an exploration reward for the first \num{25}\% of interactions, then switches to maximizing task reward~\cite{pathak19disagreement,sekar2020planning,NEURIPS2021_cc4af25f}. We use ensemble disagreement as the exploration reward, computed from an ensemble of five learned dynamics. \textbf{(3) D{\footnotesize OMAIN}-R{\footnotesize ANDOM}} trains PPO with domain randomization (DR)~\cite{chen2022understanding} using mjlab settings.
We also evaluate Q{\footnotesize OED} ablations: \textbf{(4) Q{\footnotesize OED}-P{\footnotesize HYSICS}} (finite differences with ground-truth simulator physics), \textbf{(5) Q{\footnotesize OED}-A{\footnotesize GNOSTIC}} (matching prior settings~\cite{memmel2024asid,sobanbabu2025sampling}), and \textbf{(6) B{\footnotesize OED}}~\cite{rainforth2024modern}. PPO and SAC follow Stable-Baselines3~\cite{stable-baselines3} (with adjustments in Appendix~\ref{apdx:model-struc}). For fairness, all methods except Q{\footnotesize OED}-P{\footnotesize HYSICS} use our learned dynamics for policy optimization.

\textbf{Results.}
Fig.~\ref{fig:sim-policy} answers \textbf{Q2} by reporting episodic \emph{task} reward (excluding exploration reward), averaged over ten seeds. Results are normalized to $[0,1]$ per environment for comparability. Q{\footnotesize OED}-P{\footnotesize HYSICS} performs best across all environments since it has access to ground-truth physics, confirming that when the dynamics model is accurate, Q{\footnotesize OED} can achieve strong policy performance. In contrast, B{\footnotesize OED} is greedy for information, leading to weaker
policies. Q{\footnotesize OED}-A{\footnotesize GNOSTIC} focuses on critical parameters but underperforms Q{\footnotesize OED} because it ignores the influence of dropped parameters; as in the previous experiment, poorer dynamics estimates yield miscalibrated physics during policy optimization. A similar issue appears in D{\footnotesize OMAIN}-R{\footnotesize ANDOM}, where broad domain-randomization priors do not efficiently improve online performance. D{\footnotesize ISAGREEMENT} is a strong baseline by targeting regions of high dynamics-model uncertainty, an approach that is effective in model-based RL (e.g., PETS~\cite{chua2018pets}) compared to SAC baselines. Nonetheless, in our robotics scenarios, conditioning the dynamics model on physics parameters yields further gains; see Scaffolder~\cite{hu2024privileged} for a broader discussion of incorporating privileged information in dynamics models. Q{\footnotesize OED} performance drops in \texttt{G1-Rough} because the learned dynamics model does not reliably capture the critical parameters during early learning. Appendix~\ref{apdx:exp} provides further result analysis of the learned dynamics model.

\section{Real-World Experiment}
We evaluate \textbf{Q1} and \textbf{Q2} in real-world experiments on two platforms: a Franka Emika Panda arm (manipulation) and a Clearpath Jackal mobile vehicle (navigation).

\subsection{Rod balance}

\begin{figure}[t!]
\vspace*{0.08in}
\centering
\begin{subfigure}[h]{0.241\textwidth}
\centering
    \includegraphics[width=0.985\textwidth, trim={10cm 0cm 10cm 0cm}, clip]{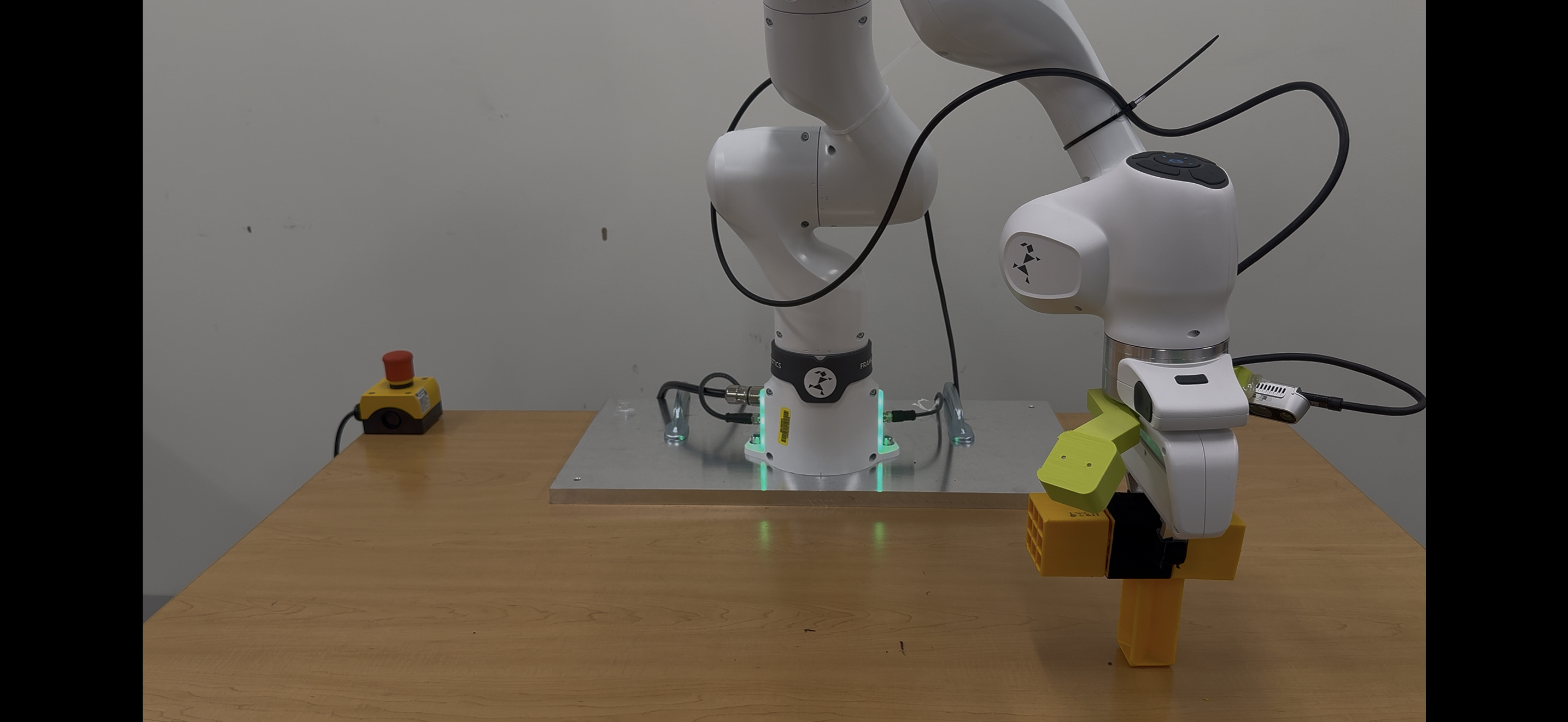}
    \label{fig:franka-cube1}
\end{subfigure}
\begin{subfigure}[h]{0.241\textwidth}
\centering
    \includegraphics[width=0.985\textwidth, trim={10cm 0cm 10cm 0cm}, clip]{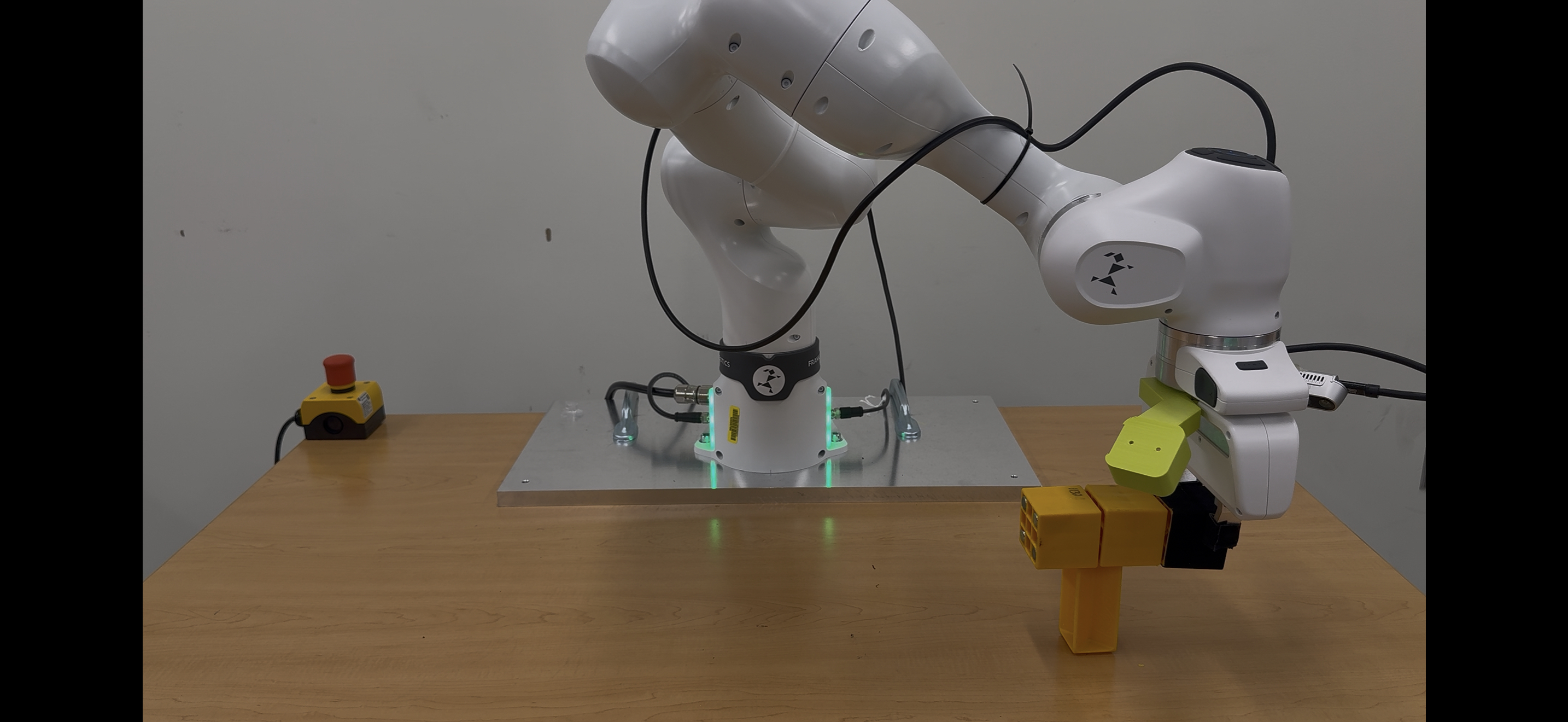}
    \label{fig:franka-cube2}
\end{subfigure}
\centering
\begin{subfigure}[h]{0.5\textwidth}
\centering
    \vspace*{-0.05in}
    \includegraphics[width=0.98\textwidth]{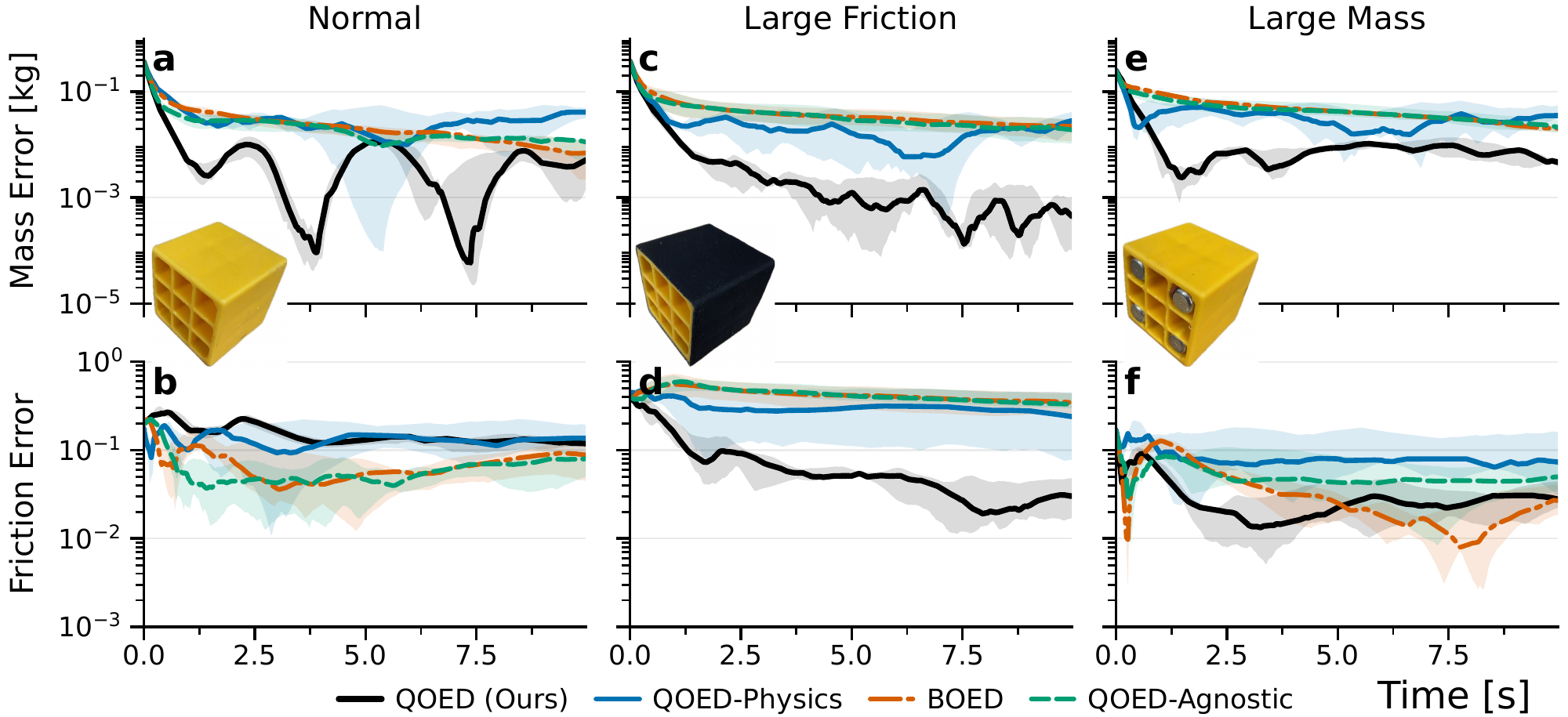}
    \label{fig:franka-result}
\end{subfigure}
\caption{\small Rod balancing demonstration and parameter-estimation error bars. Our Q{\footnotesize OED} identifies the parameters quickly and accurately.}
\label{fig:franka-param-err}
\end{figure}

This section primarily answers \textbf{Q1}, since the Franka setup provides high-accuracy reference measurements for comparison. We use the same ablations as in Sect.~\ref{sec:sim-policy}. Note that Q{\footnotesize OED}-P{\footnotesize HYSICS} updates the real-to-sim residual distribution using an unscented Kalman filter~\cite{Schperberg23UR}.

\textbf{Environment.}
We use a Franka Emika Panda equipped with a RealSense D435i camera (\SI{30}{\Hz}) and an NVIDIA 4090 GPU.
The robot must identify the mass, inertia, and friction of three cubes, then pick the cube with the highest friction and balance the stacked cubes
(Fig.~\ref{fig:franka-param-err}). We create three task variants by randomly stacking the cubes. For each variant and each method, we run six trials.

\textbf{Results.}
Despite the apparent simplicity of the task, the ablations often fail. Q{\footnotesize OED} achieves an $\bm{89\%}$ success rate, whereas
Q{\footnotesize OED}-P{\footnotesize HYSICS}, B{\footnotesize OED}, and Q{\footnotesize OED}-A{\footnotesize GNOSTIC} achieve $\bm{42\%}$, $\bm{8\%}$,
and $\bm{17\%}$, respectively. Failures are typically caused by center-of-mass misalignment: an error of only \SI{2}{\centi\meter} is sufficient to
topple the stack, contributing to the low success rate of Q{\footnotesize OED}-P{\footnotesize HYSICS}. During exploration, Q{\footnotesize OED}
adopts a simple objective-adaptation strategy---first identifying mass and inertia, then estimating friction---leading to faster parameter convergence
(Fig.~\ref{fig:franka-param-err}). The parameter estimation curve is not smoothed, as we opt to present the raw results rather than apply the filters. Although Q{\footnotesize OED}-A{\footnotesize GNOSTIC} targets a similar objective-adaptation strategy, it does not account for the
influence of friction when estimating mass, making the resulting exploration behavior (e.g., hefting the cube) less reliable. Even with the Franka
arm's accurate sensing, jointly identifying mass/inertia and friction is challenging, which helps explain the poor performance of
Q{\footnotesize OED}-A{\footnotesize GNOSTIC} and B{\footnotesize OED}.

\subsection{Wild Wheeled Navigation}
\begin{figure}[t!]
\vspace*{0.08in}
\centering
\begin{subfigure}[h]{0.5\textwidth}
\centering
    \includegraphics[width=0.985\textwidth, trim={0.35cm 3.1cm 0.35cm 3.1cm}, clip]{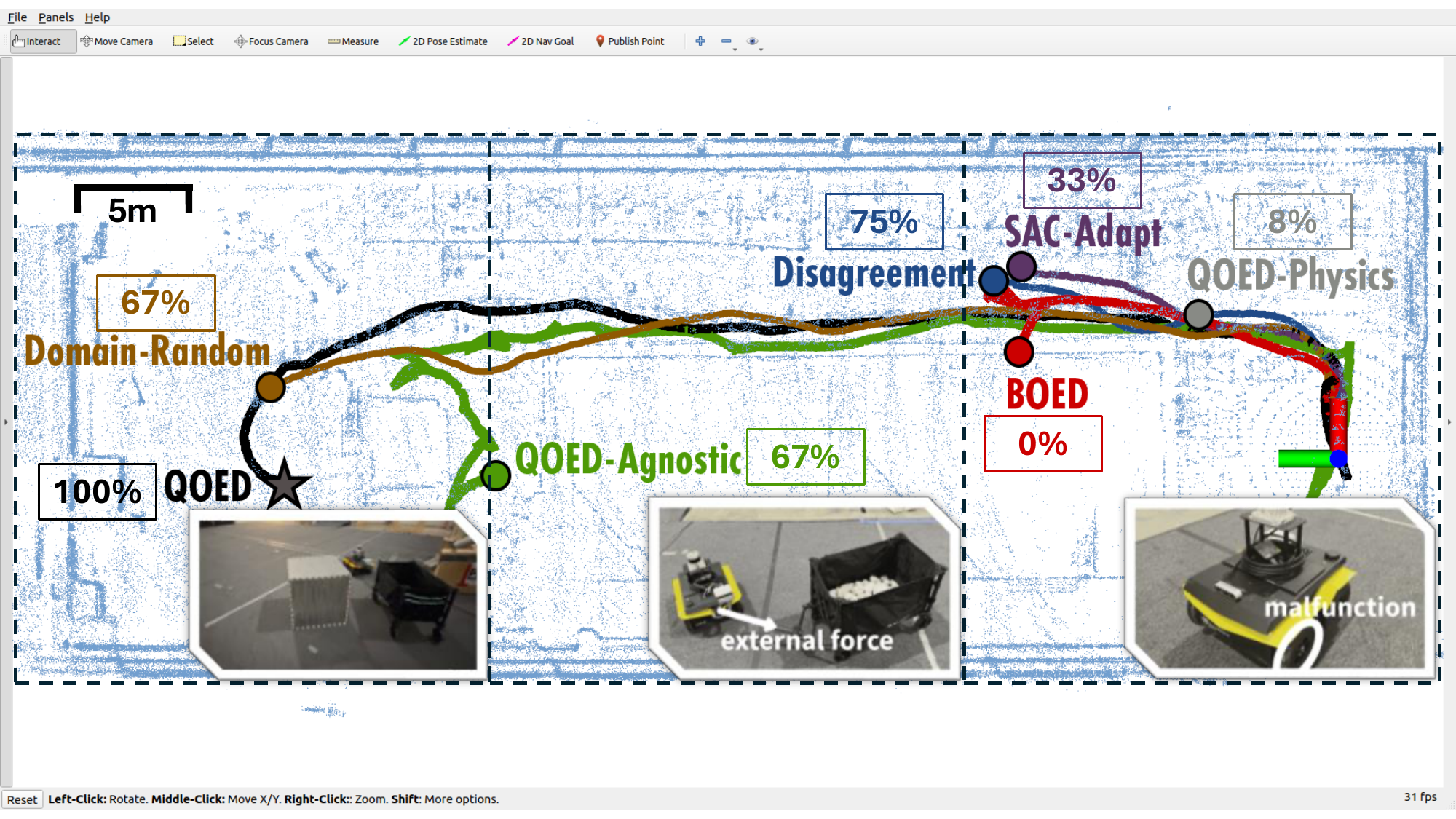}
    \label{fig:jackal-mesh}
\end{subfigure}
\begin{subfigure}[h]{0.5\textwidth}
\centering
    \vspace*{0.05in}
    \includegraphics[width=0.985\textwidth, trim={0.35cm 3.1cm 0.35cm 3cm}, clip]{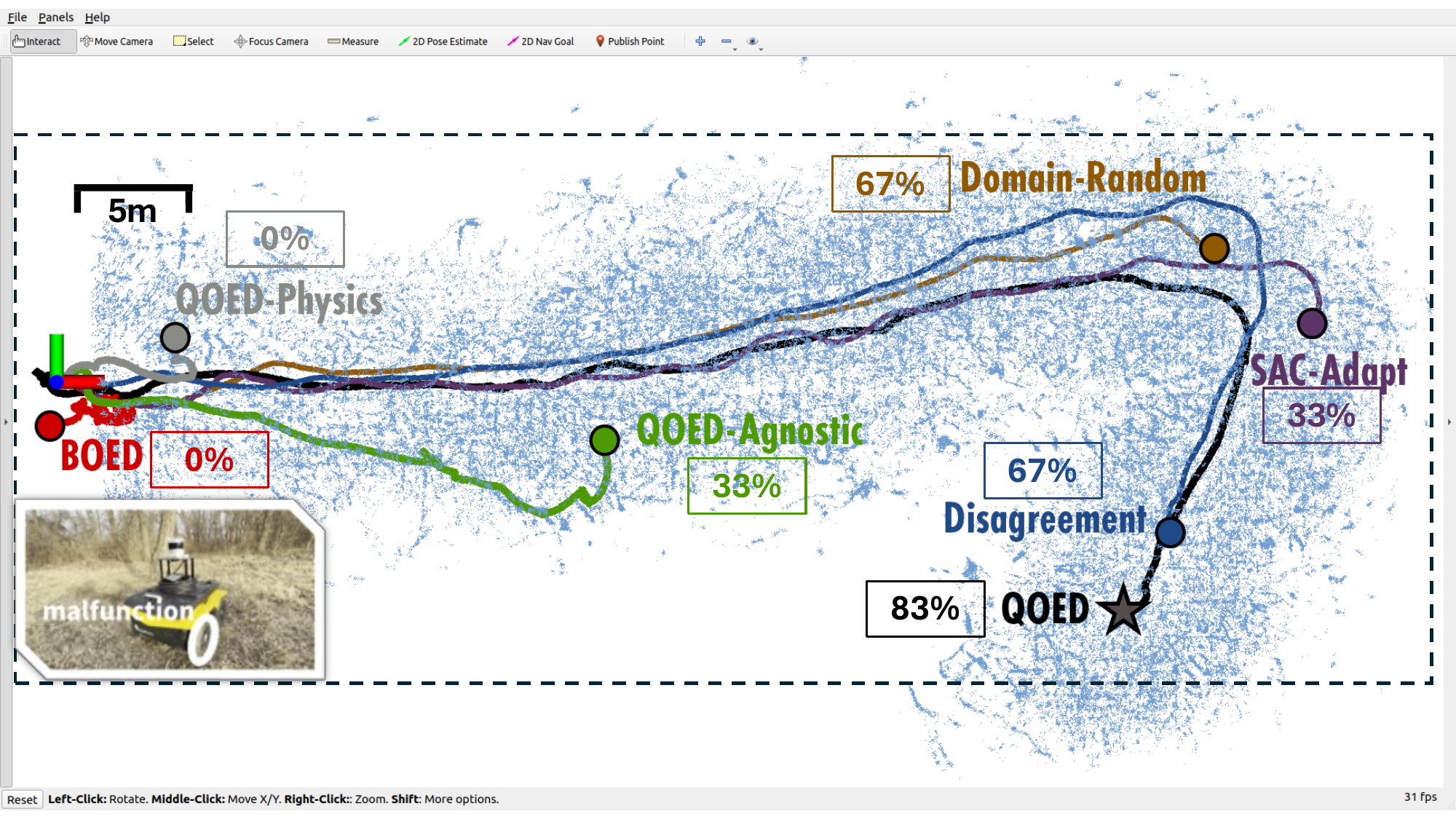}
    \label{fig:jackal-forest}
\end{subfigure}
\begin{subfigure}[h]{0.5\textwidth}
\centering
    \vspace*{0.1in}
    \includegraphics[width=1\textwidth]{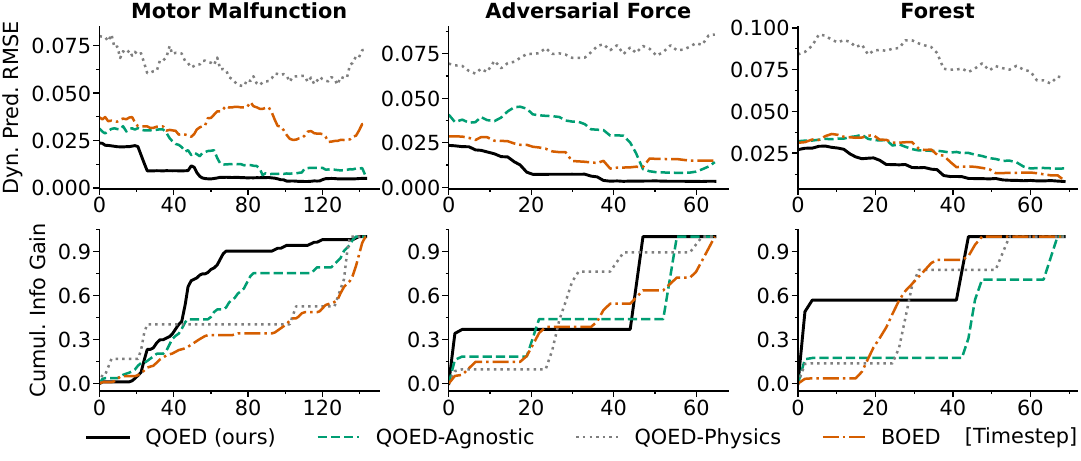}
    \label{fig:jackal-mesh-fim}
\end{subfigure}
\vspace*{-0.1in}
\caption{\small Real-world snapshots with success rates shown in the text boxes. Q{\footnotesize OED} achieves the highest success rate and the lowest dynamics prediction RMSE. By explicitly suppressing nuisance directions, it attains the highest cumulative information gain across environments.}
\label{fig:jackal-mesh-snapshot}
\end{figure}

This section further evaluates \textbf{Q1} and \textbf{Q2} in real-world navigation. Since ground-truth parameters are unavailable, we use the
dynamics prediction RMSE as the primary indicator. We use the same baselines and ablations as in Sect.~\ref{sec:sim-policy}.

\textbf{Environment.}
We use a Clearpath Jackal equipped with an OS1-64 LiDAR for LiDAR--inertial odometry~\cite{fasterlio,yu2023mimu} at \SI{100}{\Hz} and a RealSense D435i camera
for navigation at \SI{30}{\Hz}; all computation runs on a Jetson Orin SoC. We evaluate three scenarios and a transition:
\texttt{Motor Malfunction} (front-left wheel axle broken), \texttt{Adversarial Force} (towing a cart loaded with rocks), and \texttt{Forest} under \texttt{Motor Malfunction}. The robot must reach a goal while avoiding sparse obstacles. After \texttt{Adversarial Force}, we unload the trailer and move it to the \texttt{Forest} environment. For each scenario, we run six trials.

\textbf{Results.}
Fig.~\ref{fig:jackal-mesh-snapshot} reports real-world trajectories together with success rate, dynamics prediction RMSE, and normalized cumulative information gain. Q{\footnotesize OED} reaches the goal by quickly identifying key parameters, whereas baselines often crash due to miscalibrated physical coefficients. It also achieves the highest cumulative information gain across scenarios. Q{\footnotesize OED} first identifies mass and friction, then diagnoses wheel-related effects and longitudinal force while suppressing other coupled directions, yielding faster and more stable estimation than Q{\footnotesize OED}-A{\footnotesize GNOSTIC}; B{\footnotesize OED} estimation often fails to converge. In the recorded data, Q{\footnotesize OED} spends \SI{52.5}{\percent} of exploration time on mass and friction, whereas Q{\footnotesize OED}-A{\footnotesize GNOSTIC} and B{\footnotesize OED} spend \SI{87.5}{\percent} and \SI{89.5}{\percent}, respectively, which slows exploration and degrades estimation of motor and adversarial-force effects. In the real world, the learned dynamics model further improves performance over Q{\footnotesize OED}-P{\footnotesize HYSICS}, benefiting from the data efficiency of MBRL. D{\footnotesize ISAGREEMENT} is also strong due to its ensemble-based uncertainty signal, whereas S{\footnotesize AC}-A{\footnotesize DAPT} exhibits wobbling behaviors that are uninformative for policy learning and can lead to obstacle collisions. See Appendix~\ref{apdx:exp} for additional discussions.

\subsection{Discussions and Limitations.}
Although the SAC-based baselines perform poorly in our scenarios, they are usually more general than our MBPO setup because we require physics parameterization to define the exploration objective. One way to relax this requirement is to learn a latent parameterization (e.g., with a variational autoencoder) and perform exploration in the latent space. In the real world, limited training time also leads to less refined behavior than in simulation: BOED-style objectives can be difficult to learn with RL, as observed in prior work~\cite{memmel2024asid,sobanbabu2025sampling}. This creates a tension between objective complexity and exploration efficiency, which remains an important direction for future work.

\section{Conclusion and Future Work}
\label{sec:conclusion}
We presented \emph{Quasi-Optimal Experimental Design} (Q{\footnotesize OED}), an adaptive information objective for robot exploration.
Grounded in optimal experimental design, Q{\footnotesize OED} identifies critical parameters online and prioritizes them while suppressing nuisance
directions. Extensive simulation and real-world experiments show that Q{\footnotesize OED} collects informative data and efficiently reduces model error. We also demonstrate benefits in model-based policy optimization with a learned dynamics model, allowing Q{\footnotesize OED} to outperform purely physics-based analytical models. Finally, we plan to extend Q{\footnotesize OED} to broader robot learning problems, such as meta-learning RL update rules by treating them as parameters of interest, or optimizing the hyperparameters of differentiable algorithms, including differentiable MPC.

% \section*{Acknowledgments}

\newpage
%% Use plainnat to work nicely with natbib. 
% \bibliographystyle{plainnat}
\bibliographystyle{unsrtnat}
\bibliography{references}

\newpage

\clearpage
\appendix

\subsection{Shortcut Flow Matching}
\label{apdx:fmpe}
In the main text, we instantiate the surrogate transition likelihood $q_{\bm{\theta}}$ using shortcut models~\cite{frans2025one}. This appendix provides the full training objective and shows how it induces both (i) a one-step transport map for sampling and (ii) a differentiable conditional log-density.

\subsubsection{Setup and notation}
For each transition $(\mathbf{s}_t,\bm{a}_t,\bm{\phi},\mathbf{s}_{t+1})$ in the dataset,
we define the conditioning variable
\begin{equation*}
\label{eq:apdx-conditioning}
    \mathbf{c} = (\mathbf{s}_t,\bm{a}_t,\bm{\phi}).
\end{equation*}

To match the dynamics model definition in the main text, we model the state increment
\begin{equation*}
\label{eq:apdx-increment}
    \mathbf{x} = \mathbf{s}_{t+1}-\mathbf{s}_t
\end{equation*}

Let the base noise distribution be $p_0(\bm{\delta})=\mathcal{N}(\bm{0},\bm{I})$.
Given a data increment $\mathbf{x}$, a noise sample $\bm{\delta}$, and a ``time'' $u\in[0,1]$, we define the linear interpolation
\begin{equation*}
\label{eq:apdx-linear-path}
    \mathbf{z}_u
    :=
    (1-u)\,\mathbf{x}
    +
    u\,\bm{\delta}.
\end{equation*}
Along this path, the target velocity is known:
\begin{equation*}
\label{eq:apdx-target-velocity}
    \mathbf{v}
    =
    \frac{\mathrm{d}}{\mathrm{d} u}\mathbf{z}_u
    =
    \bm{\delta}-\mathbf{x}.
\end{equation*}

\subsubsection{Learning objective}
Shortcut models learn a conditional velocity field
$\mathbf{v}_{\bm{\theta}}(\mathbf{z}_u,u,\mathbf{c},d)$,
where $d\ge 0$ is an extra input that represents a step size.
The total loss is
\begin{equation*}
\label{eq:apdx-total-loss}
    \mathcal{L}(\bm{\theta})
    :=
    \mathcal{L}_{\mathrm{FM}}(\bm{\theta})
    +
    \mathcal{L}_{\mathrm{SC}}(\bm{\theta}).
\end{equation*}

The flow-matching term $\mathcal{L}_{\mathrm{FM}}$ matches the predicted velocity to the target velocity:
\begin{equation*}
\label{eq:apdx-lfm}
    \mathcal{L}_{\mathrm{FM}}(\bm{\theta})
    =
    \mathbb{E}_{(\mathbf{x},\mathbf{c})\sim\mathcal{D},\bm{\delta}\sim p_0,u\sim\mathcal{U}[0,1]}
    \Big[
        \big\|
            \mathbf{v}_{\bm{\theta}}(\mathbf{z}_u,u,\mathbf{c},0)
            -
            (\bm{\delta}-\mathbf{x})
        \big\|^2
    \Big].
\end{equation*}

The self-consistency term $\mathcal{L}_{\mathrm{SC}}$ enforces that one step of size $2d$ agrees with two sequential steps of size $d$.
The target velocity is defined as
\begin{equation*}
\label{eq:apdx-vtarget}
\begin{aligned}
    \mathbf{v}_{\mathrm{tgt}}
    &=
    \frac{1}{2}\Big[
        \mathbf{v}_{\bm{\theta}}(\mathbf{z}_u,u,\mathbf{c},d)
        \\& \qquad +
        \mathbf{v}_{\bm{\theta}}\!\Big(
            \mathbf{z}_u - d\cdot \mathbf{v}_{\bm{\theta}}(\mathbf{z}_u,u,\mathbf{c},d),
            \;u-d,\;\mathbf{c},\;d
        \Big)
    \Big].
\end{aligned}
\end{equation*}

To keep all arguments valid (in particular $u-d\in[0,1]$ and the implied two-step endpoint $u-2d\ge 0$),
we sample $d$ such that $0\le d\le u/2$.
The self-consistency loss is
\begin{equation*}
\label{eq:apdx-lsc}
\begin{aligned}
    \mathcal{L}_{\mathrm{SC}}(\bm{\theta})
    &=
    \mathbb{E}_{(\mathbf{x},\mathbf{c})\sim\mathcal{D},\bm{\delta}\sim p_0,\;u\sim\mathcal{U}[0,1],\;d\sim\mathcal{U}[0,u/2]}
    \\& \qquad \Big[
        \big\|
            \mathbf{v}_{\bm{\theta}}(\mathbf{z}_u,u,\mathbf{c},2d)
            -
            \mathbf{v}_{\mathrm{tgt}}
        \big\|^2
    \Big].
\end{aligned}
\end{equation*}

\subsubsection{One-step sampling}
After training, we can generate an increment in a single step.
Given $\bm{\delta}\sim p_0(\cdot)$ and conditioning $\mathbf{c}$, define the one-step map
\begin{equation*}
\label{eq:apdx-one-step-map}
    T_{\bm{\theta}}(\bm{\delta},\mathbf{c})
    :=
    \bm{\delta}
    -
    \mathbf{v}_{\bm{\theta}}(\bm{\delta},1,\mathbf{c},1).
\end{equation*}

We sample an increment $\hat{\mathbf{x}} = T_{\bm{\theta}}(\bm{\delta},\mathbf{c})$
and then set $\mathbf{s}_{t+1}=\mathbf{s}_t+\hat{\mathbf{x}}$.
In the main text, we overload notation and write this transport map as $T_{\bm{\theta}}(\bm{\delta},\mathbf{s}_t,\bm{a}_t,\bm{\phi})$.

\subsubsection{Conditional log-density}
If $T_{\bm{\theta}}(\cdot,\mathbf{c})$ is a diffeomorphism in $\bm{\delta}$,
the conditional density follows from the change-of-variables formula:
\begin{equation*}
\label{eq:apdx-cov}
    \log q_{\bm{\theta}}(\mathbf{x}\vert\mathbf{c})
    =
    \log p_0(\bm{\delta})
    -
    \log\left|
        \det
        \nabla_{\bm{\delta}}
        T_{\bm{\theta}}(\bm{\delta},\mathbf{c})
    \right|,
\end{equation*}
with $\bm{\delta}=T_{\bm{\theta}}^{-1}(\mathbf{x},\mathbf{c})$.
For the shortcut-model transport map, the Jacobian is
\begin{equation*}
\label{eq:apdx-jac}
    \nabla_{\bm{\delta}} T_{\bm{\theta}}(\bm{\delta},\mathbf{c})
    =
    \mathbf{I}
    -
    \nabla_{\bm{\delta}}
    \mathbf{v}_{\bm{\theta}}(\bm{\delta},1,\mathbf{c},1),
\end{equation*}
so the conditional log-density becomes
\begin{equation*}
\label{eq:apdx-loglik-final}
    \log q_{\bm{\theta}}(\mathbf{x}\vert\mathbf{c})
    =
    \log p_0(\bm{\delta})
    -
    \log \left|
        \det\!\left(
            \mathbf{I}
            -
            \nabla_{\bm{\delta}}
            \mathbf{v}_{\bm{\theta}}(\bm{\delta},1,\mathbf{c},1)
        \right)
    \right|.
\end{equation*}

\subsection{Derivation of the Trajectory Log-Likelihood}
\label{apdx:deri-traj-log-likelihood}

We derive the score of the surrogate trajectory likelihood with respect to parameters $\bm{\phi}$.
Let a length-$t$ trajectory be
\begin{equation*}
\bm{\tau}_t
=
[\mathbf{s}_0,\bm{a}_0,\mathbf{s}_1,\dots,\bm{a}_{t-1},\mathbf{s}_t].
\end{equation*}

Under a fixed policy $\bm{\pi}(\bm{a}_k\vert \mathbf{s}_k)$ and learned transitions
$q_{\bm{\theta}}(\mathbf{s}_{k+1}\vert \mathbf{s}_k,\bm{a}_k,\bm{\phi})$,
the surrogate trajectory likelihood factorizes as
\begin{equation*}
\label{eq:apdx-traj-factor}
q_{\bm{\theta}}(\bm{\tau}_t \vert \bm{\phi},\bm{\pi})
=
p(\mathbf{s}_0)\;
\prod_{k=0}^{t-1}
\bm{\pi}(\bm{a}_k \vert \mathbf{s}_k)\;
q_{\bm{\theta}}(\mathbf{s}_{k+1} \vert \mathbf{s}_k, \bm{a}_k, \bm{\phi}).
\end{equation*}

Taking logs gives
\begin{equation*}
\begin{aligned}
\label{eq:apdx-traj-loglik}
\log q_{\bm{\theta}}(\bm{\tau}_t \vert \bm{\phi},\bm{\pi})
&=
\log p(\mathbf{s}_0)
+
\sum_{k=0}^{t-1}\log \bm{\pi}(\bm{a}_k \vert \mathbf{s}_k)
\\&+
\sum_{k=0}^{t-1}\log q_{\bm{\theta}}(\mathbf{s}_{k+1} \vert \mathbf{s}_k, \bm{a}_k, \bm{\phi}).
\end{aligned}
\end{equation*}

The first two terms do not depend on $\bm{\phi}$ (we treat the policy as fixed when computing information about $\bm{\phi}$).
Therefore,
\begin{equation*}
\label{eq:apdx-traj-score}
\nabla_{\bm{\phi}}
\log q_{\bm{\theta}}(\bm{\tau}_t \vert \bm{\phi},\bm{\pi})
=
\sum_{k=0}^{t-1}
\nabla_{\bm{\phi}}
\log q_{\bm{\theta}}(\mathbf{s}_{k+1} \vert \mathbf{s}_k, \bm{a}_k, \bm{\phi}).
\end{equation*}

To relate the per-step term to the shortcut-model construction in Appendix~\ref{apdx:fmpe},
define the state increment $\mathbf{x}_k = \mathbf{s}_{k+1}-\mathbf{s}_k$ and conditioning $\mathbf{c}_k = (\mathbf{s}_k,\bm{a}_k,\bm{\phi})$.
The one-step transport map produces $\mathbf{x}_k$ from $\bm{\delta}_k\sim p_0(\cdot)=\mathcal{N}(\bm{0},\bm{I})$ via
$\mathbf{x}_k = T_{\bm{\theta}}(\bm{\delta}_k,\mathbf{c}_k)$.
If $T_{\bm{\theta}}(\cdot,\mathbf{c}_k)$ is invertible, change of variables~\cite{chen2018neural} gives
\begin{equation*}
\begin{aligned}
\label{eq:apdx-step-loglik}
\log q_{\bm{\theta}}(\mathbf{s}_{k+1}\vert \mathbf{s}_k,\bm{a}_k,\bm{\phi})
&=
\log q_{\bm{\theta}}(\mathbf{x}_k\vert \mathbf{c}_k)
\nonumber\\
&=
\log p_0(\bm{\delta}_k)
-
\log\left|
\det
\nabla_{\bm{\delta}_k}
T_{\bm{\theta}}(\bm{\delta}_k,\mathbf{c}_k)
\right|,
\end{aligned}
\end{equation*}
where $\bm{\delta}_k = T_{\bm{\theta}}^{-1}(\mathbf{x}_k,\mathbf{c}_k)$.
In our implementation, the gradient is obtained by automatic differentiation of the per-step expression above.

\subsection{Proof of Lemma~\ref{lem:fim-direction}}
\label{apdx:proof-fim-direction-lemma}
\begin{proof}
Recall the Fisher information matrix at $\bm{\phi}$:
\begin{equation*}
\begin{aligned}
    \bm{\mathcal{F}}_{\bm{\phi}}
    &=
    \mathbb{E}_{\bm{\tau}\sim q_{\bm{\theta}}(\cdot\vert \bm{\phi}, \bm{\pi})}
    \Big[
        \mathbf{g}(\bm{\tau},\bm{\phi})\;
        \mathbf{g}(\bm{\tau},\bm{\phi})^{\top}
    \Big],
    \\
    \mathbf{g}(\bm{\tau},\bm{\phi})
    &=
    \nabla_{\bm{\phi}} \log q_{\bm{\theta}}(\bm{\tau}\vert \bm{\phi}, \bm{\pi}).
\end{aligned}
\end{equation*}

Each outer product $\mathbf{g}\mathbf{g}^{\top}$ is symmetric, so their expectation $\bm{\mathcal{F}}_{\bm{\phi}}$ is also symmetric. Next, for any vector $\mathbf{v}\in\mathbb{R}^m$,
\begin{equation*}
    \mathbf{v}^{\top}\bm{\mathcal{F}}_{\bm{\phi}}\mathbf{v}
    =
    \mathbb{E}\Big[
        \mathbf{v}^{\top}(\mathbf{g}\mathbf{g}^{\top})\mathbf{v}
    \Big]
    =
    \mathbb{E}\Big[
        (\mathbf{g}^{\top}\mathbf{v})^2
    \Big]
    \ge 0,
\end{equation*}
so $\bm{\mathcal{F}}_{\bm{\phi}}$ is positive semidefinite. Therefore, $\bm{\mathcal{F}}_{\bm{\phi}}$ admits an eigen-decomposition $\bm{\mathcal{F}}_{\bm{\phi}}=\mathbf{W}\mathbf{\Lambda} \mathbf{W}^{\top}$ with orthonormal eigenvectors $\mathbf{W}=[\mathbf{w}_1,\dots,\mathbf{w}_m]$ and eigenvalues $\mathbf{\Lambda}=\operatorname{diag}(\lambda_1,\dots,\lambda_m)$.

Finally, for any eigenvector $\mathbf{w}_i$ (with $\|\mathbf{w}_i\|=1$),
\begin{equation*}
    \lambda_i
    =
    \mathbf{w}_i^{\top}\bm{\mathcal{F}}_{\bm{\phi}} \mathbf{w}_i
    =
    \mathbb{E}_{\bm{\tau}\sim q_{\bm{\theta}}(\cdot\vert \bm{\phi})}
    \Big[
        \big(
            \mathbf{g}(\bm{\tau},\bm{\phi})^{\top} \mathbf{w}_i
        \big)^2
    \Big],
\end{equation*}
which is exactly the statement of Lemma~\ref{lem:fim-direction}.
\end{proof}

\subsection{Trace form of the agnostic objective}
\label{apdx:naive-bonus-trace}

Eq.~\eqref{eqn:naive-bonus} defines the agnostic objective as the trace of a \emph{coordinate-restricted} FIM.
Here we record equivalent expressions and relate this trace to the global eigendecomposition of the full FIM.

\begin{mdframed}[hidealllines=true,backgroundcolor=MaterialBlue10!40,innerleftmargin=.2cm,
  innerrightmargin=.2cm]
\begin{lemma}[Equivalent forms of $\tr(\bm{\mathcal{F}}_{\mathbf{k}\mathbf{k}})$]
\label{lem:naive-trace-forms}
Let $\bm{\mathcal{F}}\in\mathbb{R}^{m\times m}$ be the full FIM and let $\bm{\mathcal{F}}=\mathbf{W}\mathbf{\Lambda}\mathbf{W}^\top$
be an eigendecomposition with $\mathbf{\Lambda}=\mathrm{diag}(\lambda_1,\dots,\lambda_m)$.
For any coordinate index set $\mathbf{k}\subseteq\{1,\dots,m\}$, let $\mathbf{S}_{\mathbf{k}}\in\{0,1\}^{|\mathbf{k}|\times m}$ be the
row-selector matrix (so that $\mathbf{S}_{\mathbf{k}}\bm{\phi}=\bm{\phi}_{\mathbf{k}}$) and define the principal submatrix
$\bm{\mathcal{F}}_{\mathbf{k}\mathbf{k}}:=\mathbf{S}_{\mathbf{k}}\bm{\mathcal{F}}\mathbf{S}_{\mathbf{k}}^\top$.
Then
\begin{equation*}
\label{eq:naive-trace-global-eig}
\begin{aligned}
\tr(\bm{\mathcal{F}}_{\mathbf{k}\mathbf{k}})
&=
\tr\!\left(\mathbf{S}_{\mathbf{k}}\mathbf{W}\mathbf{\Lambda}\mathbf{W}^\top\mathbf{S}_{\mathbf{k}}^\top\right)
\\&=
\tr\!\left(\mathbf{\Lambda}\,\mathbf{W}_{\mathbf{k}}^\top\mathbf{W}_{\mathbf{k}}\right)
=
\sum_{i=1}^m \lambda_i \,\|\mathbf{W}_{\mathbf{k}}[:,i]\|_2^2,
\end{aligned}
\end{equation*}
where $\mathbf{W}_{\mathbf{k}}:=\mathbf{S}_{\mathbf{k}}\mathbf{W}$ collects the rows of $\mathbf{W}$ indexed by $\mathbf{k}$.

Moreover, if $\bm{\mathcal{F}}_{\mathbf{k}\mathbf{k}}=\mathbf{U}\mathbf{\Lambda}_{\mathbf{k}}\mathbf{U}^\top$ is the eigendecomposition of the
principal submatrix (with $\mathbf{\Lambda}_{\mathbf{k}}=\mathrm{diag}(\tilde{\lambda}_1,\dots,\tilde{\lambda}_{|\mathbf{k}|})$), then
\begin{equation*}
\label{eq:naive-trace-sub-eig}
\tr(\bm{\mathcal{F}}_{\mathbf{k}\mathbf{k}})=\tr(\mathbf{\Lambda}_{\mathbf{k}})=\sum_{j=1}^{|\mathbf{k}|}\tilde{\lambda}_j.
\end{equation*}
\end{lemma}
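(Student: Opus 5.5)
The plan is to prove the three equalities in sequence using only linear algebra: substitute the eigendecomposition, apply the cyclic property of the trace, and expand in coordinates. The second statement then follows from the similarity invariance of the trace.

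\textbf{First and second equalities.} Since $\bm{\mathcal{F}}_{\mathbf{k}\mathbf{k}}=\mathbf{S}_{\mathbf{k}}\bm{\mathcal{F}}\mathbf{S}_{\mathbf{k}}^\top$ by definition, substituting $\bm{\mathcal{F}}=\mathbf{W}\mathbf{\Lambda}\mathbf{W}^\top$ gives the first equality immediately. For the second, group the product as $(\mathbf{S}_{\mathbf{k}}\mathbf{W})\mathbf{\Lambda}(\mathbf{S}_{\mathbf{k}}\mathbf{W})^\top=\mathbf{W}_{\mathbf{k}}\mathbf{\Lambda}\mathbf{W}_{\mathbf{k}}^\top$ and apply cyclicity, $\tr(\mathbf{A}\mathbf{B})=\tr(\mathbf{B}\mathbf{A})$, with $\mathbf{A}=\mathbf{W}_{\mathbf{k}}$ and $\mathbf{B}=\mathbf{\Lambda}\mathbf{W}_{\mathbf{k}}^\top$. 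This yields $\tr(\mathbf{\Lambda}\mathbf{W}_{\mathbf{k}}^\top\mathbf{W}_{\mathbf{k}})$. The dimensions are compatible because $\mathbf{W}_{\mathbf{k}}$ is $|\mathbf{k}|\times m$.

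\textbf{Third equality.} Because $\mathbf{\Lambda}$ is diagonal, $\tr(\mathbf{\Lambda}\mathbf{M})=\sum_i\lambda_i M_{ii}$ for any $m\times m$ matrix $\mathbf{M}$. The $i$-th diagonal entry of $\mathbf{W}_{\mathbf{k}}^\top\mathbf{W}_{\mathbf{k}}$ is the squared norm of the $i$-th column of $\mathbf{W}_{\mathbf{k}}$, that is, $\|\mathbf{W}_{\mathbf{k}}[:,i]\|_2^2=\sum_{j\in\mathbf{k}}W_{ji}^2$. Substituting this gives the weighted sum. As an optional sanity check, orthonormality of the rows of $\mathbf{W}$ makes these weights lie in $[0,1]$ and sum to $|\mathbf{k}|$, but the identity does not need this.

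\textbf{Second statement.} By Lemma~\ref{lem:fim-direction}, $\bm{\mathcal{F}}$ is symmetric positive semidefinite. Its principal submatrix $\bm{\mathcal{F}}_{\mathbf{k}\mathbf{k}}$ is therefore also symmetric positive semidefinite, since $\mathbf{v}^\top\mathbf{S}_{\mathbf{k}}\bm{\mathcal{F}}\mathbf{S}_{\mathbf{k}}^\top\mathbf{v}\ge0$, so it has a spectral decomposition $\mathbf{U}\mathbf{\Lambda}_{\mathbf{k}}\mathbf{U}^\top$ with $\mathbf{U}$ orthogonal. Applying cyclicity again together with $\mathbf{U}^\top\mathbf{U}=\mathbf{I}$ gives $\tr(\bm{\mathcal{F}}_{\mathbf{k}\mathbf{k}})=\tr(\mathbf{\Lambda}_{\mathbf{k}})=\sum_j\tilde{\lambda}_j$.

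No step is a real obstacle. The only point needing care is to keep the index conventions straight: $\mathbf{W}_{\mathbf{k}}$ here selects rows indexed by $\mathbf{k}$ across all columns. This is different from $\mathbf{W}_{\mathbf{k}\mathbf{o}}$ in Eq.~\eqref{eq:submodular-obj}, which also restricts to the columns in $\mathbf{o}$.
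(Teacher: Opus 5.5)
Your proposal is correct and follows essentially the same route as the paper: substitute $\bm{\mathcal{F}}=\mathbf{W}\mathbf{\Lambda}\mathbf{W}^\top$, apply cyclicity of the trace, and use $\tr(\mathbf{U}\mathbf{\Lambda}_{\mathbf{k}}\mathbf{U}^\top)=\tr(\mathbf{\Lambda}_{\mathbf{k}})$ for the second statement. You also write out the step $\tr(\mathbf{\Lambda}\mathbf{M})=\sum_i\lambda_i M_{ii}$ with $M_{ii}=\|\mathbf{W}_{\mathbf{k}}[:,i]\|_2^2$, which the paper leaves implicit.
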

\end{mdframed}

\begin{proof}
By definition, $\bm{\mathcal{F}}_{\mathbf{k}\mathbf{k}}=\mathbf{S}_{\mathbf{k}}\bm{\mathcal{F}}\mathbf{S}_{\mathbf{k}}^\top$.
Substituting $\bm{\mathcal{F}}=\mathbf{W}\mathbf{\Lambda}\mathbf{W}^\top$ and using cyclicity of the trace gives
\begin{equation*}
\begin{aligned}
\tr(\bm{\mathcal{F}}_{\mathbf{k}\mathbf{k}})
&=
\tr\!\left(\mathbf{S}_{\mathbf{k}}\mathbf{W}\mathbf{\Lambda}\mathbf{W}^\top\mathbf{S}_{\mathbf{k}}^\top\right)
\\&=
\tr\!\left(\mathbf{\Lambda}\,\mathbf{W}^\top\mathbf{S}_{\mathbf{k}}^\top\mathbf{S}_{\mathbf{k}}\mathbf{W}\right)
=
\tr\!\left(\mathbf{\Lambda}\,\mathbf{W}_{\mathbf{k}}^\top\mathbf{W}_{\mathbf{k}}\right),
\end{aligned}
\end{equation*}

The first equation follows from cyclicity of the trace and the definition $\mathbf{W}_{\mathbf{k}}=\mathbf{S}_{\mathbf{k}}\mathbf{W}$. The second equation follows from the eigendecomposition
$\bm{\mathcal{F}}_{\mathbf{k}\mathbf{k}}=\mathbf{U}\mathbf{\Lambda}_{\mathbf{k}}\mathbf{U}^\top$ and
$\tr(\mathbf{U}\mathbf{\Lambda}_{\mathbf{k}}\mathbf{U}^\top)=\tr(\mathbf{\Lambda}_{\mathbf{k}})$.
\end{proof}

\begin{remark}[Coordinate restriction vs.\ eigen-direction restriction]
The above equation involves eigenvalues of the \emph{principal submatrix} $\bm{\mathcal{F}}_{\mathbf{k}\mathbf{k}}$ and is not, in general,
equal to $\sum_{i\in \mathbf{k}}\lambda_i$ (a subset of eigenvalues of the full FIM), unless the leading eigenspaces are aligned with the coordinate axes.
The latter corresponds to \emph{eigen-subspace} selection, discussed separately in Appendix~\ref{apdx:proof-score-subspace}.
\end{remark}

\subsection{Proof of Theorem~\ref{thm:qoed-quasiopt-trace}}
\label{apdx:proof-quasi-optimal}
\begin{proof}
Make the policy dependence explicit by writing
$\bm{\mathcal{F}}^{\bm{\pi}} := \bm{\mathcal{F}}_{\bm{\phi}}$ in Eq.~\eqref{eqn:fim}. Assuming $\bm{\mathcal{F}}^{\bm{\pi}}_{\mathbf{k}\mathbf{k}}\succ 0$ and
$\bm{\mathcal{F}}^{\bm{\pi}}_{\overline{\mathbf{k}}\overline{\mathbf{k}}}\succ 0$, write the block partition
\begin{equation*}
\bm{\mathcal{F}}^{\bm{\pi}}
=
\begin{bmatrix}
\bm{\mathcal{F}}_{\mathbf{k}\mathbf{k}}^{\bm{\pi}} & \bm{\mathcal{F}}_{\mathbf{k}\overline{\mathbf{k}}}^{\bm{\pi}}\\
\bm{\mathcal{F}}_{\overline{\mathbf{k}}\mathbf{k}}^{\bm{\pi}} & \bm{\mathcal{F}}_{\overline{\mathbf{k}}\overline{\mathbf{k}}}^{\bm{\pi}}
\end{bmatrix},
\quad
\bm{\mathcal{I}}_{\mathbf{k}\mid\overline{\mathbf{k}}}^{\bm{\pi}}
=
\bm{\mathcal{F}}_{\mathbf{k}\mathbf{k}}^{\bm{\pi}}
-
\bm{\mathcal{F}}_{\mathbf{k}\overline{\mathbf{k}}}^{\bm{\pi}}
\left(\bm{\mathcal{F}}_{\overline{\mathbf{k}}\overline{\mathbf{k}}}^{\bm{\pi}}\right)^{-1}
\bm{\mathcal{F}}_{\overline{\mathbf{k}}\mathbf{k}}^{\bm{\pi}}.
\end{equation*}

Define the (nonnegative) confounding penalty
\begin{equation*}
\label{eq:confounding-penalty}
C(\bm{\pi})
:=
\tr\!\Big(
\bm{\mathcal{F}}_{\mathbf{k}\overline{\mathbf{k}}}^{\bm{\pi}}
\left(\bm{\mathcal{F}}_{\overline{\mathbf{k}}\overline{\mathbf{k}}}^{\bm{\pi}}\right)^{-1}
\bm{\mathcal{F}}_{\overline{\mathbf{k}}\mathbf{k}}^{\bm{\pi}}
\Big)
\;\ge\;0.
\end{equation*}
so that
\begin{equation*}
\label{eq:qobj-expand}
\mathcal{B}_{\text{QOED}}(\bm{\pi})
=
\tr\!\big(\bm{\mathcal{I}}_{\mathbf{k}\mid\overline{\mathbf{k}}}^{\bm{\pi}}\big)
=
\tr\!\big(\bm{\mathcal{F}}_{\mathbf{k}\mathbf{k}}^{\bm{\pi}}\big)
-
C(\bm{\pi}).
\end{equation*}

First, we bound $C(\bm{\pi})$ by $\beta\,\tr(\bm{\mathcal{F}}_{\mathbf{k}\mathbf{k}}^{\bm{\pi}})$. Let
\begin{equation*}
\mathbf{M}
:=
\big(\bm{\mathcal{F}}_{\mathbf{k}\mathbf{k}}^{\bm{\pi}}\big)^{-1/2}
\bm{\mathcal{F}}_{\mathbf{k}\overline{\mathbf{k}}}^{\bm{\pi}}
\big(\bm{\mathcal{F}}_{\overline{\mathbf{k}}\overline{\mathbf{k}}}^{\bm{\pi}}\big)^{-1/2}.
\end{equation*}
Then
\begin{equation*}
\begin{aligned}
C(\bm{\pi})
&=
\tr\!\Big(
\bm{\mathcal{F}}_{\mathbf{k}\overline{\mathbf{k}}}^{\bm{\pi}}
\left(\bm{\mathcal{F}}_{\overline{\mathbf{k}}\overline{\mathbf{k}}}^{\bm{\pi}}\right)^{-1}
\bm{\mathcal{F}}_{\overline{\mathbf{k}}\mathbf{k}}^{\bm{\pi}}
\Big) \\
&=
\tr\!\Big(
\left(\bm{\mathcal{F}}_{\mathbf{k}\mathbf{k}}^{\bm{\pi}}\right)^{1/2}\,
\mathbf{M}\mathbf{M}^\top\,
\left(\bm{\mathcal{F}}_{\mathbf{k}\mathbf{k}}^{\bm{\pi}}\right)^{1/2}
\Big)
=
\tr\!\Big(
\bm{\mathcal{F}}_{\mathbf{k}\mathbf{k}}^{\bm{\pi}}\,
\mathbf{M}\mathbf{M}^\top
\Big).
\end{aligned}
\end{equation*}

Since $\bm{\mathcal{F}}_{\mathbf{k}\mathbf{k}}^{\bm{\pi}}\succeq 0$ and $\mathbf{M}\mathbf{M}^\top\succeq 0$, we may use
$\tr(\mathbf{A}\mathbf{B})\le \|\mathbf{B}\|_2\,\tr(\mathbf{A})$ for positive semidefinite (PSD) $\mathbf{A},\mathbf{B}$ to obtain
\begin{equation*}
C(\bm{\pi})
\le
\|\mathbf{M}\mathbf{M}^\top\|_2\;\tr(\bm{\mathcal{F}}_{\mathbf{k}\mathbf{k}}^{\bm{\pi}})
=
\|\mathbf{M}\|_2^2\;\tr(\bm{\mathcal{F}}_{\mathbf{k}\mathbf{k}}^{\bm{\pi}}).
\end{equation*}

By definition of $\beta$ in Eq.~\eqref{eq:eta-beta-main}, we have $\|\mathbf{M}\|_2^2 \le \beta$, hence
\begin{equation*}
\label{eq:qobj-lower}
\mathcal{B}_{\text{QOED}}(\bm{\pi})
=
\tr(\bm{\mathcal{F}}_{\mathbf{k}\mathbf{k}}^{\bm{\pi}})-C(\bm{\pi})
\ge
(1-\beta)\,\tr(\bm{\mathcal{F}}_{\mathbf{k}\mathbf{k}}^{\bm{\pi}}).
\end{equation*}

Second, we relate $\tr(\bm{\mathcal{F}}_{\mathbf{k}\mathbf{k}})$ to $\tr(\bm{\mathcal{F}})$. By block additivity of the trace,
\begin{equation*}
\tr(\bm{\mathcal{F}}^{\bm{\pi}})
=
\tr(\bm{\mathcal{F}}_{\mathbf{k}\mathbf{k}}^{\bm{\pi}})
+
\tr(\bm{\mathcal{F}}_{\overline{\mathbf{k}}\overline{\mathbf{k}}}^{\bm{\pi}})
\le
(1+\eta)\,\tr(\bm{\mathcal{F}}_{\mathbf{k}\mathbf{k}}^{\bm{\pi}}),
\end{equation*}
where the inequality uses the definition of $\eta$. Therefore,
\begin{equation*}
\label{eq:kk-lower-full}
\tr(\bm{\mathcal{F}}_{\mathbf{k}\mathbf{k}}^{\bm{\pi}})
\ge
\frac{1}{1+\eta}\,\tr(\bm{\mathcal{F}}^{\bm{\pi}}).
\end{equation*}

Third, let $\bm{\pi}^\star\in\argmax_{\bm{\pi}}\tr(\bm{\mathcal{F}}^{\bm{\pi}})$ and
$\widehat{\bm{\pi}}\in\argmax_{\bm{\pi}}\tr(\bm{\mathcal{I}}^{\bm{\pi}}_{\mathbf{k}\mid\overline{\mathbf{k}}})$. Since $C(\bm{\pi})\ge 0$,
\begin{equation*}
\label{eq:trace-ge-qoed}
\tr\!\big(\bm{\mathcal{F}}^{\widehat{\bm{\pi}}}\big)
\ge
\tr\!\big(\bm{\mathcal{F}}^{\widehat{\bm{\pi}}}_{\mathbf{k}\mathbf{k}}\big)
\ge
\tr\!\big(\bm{\mathcal{I}}^{\widehat{\bm{\pi}}}_{\mathbf{k}\mid\overline{\mathbf{k}}}\big)
=
\mathcal{B}_{\text{QOED}}(\widehat{\bm{\pi}}).
\end{equation*}

Optimality of $\widehat{\bm{\pi}}$ for $\mathcal{B}_{\text{QOED}}$ implies
$\mathcal{B}_{\text{QOED}}(\widehat{\bm{\pi}})\ge \mathcal{B}_{\text{QOED}}(\bm{\pi}^\star)$. Combining with previous results (applied at $\bm{\pi}^\star$) yields
\begin{equation*}
\begin{aligned}
\tr\!\big(\bm{\mathcal{F}}^{\widehat{\bm{\pi}}}\big)
&\ge
\mathcal{B}_{\text{QOED}}(\widehat{\bm{\pi}})
\ge
\mathcal{B}_{\text{QOED}}(\bm{\pi}^\star)\\
&\ge
(1-\beta)\,\tr\!\big(\bm{\mathcal{F}}^{\bm{\pi}^\star}_{\mathbf{k}\mathbf{k}}\big)
\ge
\frac{1-\beta}{1+\eta}\,\tr\!\big(\bm{\mathcal{F}}^{\bm{\pi}^\star}\big),
\end{aligned}
\end{equation*}
which is exactly Eq.~\eqref{eq:quasiopt-main}. Under invertible block assumptions, $\beta\le 1$ follows from the PSD block structure of the FIM, and $\beta<1$ excludes the degenerate case where critical score directions are perfectly linearly predictable from nuisance directions. Assumption~\ref{assump:regularity} implies $\tr(\bm{\mathcal{F}}^{\bm{\pi}})=\mathbb{E}\|\mathbf{g}\|_2^2$ is uniformly bounded over $\Pi$. Thus $\eta<\infty$ holds whenever the critical block remains non-degenerate over $\Pi$, e.g., $\inf_{\bm{\pi}\in\Pi}\tr(\bm{\mathcal{F}}^{\bm{\pi}}_{\mathbf{k}\mathbf{k}})>0$.

\end{proof}

$\eta$ measures the relative \emph{nuisance information mass} in $\overline{\mathbf{k}}$ compared to $\mathbf{k}$, and $\beta$ measures \emph{cross-block confounding} (how well the critical score can be linearly predicted from the nuisance score). Small $\eta$ and $\beta$ imply that maximizing Q{\footnotesize OED} also approximately maximizes the full BOED trace objective.

\begin{mdframed}[hidealllines=true,backgroundcolor=MaterialBlue10!40,innerleftmargin=.2cm,
  innerrightmargin=.2cm]
\begin{corollary}[Quasi-optimality with adaptive index selection]
\label{cor:adaptive-k}
Let $K:\Pi\to 2^{\{1,\dots,m\}}$ be any (possibly policy-dependent) rule and define
$\mathbf{k}=K(\pi)$ and $\overline{\mathbf{k}}:=\{1,\dots,m\}\setminus \mathbf{k}$.
Let
\begin{equation*}
\widetilde{\mathcal{B}}_{\text{QOED}}(\pi)
:= \tr\!\left(\bm{\mathcal{I}}^{\pi}_{\mathbf{k}\mid\overline{\mathbf{k}}}\right),
\end{equation*}
where $\bm{\mathcal{I}}^{\pi}_{\mathbf{k}\mid\overline{\mathbf{k}}}$ is the Schur complement of
$\bm{\mathcal{F}}^{\pi}$ w.r.t.\ the block $\overline{\mathbf{k}}$.
Assume $\bm{\mathcal{F}}^{\pi}_{\mathbf{k}\mathbf{k}}\succ 0$ and
$\bm{\mathcal{F}}^{\pi}_{\overline{\mathbf{k}}\overline{\mathbf{k}}}\succ 0$ for all $\pi\in\Pi$.
For each $\pi$, define
\begin{equation*}
\eta(\pi)=\frac{\tr(\bm{\mathcal{F}}^{\pi}_{\overline{\mathbf{k}}\overline{\mathbf{k}}})}
               {\tr(\bm{\mathcal{F}}^{\pi}_{\mathbf{k}\mathbf{k}})},
\quad
\beta(\pi)=
\left\|
\big(\bm{\mathcal{F}}^{\pi}_{\mathbf{k}\mathbf{k}}\big)^{-1/2}
\bm{\mathcal{F}}^{\pi}_{\mathbf{k}\overline{\mathbf{k}}}
\big(\bm{\mathcal{F}}^{\pi}_{\overline{\mathbf{k}}\overline{\mathbf{k}}}\big)^{-1/2}
\right\|_2^2.
\end{equation*}

If $\eta(\pi)\le \bar{\eta}<\infty$ and $\beta(\pi)\le \bar{\beta}<1$ for all $\pi\in\Pi$, then any maximizer
$\widehat{\pi}\in\arg\max_{\pi\in\Pi}\widetilde{\mathcal{B}}_{\text{QOED}}(\pi)$ satisfies
\begin{equation*}
\tr\!\big(\bm{\mathcal{F}}^{\widehat{\pi}}\big)
\;\ge\;
\frac{1-\bar{\beta}}{1+\bar{\eta}}\;
\max_{\pi\in\Pi}\tr\!\big(\bm{\mathcal{F}}^{\pi}\big).
\end{equation*}
\end{corollary}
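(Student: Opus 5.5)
The plan is to rerun the three-step argument from the proof of Theorem~\ref{thm:qoed-quasiopt-trace} pointwise in $\pi$, with $\mathbf{k}=K(\pi)$ frozen at each policy. None of the per-policy inequalities used there depend on $\mathbf{k}$ being the same across policies. The only place a fixed $\mathbf{k}$ entered was the comparison between $\widehat{\pi}$ and $\pi^\star$, and that comparison only needs scalar values of the objective, which stay well defined when $\mathbf{k}$ varies.

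\textbf{Step 1 (pointwise lower bound).} Fix any $\pi\in\Pi$ and write $\mathbf{k}=K(\pi)$. Set $\mathbf{M}_\pi=(\bm{\mathcal{F}}^{\pi}_{\mathbf{k}\mathbf{k}})^{-1/2}\bm{\mathcal{F}}^{\pi}_{\mathbf{k}\overline{\mathbf{k}}}(\bm{\mathcal{F}}^{\pi}_{\overline{\mathbf{k}}\overline{\mathbf{k}}})^{-1/2}$. Then the confounding penalty is
\[
C(\pi)=\tr\big(\bm{\mathcal{F}}^{\pi}_{\mathbf{k}\mathbf{k}}\mathbf{M}_\pi\mathbf{M}_\pi^\top\big).
\]
The PSD inequality $\tr(\mathbf{A}\mathbf{B})\le\|\mathbf{B}\|_2\tr(\mathbf{A})$ bounds this by $\beta(\pi)\,\tr(\bm{\mathcal{F}}^{\pi}_{\mathbf{k}\mathbf{k}})$. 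It follows that
\[
\widetilde{\mathcal{B}}_{\text{QOED}}(\pi)\ge(1-\bar\beta)\,\tr(\bm{\mathcal{F}}^{\pi}_{\mathbf{k}\mathbf{k}}).
\]

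\textbf{Step 2 (trace split).} The trace is additive over any block partition, so $\tr(\bm{\mathcal{F}}^\pi)=\tr(\bm{\mathcal{F}}^\pi_{\mathbf{k}\mathbf{k}})+\tr(\bm{\mathcal{F}}^\pi_{\overline{\mathbf{k}}\overline{\mathbf{k}}})$ for every $\mathbf{k}$. With $\eta(\pi)\le\bar\eta$ this gives $\tr(\bm{\mathcal{F}}^{\pi}_{\mathbf{k}\mathbf{k}})\ge\tr(\bm{\mathcal{F}}^\pi)/(1+\bar\eta)$. Combining with Step 1,
\[
\widetilde{\mathcal{B}}_{\text{QOED}}(\pi)\ge\frac{1-\bar\beta}{1+\bar\eta}\tr(\bm{\mathcal{F}}^\pi)\quad\text{for all }\pi.
\]

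\textbf{Step 3 (upper bound at $\widehat\pi$ and comparison).} At $\widehat\pi$, with its own index set $K(\widehat\pi)$, we have $C(\widehat\pi)\ge 0$. The Schur complement term is a PSD quadratic form because $\bm{\mathcal{F}}^{\widehat\pi}_{\overline{\mathbf{k}}\overline{\mathbf{k}}}\succ0$, and the diagonal block $\bm{\mathcal{F}}^{\widehat\pi}_{\overline{\mathbf{k}}\overline{\mathbf{k}}}$ is PSD. Therefore
\[
\tr(\bm{\mathcal{F}}^{\widehat\pi})\ge\tr(\bm{\mathcal{F}}^{\widehat\pi}_{\mathbf{k}\mathbf{k}})\ge\widetilde{\mathcal{B}}_{\text{QOED}}(\widehat\pi).
\]
Let $\pi^\star$ attain $\max_\pi\tr(\bm{\mathcal{F}}^\pi)$; if the maximum is not attained, use a maximizing sequence and pass to the supremum. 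Chaining gives
\[
\tr(\bm{\mathcal{F}}^{\widehat\pi})\ge\widetilde{\mathcal{B}}_{\text{QOED}}(\widehat\pi)\ge\widetilde{\mathcal{B}}_{\text{QOED}}(\pi^\star)\ge\frac{1-\bar\beta}{1+\bar\eta}\tr(\bm{\mathcal{F}}^{\pi^\star}).
\]
Here the bound at $\pi^\star$ is Step 2 applied with $K(\pi^\star)$.

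\textbf{Main obstacle.} The one point to check is that the middle inequality, $\widetilde{\mathcal{B}}_{\text{QOED}}(\widehat\pi)\ge\widetilde{\mathcal{B}}_{\text{QOED}}(\pi^\star)$, remains legitimate when the two sides use different index sets. It does, because $\widehat\pi$ maximizes the scalar function $\pi\mapsto\widetilde{\mathcal{B}}_{\text{QOED}}(\pi)$ itself, with $\mathbf{k}=K(\pi)$ built into that function. The argument never compares Schur complements across different partitions. All remaining bounds are evaluated at a single policy with that policy's own partition, and they hold uniformly because of the assumptions $\eta(\pi)\le\bar\eta$ and $\beta(\pi)\le\bar\beta$.
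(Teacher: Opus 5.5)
Your proof is correct and matches the paper's intended argument: the paper gives no separate proof of this corollary, and the proof of Theorem~\ref{thm:qoed-quasiopt-trace} does the work once you rerun its three steps pointwise with $\mathbf{k}=K(\pi)$, as you do. Those steps are the bound $C(\pi)\le\|\mathbf{M}\|_2^2\,\tr(\bm{\mathcal{F}}^{\pi}_{\mathbf{k}\mathbf{k}})$, the block-trace split controlled by $\eta$, and the chain through optimality of $\widehat{\pi}$. Your observation that the only cross-policy comparison involves scalar values of $\widetilde{\mathcal{B}}_{\text{QOED}}$, so Schur complements over different partitions are never compared, is exactly why the extension goes through.
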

\end{mdframed}

Moreover, we show that the agnostic objective is not quasi-optimal in general.

\begin{mdframed}[hidealllines=true,backgroundcolor=MaterialBlue10!40,innerleftmargin=.2cm,
  innerrightmargin=.2cm]
\begin{proposition}[The agnostic objective is not quasi-optimal in general]
\label{prop:naive-not-quasiopt}
Fix any nonempty index set $\mathbf{k}\subseteq\{1,\dots,m\}$ and define the agnostic objective
$\mathcal{B}_{\text{Agnostic}}(\bm{\pi}) := \tr(\bm{\mathcal{F}}^{\bm{\pi}}_{\mathbf{k}\mathbf{k}})$.
There does not exist a universal constant $\rho>0$ such that for all instances
(policy classes $\Pi$ and Fisher maps $\bm{\pi}\mapsto \bm{\mathcal{F}}^{\bm{\pi}}\succeq 0$),
every maximizer $\widehat{\bm{\pi}}\in\argmax_{\bm{\pi}}\mathcal{B}_{\text{Agnostic}}(\bm{\pi})$ satisfies
\[
\tr(\bm{\mathcal{F}}^{\widehat{\bm{\pi}}})
\;\ge\;
\rho\cdot \max_{\bm{\pi}\in\Pi}\tr(\bm{\mathcal{F}}^{\bm{\pi}}).
\]
In fact, for any $\rho\in(0,1)$ there exists an instance with two policies for which the ratio
$\tr(\bm{\mathcal{F}}^{\widehat{\bm{\pi}}})/\max_{\bm{\pi}}\tr(\bm{\mathcal{F}}^{\bm{\pi}})$ is smaller than $\rho$.
\end{proposition}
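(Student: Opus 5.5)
The plan is an explicit counterexample with two policies, $\Pi=\{\bm{\pi}_1,\bm{\pi}_2\}$, where the Fisher maps are chosen directly as PSD matrices. This is allowed because the statement quantifies over all instances, i.e.\ arbitrary Fisher maps $\bm{\pi}\mapsto\bm{\mathcal{F}}^{\bm{\pi}}\succeq 0$. Given $\rho\in(0,1)$, we arrange two properties. First, $\bm{\pi}_1$ has slightly larger critical-block trace, so it is the \emph{unique} maximizer of $\mathcal{B}_{\text{Agnostic}}$. Second, $\bm{\pi}_2$ carries a huge amount of information in the nuisance block $\overline{\mathbf{k}}$, so $\tr(\bm{\mathcal{F}}^{\bm{\pi}_2})\gg\tr(\bm{\mathcal{F}}^{\bm{\pi}_1})$.

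Concretely, write $k=|\mathbf{k}|\ge 1$ and permute coordinates so that $\mathbf{k}$ comes first. Two cases arise.

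\emph{Case $\overline{\mathbf{k}}\neq\emptyset$.} Set
\begin{equation*}
\bm{\mathcal{F}}^{\bm{\pi}_1}=\operatorname{diag}(2\,\mathbf{I}_{k},\,\epsilon\,\mathbf{I}_{m-k}),\qquad \bm{\mathcal{F}}^{\bm{\pi}_2}=\operatorname{diag}(\mathbf{I}_{k},\,L\,\mathbf{I}_{m-k}),
\end{equation*}
with $\epsilon\ge 0$ and $L>0$. Both matrices are PSD, and for $\epsilon,L>0$ all diagonal blocks are positive definite. Then $\mathcal{B}_{\text{Agnostic}}(\bm{\pi}_1)=2k>k=\mathcal{B}_{\text{Agnostic}}(\bm{\pi}_2)$, so every maximizer is $\widehat{\bm{\pi}}=\bm{\pi}_1$. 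The ratio is
\begin{equation*}
\frac{2k+\epsilon(m-k)}{\max\{2k+\epsilon(m-k),\;k+L(m-k)\}}.
\end{equation*}
Taking $L$ large makes the denominator $k+L(m-k)$, and the ratio tends to $0$ as $L\to\infty$. Choosing $L$ large enough therefore makes it smaller than $\rho$.

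\emph{Case $\mathbf{k}=\{1,\dots,m\}$.} Here $\mathcal{B}_{\text{Agnostic}}=\tr(\bm{\mathcal{F}})$ and the agnostic objective is trivially optimal. So the counterexample must use a nonempty complement, and I read the statement as fixing a proper subset $\mathbf{k}$ in a dimension $m>|\mathbf{k}|$. Equivalently, the instance is free to choose $m$, since the statement quantifies over instances. I will state this reading explicitly in the proof.

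Finally, I will remark on why this does not contradict Theorem~\ref{thm:qoed-quasiopt-trace}. In this instance $\eta\ge L(m-k)/k$ is unbounded, and the example shows the agnostic objective lacks any such guarantee. The main obstacle is only the degenerate full-index case, which the reading above handles; the computations themselves are routine.
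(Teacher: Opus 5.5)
Your proof is correct and is essentially the paper's construction: two policies with diagonal Fisher matrices, where the agnostic maximizer wins the critical block by a fixed margin while the other policy's nuisance block is sent to infinity. The paper takes $m=2$, $\mathbf{k}=\{1\}$, $\bm{\mathcal{F}}^{\bm{\pi}_A}=\mathrm{diag}(1+\delta,0)$, $\bm{\mathcal{F}}^{\bm{\pi}_B}=\mathrm{diag}(1,M)$ with $M\to\infty$, and your version is, if anything, more faithful to the statement: it handles an arbitrary proper $\mathbf{k}$, keeps all diagonal blocks positive definite via $\epsilon>0$ (so the instance meets the positive-definiteness assumptions of Theorem~\ref{thm:qoed-quasiopt-trace}), and correctly flags that the literal ``any nonempty $\mathbf{k}$'' claim fails for $\mathbf{k}=\{1,\dots,m\}$, a case the paper's proof sidesteps by choosing $\mathbf{k}$ itself.
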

\end{mdframed}

\begin{proof}
It suffices to construct, for any target $\rho\in(0,1)$, an instance where the agnostic maximizer achieves less than a $\rho$-fraction of the full-trace optimum.

Consider $m=2$ parameters and $\mathbf{k}=\{1\}$.
Let the policy class contain exactly two policies, $\Pi=\{\bm{\pi}_A,\bm{\pi}_B\}$, and define the
corresponding Fisher information matrices by
\begin{equation*}
\bm{\mathcal{F}}^{\bm{\pi}_A}
=
\begin{bmatrix}
1+\delta & 0\\
0 & 0
\end{bmatrix},
\qquad
\bm{\mathcal{F}}^{\bm{\pi}_B}
=
\begin{bmatrix}
1 & 0\\
0 & M
\end{bmatrix},
\end{equation*}
where $\delta>0$ is fixed and $M>0$ will be chosen.

Both matrices are symmetric positive semidefinite and are valid Fisher matrices; for example, they can be realized
as covariance matrices of a score random vector $\mathbf{g}\sim \mathcal N(0,\bm{\mathcal{F}}^{\bm{\pi}})$.

First, the agnostic objective is $\mathcal{B}_{\text{Agnostic}}(\bm{\pi})=\tr(\bm{\mathcal{F}}^{\bm{\pi}}_{\mathbf{k}\mathbf{k}})
=(\bm{\mathcal{F}}^{\bm{\pi}})_{11}$.
Thus,
\begin{equation*}
\mathcal{B}_{\text{Agnostic}}(\bm{\pi}_A)=1+\delta
\;>\;
1=\mathcal{B}_{\text{Agnostic}}(\bm{\pi}_B),
\end{equation*}
so the unique agnostic maximizer is $\widehat{\bm{\pi}}=\bm{\pi}_A$.

Second, the full-trace objective is $\tr(\bm{\mathcal{F}}^{\bm{\pi}})$, so
\begin{equation*}
\tr(\bm{\mathcal{F}}^{\bm{\pi}_A})=1+\delta,
\qquad
\tr(\bm{\mathcal{F}}^{\bm{\pi}_B})=1+M.
\end{equation*}
For any $M>\delta$, the full BOED maximizer is $\bm{\pi}^\star=\bm{\pi}_B$.

Third, we show the arbitrarily small approximation ratio. The achieved ratio is
\begin{equation*}
\frac{\tr(\bm{\mathcal{F}}^{\widehat{\bm{\pi}}})}{\tr(\bm{\mathcal{F}}^{\bm{\pi}^\star})}
=
\frac{1+\delta}{1+M}.
\end{equation*}

Choose $M$ large enough so that $(1+\delta)/(1+M)<\rho$, e.g.
$M>\frac{1+\delta}{\rho}-1$.
Then the agnostic optimizer achieves less than a $\rho$-fraction of the full BOED optimum.
Since $\rho\in(0,1)$ was arbitrary, no universal constant-factor (quasi-optimality) guarantee is possible.
\end{proof}

\subsection{Eigen-subspace projection identity for the score}
\label{apdx:proof-score-subspace}

This appendix records an exact identity used by the \emph{eigen-subspace viewpoint} (active subspaces).
Let $\mathbf{g}=\nabla_{\bm{\phi}}\log q_{\bm{\theta}}(\bm{\tau}\vert \bm{\phi},\bm{\pi})$ be the trajectory score and
$\bm{\mathcal{F}}_{\bm{\phi}}=\mathbb{E}[\mathbf{g}\mathbf{g}^\top]\succeq \mathbf{0}$ the Fisher information matrix (FIM).
If we project $\mathbf{g}$ onto an \emph{eigenspace} of $\bm{\mathcal{F}}_{\bm{\phi}}$, then the expected squared norm of the discarded
(residual) score equals the \emph{omitted eigenvalue mass}.

To avoid confusion with the coordinate index set $\mathbf{k}$ in the main text, we use
$\mathbf{o}\subseteq\{1,\dots,m\}$ to denote an \emph{eigen-index set} in this appendix (i.e., it selects columns of $\mathbf{W}$).
If one instead truncates to a coordinate subset, the tail-eigenvalue identity generally does not hold; see Remark~\ref{rem:coord-truncation}.

\begin{mdframed}[hidealllines=true,backgroundcolor=MaterialBlue10!40,innerleftmargin=.2cm,
  innerrightmargin=.2cm]
\begin{proposition}[Eigen-subspace score truncation equals tail eigenvalue mass]
\label{prop:eig-subspace-score-trunc}
Fix $(\bm{\phi},\bm{\pi})$ and let $\bm{\mathcal{F}}_{\bm{\phi}}=\mathbf{W}\mathbf{\Lambda}\mathbf{W}^\top$ be an eigen-decomposition with
$\mathbf{\Lambda}=\mathrm{diag}(\lambda_1,\dots,\lambda_m)$ and $\lambda_1\ge\cdots\ge \lambda_m\ge 0$.
For any eigen-index set $\mathbf{o}\subseteq\{1,\dots,m\}$, let $\mathbf{W}_{\mathbf{o}}=\mathbf{W}[:,\mathbf{o}]$
and define the orthogonal projector $\mathbf{P}_{\mathbf{o}}=\mathbf{W}_{\mathbf{o}}\mathbf{W}_{\mathbf{o}}^\top$.
Then the discarded score energy satisfies the exact identity
\begin{equation*}
\label{eq:apdx-score-subspace-error}
\mathbb{E}\!\left[\left\|(\mathbf{I}-\mathbf{P}_{\mathbf{o}})\mathbf{g}\right\|_2^2\right]
=
\tr\!\left((\mathbf{I}-\mathbf{P}_{\mathbf{o}})\bm{\mathcal{F}}_{\bm{\phi}}\right)
=
\sum_{i\notin \mathbf{o}} \lambda_i.
\end{equation*}
\end{proposition}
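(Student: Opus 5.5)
The plan is to prove the two equalities in turn, first moving the expectation inside a trace and then evaluating the trace in the eigenbasis of $\bm{\mathcal{F}}_{\bm{\phi}}$.

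\textbf{First equality.} Since $\mathbf{W}$ is orthonormal, $\mathbf{P}_{\mathbf{o}}=\mathbf{W}_{\mathbf{o}}\mathbf{W}_{\mathbf{o}}^\top$ is an orthogonal projector: it is symmetric and idempotent. The same holds for $\mathbf{Q}:=\mathbf{I}-\mathbf{P}_{\mathbf{o}}$, so $\mathbf{Q}^\top\mathbf{Q}=\mathbf{Q}$. We then write
\[
\|\mathbf{Q}\mathbf{g}\|_2^2=\mathbf{g}^\top\mathbf{Q}^\top\mathbf{Q}\mathbf{g}=\mathbf{g}^\top\mathbf{Q}\mathbf{g}=\tr(\mathbf{Q}\,\mathbf{g}\mathbf{g}^\top).
\]
Taking expectations and using linearity of trace and expectation gives $\mathbb{E}\|\mathbf{Q}\mathbf{g}\|_2^2=\tr(\mathbf{Q}\,\mathbb{E}[\mathbf{g}\mathbf{g}^\top])=\tr(\mathbf{Q}\bm{\mathcal{F}}_{\bm{\phi}})$, by the definition of the FIM in Eq.~\eqref{eqn:fim}. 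The interchange of expectation and trace is legitimate because the entries of $\mathbf{g}\mathbf{g}^\top$ are integrable. Assumption~\ref{assump:regularity} gives this through the bounded score norm, and finiteness of $\bm{\mathcal{F}}_{\bm{\phi}}$ alone also suffices.

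\textbf{Second equality.} We substitute $\bm{\mathcal{F}}_{\bm{\phi}}=\mathbf{W}\mathbf{\Lambda}\mathbf{W}^\top$ and $\mathbf{I}=\mathbf{W}\mathbf{W}^\top$, and set $\mathbf{D}_{\mathbf{o}}=\mathbf{W}^\top\mathbf{P}_{\mathbf{o}}\mathbf{W}$. By orthonormality of the columns of $\mathbf{W}$, $\mathbf{W}^\top\mathbf{W}_{\mathbf{o}}$ is the column selector $\mathbf{E}_{\mathbf{o}}$, so $\mathbf{D}_{\mathbf{o}}=\mathbf{E}_{\mathbf{o}}\mathbf{E}_{\mathbf{o}}^\top$. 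This is the diagonal $0/1$ matrix with ones exactly at the indices in $\mathbf{o}$. Using cyclicity of the trace,
\[
\tr(\mathbf{Q}\bm{\mathcal{F}}_{\bm{\phi}})=\tr\!\big(\mathbf{W}(\mathbf{I}-\mathbf{D}_{\mathbf{o}})\mathbf{W}^\top\mathbf{W}\mathbf{\Lambda}\mathbf{W}^\top\big)=\tr\!\big((\mathbf{I}-\mathbf{D}_{\mathbf{o}})\mathbf{\Lambda}\big)=\sum_{i\notin\mathbf{o}}\lambda_i .
\]
Equivalently, the second equality follows from $\mathbf{Q}\bm{\mathcal{F}}_{\bm{\phi}}=\sum_{i\notin\mathbf{o}}\lambda_i\mathbf{w}_i\mathbf{w}_i^\top$ and $\tr(\mathbf{w}_i\mathbf{w}_i^\top)=1$. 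One can also note that $\mathbb{E}[(\mathbf{w}_i^\top\mathbf{g})^2]=\lambda_i$ by Lemma~\ref{lem:fim-direction} and sum over $i\notin\mathbf{o}$, since $\|\mathbf{Q}\mathbf{g}\|^2=\sum_{i\notin\mathbf{o}}(\mathbf{w}_i^\top\mathbf{g})^2$.

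\textbf{Main obstacle.} There is no real obstacle. The only points needing care are the integrability that justifies exchanging expectation and trace, and the use of orthonormality of $\mathbf{W}$, which reduces the projector to a diagonal selector in the eigenbasis. The ordering $\lambda_1\ge\cdots\ge\lambda_m$ plays no role, so the identity holds for an arbitrary eigen-index set $\mathbf{o}$, as the statement claims.
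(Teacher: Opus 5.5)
Your proof is correct and is essentially the paper's argument. The paper rotates to $\tilde{\mathbf{g}}=\mathbf{W}^\top\mathbf{g}$, shows $\|(\mathbf{I}-\mathbf{P}_{\mathbf{o}})\mathbf{g}\|_2^2=\|\tilde{\mathbf{g}}_{\overline{\mathbf{o}}}\|_2^2$, and uses $\mathbb{E}[\tilde{\mathbf{g}}\tilde{\mathbf{g}}^\top]=\mathbf{\Lambda}$ (your ``one can also note'' variant), then derives the trace form exactly as in your first step; your main line instead evaluates $\tr((\mathbf{I}-\mathbf{D}_{\mathbf{o}})\mathbf{\Lambda})$ at the matrix level, which is a reordering of the same steps rather than a different idea.
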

\end{mdframed}

\begin{proof}
Let $\overline{\mathbf{o}}:=\{1,\dots,m\}\setminus \mathbf{o}$ and define
$\mathbf{W}_{\overline{\mathbf{o}}}:=\mathbf{W}[:,\overline{\mathbf{o}}]$ so that
$\mathbf{W}=[\mathbf{W}_{\mathbf{o}}\ \mathbf{W}_{\overline{\mathbf{o}}}]$ and
$\mathbf{I}-\mathbf{P}_{\mathbf{o}}=\mathbf{W}_{\overline{\mathbf{o}}}\mathbf{W}_{\overline{\mathbf{o}}}^\top$.

First, since $\mathbf{W}$ is orthonormal, define $\tilde{\mathbf{g}}:=\mathbf{W}^\top \mathbf{g}$ and partition
\begin{equation*}
\tilde{\mathbf{g}}=
\begin{bmatrix}
\tilde{\mathbf{g}}_{\mathbf{o}}\\
\tilde{\mathbf{g}}_{\overline{\mathbf{o}}}
\end{bmatrix}
=
\begin{bmatrix}
\mathbf{W}_{\mathbf{o}}^\top \mathbf{g}\\
\mathbf{W}_{\overline{\mathbf{o}}}^\top \mathbf{g}
\end{bmatrix}.
\end{equation*}

Second, we express the residual and its norm. Using $\mathbf{I}-\mathbf{P}_{\mathbf{o}}=\mathbf{W}_{\overline{\mathbf{o}}}\mathbf{W}_{\overline{\mathbf{o}}}^\top$,
\begin{equation*}
(\mathbf{I}-\mathbf{P}_{\mathbf{o}})\mathbf{g}
=
\mathbf{W}_{\overline{\mathbf{o}}}\mathbf{W}_{\overline{\mathbf{o}}}^\top \mathbf{g}
=
\mathbf{W}_{\overline{\mathbf{o}}}\tilde{\mathbf{g}}_{\overline{\mathbf{o}}}.
\end{equation*}

Because $\mathbf{W}_{\overline{\mathbf{o}}}$ has orthonormal columns,
$\| \mathbf{W}_{\overline{\mathbf{o}}}\mathbf{z}\|_2=\|\mathbf{z}\|_2$, hence
$\|(\mathbf{I}-\mathbf{P}_{\mathbf{o}})\mathbf{g}\|_2^2=\|\tilde{\mathbf{g}}_{\overline{\mathbf{o}}}\|_2^2$.
Third, we take expectations using the eigen-decomposition. Since $\bm{\mathcal{F}}_{\bm{\phi}}=\mathbb{E}[\mathbf{g}\mathbf{g}^\top]$,
\begin{equation*}
\mathbb{E}[\tilde{\mathbf{g}}\tilde{\mathbf{g}}^\top]
=
\mathbb{E}[\mathbf{W}^\top \mathbf{g}\mathbf{g}^\top \mathbf{W}]
=
\mathbf{W}^\top \bm{\mathcal{F}}_{\bm{\phi}} \mathbf{W}
=
\mathbf{\Lambda}.
\end{equation*}

Therefore $\mathbb{E}[\tilde{\mathbf{g}}_{\overline{\mathbf{o}}}\tilde{\mathbf{g}}_{\overline{\mathbf{o}}}^\top]
=\mathbf{\Lambda}_{\overline{\mathbf{o}}}=\mathrm{diag}(\{\lambda_i\}_{i\notin\mathbf{o}})$ and
\begin{equation*}
\mathbb{E}\!\left[\left\|\tilde{\mathbf{g}}_{\overline{\mathbf{o}}}\right\|_2^2\right]
=
\tr(\mathbf{\Lambda}_{\overline{\mathbf{o}}})
=
\sum_{i\notin\mathbf{o}}\lambda_i.
\end{equation*}
Combining with the second step gives the last equality. Finally, the trace form follows from
$\mathbb{E}\|(\mathbf{I}-\mathbf{P}_{\mathbf{o}})\mathbf{g}\|_2^2
=\mathbb{E}[\mathbf{g}^\top(\mathbf{I}-\mathbf{P}_{\mathbf{o}})\mathbf{g}]
=\tr((\mathbf{I}-\mathbf{P}_{\mathbf{o}})\mathbb{E}[\mathbf{g}\mathbf{g}^\top])$,
which yields the result.
\end{proof}

\begin{mdframed}[hidealllines=true,backgroundcolor=MaterialBlueGray10!40,innerleftmargin=.2cm,
  innerrightmargin=.2cm]
\begin{remark}[Coordinate truncation is different]
\label{rem:coord-truncation}
If one truncates to a coordinate subset $\mathbf{k}\subseteq\{1,\dots,m\}$, the projector is instead
$\mathbf{P}_{\mathbf{k}}:=\mathbf{S}_{\mathbf{k}}^\top\mathbf{S}_{\mathbf{k}}$ where $\mathbf{S}_{\mathbf{k}}$ selects coordinates
(so $\mathbf{S}_{\mathbf{k}}\mathbf{g}=\mathbf{g}_{\mathbf{k}}$). In this case
\begin{equation*}
\mathbb{E} \left[\|(\mathbf{I}-\mathbf{P}_{\mathbf{k}})\mathbf{g}\|_2^2\right]
=
\tr \left((\mathbf{I}-\mathbf{P}_{\mathbf{k}})\bm{\mathcal{F}}_{\bm{\phi}} \right)
=
\tr(\bm{\mathcal{F}}_{\bm{\phi}})-\tr(\bm{\mathcal{F}}_{\mathbf{k}\mathbf{k}}),
\end{equation*}
which generally does \emph{not} equal a tail eigenvalue sum unless the leading eigenspace is (approximately) aligned with
the coordinate axes. Moreover, by the Ky Fan maximum principle, for any rank-$r$ orthogonal projector $\mathbf{P}$,
$\tr(\mathbf{P}\bm{\mathcal{F}})\le \sum_{i=1}^{r}\lambda_i$.
Since $\mathbf{P}_{\mathbf{k}}$ has rank $|\mathbf{k}|$, this implies
$\tr((\mathbf{I}-\mathbf{P}_{\mathbf{k}})\bm{\mathcal{F}})\ge \sum_{i=|\mathbf{k}|+1}^m \lambda_i$.
\end{remark}
\end{mdframed}

For completeness, we also include a standard subspace-approximation bound from the active-subspaces literature~\cite{ActiveSubspaces}.

\begin{mdframed}[hidealllines=true,backgroundcolor=MaterialBlue10!40,innerleftmargin=.2cm,
  innerrightmargin=.2cm]
\begin{theorem}[Subspace approximation bound]
\label{thm:subspace-appro-bnd}
Let $f:\mathbb{R}^m\to\mathbb{R}$ be continuously differentiable, and let $\bm{\rho}$ be a probability density on $\mathbb{R}^m$
that satisfies a Poincar\'e inequality with constant $\delta_{\bm{\rho}}>0$.
Define
\begin{equation*}
    \bm{\mathcal{F}}
    :=
    \mathbb{E}_{\bm{\phi}\sim\bm{\rho}}
    \big[
        \nabla f(\bm{\phi}) \nabla f(\bm{\phi})^\top
    \big],
\end{equation*}
and let $\bm{\mathcal{F}}=\mathbf{W}\mathbf{\Lambda} \mathbf{W}^\top$ be its eigen-decomposition with sorted
$\mathbf{\Lambda}=\mathrm{diag}(\lambda_1,\dots,\lambda_m)$. Partition $\mathbf{W}=[\mathbf{W}_1\; \mathbf{W}_2]$ where
$\mathbf{W}_1\in\mathbb{R}^{m\times n}$ contains the first $n$ eigenvectors.
Define the conditional expectation ridge approximation
\begin{equation*}
\bar{f}(\mathbf{y}) := \mathbb{E}_{\bm{\phi}\sim\bm{\rho}}\!\left[f(\bm{\phi}) \,\middle|\, \mathbf{W}_1^\top \bm{\phi}=\mathbf{y}\right].
\end{equation*}
Then
\begin{equation*}
    \mathbb{E}_{\bm{\phi}\sim\bm{\rho}}
    \Big[
        \big(
            f(\bm{\phi})
            -
            \bar{f}(\mathbf{W}_1^\top \bm{\phi})
        \big)^2
    \Big]
    \le
    \delta_{\bm{\rho}}
    \sum_{i=n+1}^m \lambda_i.
\end{equation*}
\end{theorem}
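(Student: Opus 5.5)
The plan is to prove Theorem~\ref{thm:subspace-appro-bnd} with the standard active-subspace argument. We reduce to a Poincar\'e inequality on the conditional distribution of the inactive variables and then integrate the gradient bound. Write the rotated coordinates $\mathbf{y}=\mathbf{W}_1^\top\bm{\phi}$ and $\mathbf{z}=\mathbf{W}_2^\top\bm{\phi}$, so that $\bm{\phi}=\mathbf{W}_1\mathbf{y}+\mathbf{W}_2\mathbf{z}$. Since $\mathbf{W}$ is orthogonal, $\bm{\rho}$ induces a joint density $\rho(\mathbf{y},\mathbf{z})$ with a conditional density $\rho(\mathbf{z}\mid\mathbf{y})$. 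Then $\bar f(\mathbf{y})=\int f(\mathbf{W}_1\mathbf{y}+\mathbf{W}_2\mathbf{z})\,\rho(\mathbf{z}\mid\mathbf{y})\,d\mathbf{z}$ is the conditional mean.

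\textbf{Step 1: disintegration.} By the tower property,
\[
\mathbb{E}\big[(f-\bar f(\mathbf{W}_1^\top\bm{\phi}))^2\big]=\mathbb{E}_{\mathbf{y}}\Big[\mathrm{Var}_{\mathbf{z}\mid\mathbf{y}}\big(f(\mathbf{W}_1\mathbf{y}+\mathbf{W}_2\mathbf{z})\big)\Big].
\]

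\textbf{Step 2: conditional Poincar\'e.} For each fixed $\mathbf{y}$, apply a Poincar\'e inequality to the function $\mathbf{z}\mapsto f(\mathbf{W}_1\mathbf{y}+\mathbf{W}_2\mathbf{z})$ under $\rho(\cdot\mid\mathbf{y})$. Its gradient is $\mathbf{W}_2^\top\nabla f$, which gives
\[
\mathrm{Var}_{\mathbf{z}\mid\mathbf{y}}(f)\le\delta_{\bm{\rho}}\,\mathbb{E}_{\mathbf{z}\mid\mathbf{y}}\big[\|\mathbf{W}_2^\top\nabla f\|^2\big].
\]
This is the step I expect to be the main obstacle. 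The hypothesis is a Poincar\'e inequality for $\bm{\rho}$ itself, not uniformly for its conditionals. Following the active-subspaces literature, I would interpret the hypothesis as including this conditional version with the same constant $\delta_{\bm{\rho}}$. It holds automatically in the common cases: $\bm{\rho}$ Gaussian, where the conditionals are Gaussian with covariance dominated by the full covariance; $\bm{\rho}$ log-concave, where the conditionals remain log-concave with comparable constants; and $\bm{\rho}$ uniform on a convex set, where the constant scales with diameter via Payne--Weinberger. I would state this interpretation explicitly and note that the general case needs it as an assumption.

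\textbf{Step 3: integrate and identify the eigenvalue tail.} Taking $\mathbb{E}_{\mathbf{y}}$ gives
\[
\delta_{\bm{\rho}}\,\mathbb{E}\big[\nabla f^\top\mathbf{W}_2\mathbf{W}_2^\top\nabla f\big]=\delta_{\bm{\rho}}\tr(\mathbf{W}_2^\top\bm{\mathcal{F}}\mathbf{W}_2)=\delta_{\bm{\rho}}\sum_{i=n+1}^m\lambda_i.
\]
The first equality uses linearity of trace and expectation. The second uses $\mathbf{W}_2^\top\mathbf{W}\mathbf{\Lambda}\mathbf{W}^\top\mathbf{W}_2=\mathrm{diag}(\lambda_{n+1},\dots,\lambda_m)$. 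This is the same computation as in Proposition~\ref{prop:eig-subspace-score-trunc}, with $\nabla f$ in place of the score.
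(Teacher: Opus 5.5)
Your proposal follows the paper's proof step for step: rotate to $(\mathbf{y},\mathbf{z})$, apply the law of total variance, use a Poincar\'e inequality for $\mathbf{z}\mapsto f(\mathbf{W}_1\mathbf{y}+\mathbf{W}_2\mathbf{z})$ via the chain-rule gradient $\mathbf{W}_2^\top\nabla f$, and finish with $\mathbb{E}\|\mathbf{W}_2^\top\nabla f\|_2^2=\tr(\mathbf{W}_2^\top\bm{\mathcal{F}}\mathbf{W}_2)=\sum_{i>n}\lambda_i$. The obstacle you flag is real, and the paper resolves it exactly as you propose, by explicitly assuming that the conditionals of $\mathbf{z}$ given $\mathbf{y}$ satisfy a Poincar\'e inequality with constant $\delta_{\bm{\rho}}$; a Poincar\'e inequality for $\bm{\rho}$ alone does not suffice, since the uniform measure on an annulus, whose horizontal slices are disconnected, gives a counterexample.
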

\end{mdframed}

\begin{proof}
Let $\bm{\mathcal{F}}=\mathbb{E}_{\bm{\phi}\sim\bm{\rho}}\!\left[\nabla f(\bm{\phi})\,\nabla f(\bm{\phi})^{\top}\right]$
and let $\bm{\mathcal{F}}=\mathbf{W}\mathbf{\Lambda} \mathbf{W}^{\top}$ with $\mathbf{W}=[\mathbf{W}_1\;\mathbf{W}_2]$, where
$\mathbf{W}_1\in\mathbb{R}^{m\times n}$ contains the first $n$ eigenvectors and $\mathbf{W}_2\in\mathbb{R}^{m\times(m-n)}$ contains the remaining eigenvectors.
Define the orthogonal projections
\begin{equation*}
    \mathbf{P}_1 = \mathbf{W}_1 \mathbf{W}_1^{\top},
    \qquad
    \mathbf{P}_2 = \mathbf{W}_2 \mathbf{W}_2^{\top} = \mathbf{I} - \mathbf{P}_1.
\end{equation*}

First, we work in rotated coordinates. Because $\mathbf{W}$ is orthonormal, every $\bm{\phi}\in\mathbb{R}^m$ can be written as
\begin{equation*}
    \bm{\phi} = \mathbf{W}_1 \mathbf{y} + \mathbf{W}_2 \mathbf{z},
    \qquad
    \mathbf{y}=\mathbf{W}_1^{\top}\bm{\phi},\;\;
    \mathbf{z}=\mathbf{W}_2^{\top}\bm{\phi}.
\end{equation*}

Second, we use the best approximation that depends only on $\mathbf{W}_1^{\top}\bm{\phi}$, namely $\bar{f}(\mathbf{y})$ defined above.
By the law of total variance,
\begin{equation*}
\label{eq:apdx-total-variance}
\mathbb{E}_{\bm{\phi}\sim\bm{\rho}}\Big[\big(f(\bm{\phi})-\bar{f}(\mathbf{W}_1^{\top}\bm{\phi})\big)^2\Big]
=
\mathbb{E}_{\mathbf{y}}\Big[
        \mathrm{Var}\big(f(\bm{\phi}) \,\vert\, \mathbf{W}_1^{\top}\bm{\phi}=\mathbf{y}\big)
\Big].
\end{equation*}

Third, assume the conditional distributions of $\mathbf{z}=\mathbf{W}_2^\top\bm{\phi}$ given $\mathbf{y}=\mathbf{W}_1^\top\bm{\phi}$ satisfy a Poincar\'e
inequality with constant $\delta_{\bm{\rho}}$ (as in standard active-subspaces analyses~\cite{ActiveSubspaces}). Fix $\mathbf{y}$ and define the function of $\mathbf{z}$:
\begin{equation*}
    h_{\mathbf{y}}(\mathbf{z})
    =
    f(\mathbf{W}_1\mathbf{y} + \mathbf{W}_2\mathbf{z}).
\end{equation*}

Applying the Poincar\'e inequality (with constant $\delta_{\bm{\rho}}$) gives
\begin{equation*}
\label{eq:apdx-poincare-conditional}
\mathrm{Var}\big(f(\bm{\phi}) \,\vert\, \mathbf{W}_1^{\top}\bm{\phi}=\mathbf{y}\big)
\le
\delta_{\bm{\rho}}\;
\mathbb{E}\Big[\big\|\nabla_{\mathbf{z}} h_{\mathbf{y}}(\mathbf{z})\big\|_2^2
    \,\Big|\,
    \mathbf{W}_1^{\top}\bm{\phi}=\mathbf{y}\Big].
\end{equation*}

By the chain rule,
$
\nabla_{\mathbf{z}} h_{\mathbf{y}}(\mathbf{z})
=
\mathbf{W}_2^{\top}\nabla_{\bm{\phi}} f(\bm{\phi}),
$
so taking expectation over $\mathbf{y}$ yields
\begin{equation*}
\label{eq:apdx-mse-to-projgrad}
\mathbb{E}_{\bm{\phi}\sim\bm{\rho}}\Big[\big(f(\bm{\phi})-\bar{f}(\mathbf{W}_1^{\top}\bm{\phi})\big)^2\Big]
\le
\delta_{\bm{\rho}}\;
\mathbb{E}_{\bm{\phi}\sim\bm{\rho}}\Big[\big\|\mathbf{W}_2^{\top}\nabla f(\bm{\phi})\big\|_2^2\Big].
\end{equation*}

Finally, rewrite the projected gradient term using eigenvalues:
\begin{equation*}
\mathbb{E}\!\left[\|\mathbf{W}_2^{\top}\nabla f(\bm{\phi})\|_2^2\right]
=
\tr\!\left(\mathbf{W}_2^{\top}\bm{\mathcal{F}}\mathbf{W}_2\right)
=
\sum_{i=n+1}^{m}\lambda_i.
\end{equation*}

Combining with the previous equation proves the bound.
\end{proof}

\subsection{Policy Parametrization}
\label{apdx:model-struc}

\textbf{Shortcut Model.}
Training uses the AdamW optimizer with a learning rate of $\num{3e-4}$, and we clip the gradient norm at $\num{5.0}$. The replay buffer has a maximum capacity of $\num{1e6}$ transitions and a batch size of $\num{1024}$. The detailed architecture is shown below, where \texttt{STATE}, \texttt{ACTION}, and \texttt{PARAMETER} denote the dimensions of the state, action, and physical coefficients.

\begin{codesnippet}
Shortcut Model: ShortcutModel(
  (flow): DiffusionTransformer(
    (state_encoder): Sequential(
      (0): Linear(in_features=STATE, out_features=512)
      (1): Mish()
      (2): Linear(in_features=512, out_features=128)
    )
    (state_decoder): Sequential(
      (0): Linear(in_features=128, out_features=512)
      (1): Mish()
      (2): Linear(in_features=512, out_features=STATE)
    )
    (action_encoder): Sequential(
      (0): Linear(in_features=ACTION, out_features=512)
      (1): Mish()
      (2): Linear(in_features=512, out_features=128)
    )
    (privilege_encoder): Sequential(
      (0): Linear(in_features=PARAMETER, out_features=512)
      (1): Mish()
      (2): Linear(in_features=512, out_features=128)
    )
    (timestep_embedding): DualTimestepEncoder(
      (sinusoidal_pos_emb): SinusoidalPosEmb()
      (proj): Sequential(
        (0): Linear(in_features=128, out_features=256)
        (1): Mish()
        (2): Linear(in_features=256, out_features=64)
      )
    )
    (d_timestep_embedding): DualTimestepEncoder(
      (sinusoidal_pos_emb): SinusoidalPosEmb()
      (proj): Sequential(
        (0): Linear(in_features=128, out_features=256)
        (1): Mish()
        (2): Linear(in_features=256, out_features=64)
      )
    )
    (transformer): ModuleList(
      (0-5): 6 x AdaLNAttentionBlock(
        (norm1): LayerNorm((128,), eps=1e-06)
        (attn): Attention(
          (qkv): Linear(in_features=128, out_features=384)
          (attn_drop): Dropout(p=0.0, inplace=False)
          (proj): Linear(in_features=128, out_features=128)
          (proj_drop): Dropout(p=0.0, inplace=False)
        )
        (norm2): LayerNorm((128,), eps=1e-06)
        (mlp): MLP(
          (fc1): Linear(in_features=128, out_features=512)
          (act): GELU(approximate=none)
          (fc2): Linear(in_features=512, out_features=128)
          (drop): Dropout(p=0.0, inplace=False)
        )
        (adaLN_modulation): Sequential(
          (0): SiLU()
          (1): Linear(in_features=512, out_features=768)
        )
      )
      (6): AdaLNFinalLayer(
        (norm): LayerNorm((128,), eps=1e-06)
        (linear): Linear(in_features=128, out_features=128)
        (adaLN_modulation): Sequential(
          (0): SiLU()
          (1): Linear(in_features=512, out_features=256)
        )
      )
    )
  )
)
\end{codesnippet}

\begin{table*}[t!]
\centering
\small
\caption{Domain Randomization Parameters across all platforms. Ranges denote the Uniform distribution $\mathcal{U}(min, max)$. Operations indicate whether the noise replaces ($=$), scales ($\times$), or is added to ($+$) the nominal values. ``Obj'' refers to the manipulated object for the Hand task.}
\label{tab:dr_params_final}
\begin{tabular}{l c c c c c}
    \toprule
    \textbf{Parameter} & \textbf{Op.} & \textbf{Go1 Quad} & \textbf{G1 Humanoid} & \textbf{Jackal Vehicle} & \textbf{Inspire Hand} \\
    \midrule
    Environment Friction & $=$ & $\mathcal{U}(0.4, 1.0)$ & $\mathcal{U}(0.4, 1.0)$ & $\mathcal{U}(0.1, 1.0)$ & $\mathcal{U}(0.1, 1.0)$ \\
    Joint Friction Loss & $\times$ & $\mathcal{U}(0.1, 2.0)$ & $\mathcal{U}(0.1, 2.0)$ & $\mathcal{U}(0.1, 2.0)$ & $\mathcal{U}(0.1, 2.0)$ \\
    Joint Armature & $\times$ & $\mathcal{U}(0.1, 5.0)$ & $\mathcal{U}(0.1, 5.0)$ & $\mathcal{U}(0.8, 1.2)$ & $\mathcal{U}(1.0, 1.05)$ \\
    Joint Damping & $\times$ & -- & -- & $\mathcal{U}(0.5, 2.0)$ & $\mathcal{U}(0.1, 1.2)$ \\
    Joint Stiffness & $\times$ & -- & -- & -- & $\mathcal{U}(0.1, 3.0)$ \\
    Link Masses & $\times$ & $\mathcal{U}(0.1, 5.0)$ & $\mathcal{U}(0.1, 5.0)$ & $\mathcal{U}(0.1, 3.0)$ & $\mathcal{U}(0.1, 3.0)$ \\
    Base/Torso Mass & $+$ & $\mathcal{U}(-1.0, 10.0)$ & $\mathcal{U}(0.0, 10.0)$ & -- & -- \\
    Base/Obj Inertia & $\times$ & -- & -- & $\mathcal{U}(0.8, 1.2)$ & $\mathcal{U}(0.8, 1.2)$ \\
    CoM / Obj Position & $+$ & $\mathcal{U}(-0.1, 0.1)$ & -- & -- & $\mathcal{U}(-0.005, 0.005)$ \\
    Initial Pos ($q_0$) & $+$ & $\mathcal{U}(-0.1, 0.1)$ & $\mathcal{U}(-0.1, 0.1)$ & $\mathcal{U}(-0.05, 0.05)$ & $\mathcal{U}(-0.05, 0.05)$ \\
    \bottomrule
\end{tabular}
\end{table*}

\textbf{PPO and Baselines.}
Following the SAC tuning guide in massive simulations\footnote{\url{https://araffin.github.io/post/sac-massive-sim/}}, we highlight the specific parameters that differ from the stable-baselines3 default implementation.

\begin{table}[htbp]
\centering
\small
\begin{tabular}{lc}
\noalign{\hrule height 1pt}
\textbf{Hyperparameter}                     & \textbf{Value}      \\ \noalign{\hrule height 1pt}

\textbf{PPO}~\cite{schulman2017proximal} & \\
Batch Size                  & \num{256}    \\
Entropy Cost               & \num{1e-2}  \\
Max Grad Norm              & \num{1} \\
Num Minibatch             & \num{32} \\
Num Updates per Batch     & \num{4} \\
Discount                   & \num{0.97} \\
Learning Rate             & \num{3e-4} \\

\hline
\textbf{SAC}~\cite{haarnoja2018soft} & \\
Learning Starts                  & \num{1e5}    \\
Entropy Coefficient            & Auto \num{0.06} \\
Gradient Steps           & \num{64} \\
Batch Size                   & \num{256}    \\
Gamma             & \num{0.99}    \\
Learning Rate                   & \num{1e-4}    \\
Tau             & \num{0.05}    \\
Buffer Size               & \num{2e6} \\

\hline
\textbf{S{\footnotesize AC}-A{\footnotesize DAPT}}~\cite{chen22eipo} & \\
Beta Lower Bound                   & \num{-3.65}    \\
Beta Upper Bound            & \num{4.66} \\
Shift Multiplier                  & \num{6.86}       \\

\hline
\textbf{Shared Configuration} & \\
Policy                   & $ [512, 256, 128] $    \\
Value            & $ [512, 256, 128] $ \\

\noalign{\hrule height 1pt}
\end{tabular}
\caption{Hyperparameters for policies and the model-based policy optimization.}
\label{tab:tdmpc2-hyperparameters}
\end{table}

\subsection{Experiments}
\label{apdx:exp}

\begin{figure}[t!]
\centering
\includegraphics[width=0.485\textwidth]{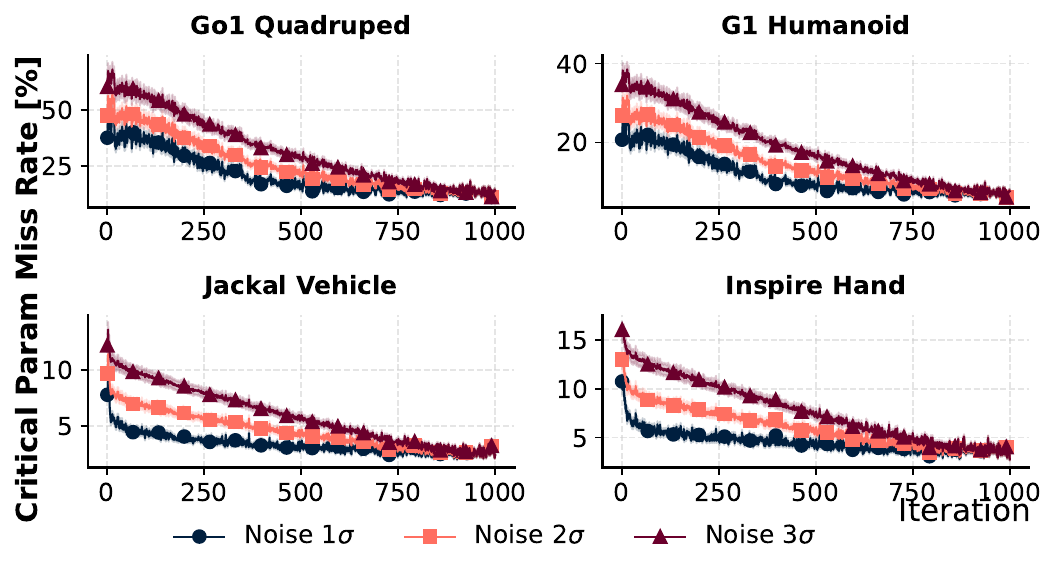}
\vspace*{-0.3in}
\caption{\small Critical parameter identification miss rate via our learned dynamics with respect to learning iterations.}
\vspace*{-0.2in}
\label{fig:param-miss}
\end{figure}

\textbf{Domain Randomization.} 
Table~\ref{tab:dr_params_final} details the domain randomization parameters used for each robot platform. We apply uniform noise $\mathcal{U}(a, b)$ to physical properties including friction, mass, and actuator dynamics. The randomization strategy respects the physical nature of each parameter: strictly positive quantities (e.g., friction, armature, damping) are perturbed via multiplicative scaling ($\times$), while state offsets and payloads are perturbed additively ($+$). We adopt well-studied randomization ranges from mjlab~\cite{mjlab} to ensure physically plausible  model mismatches. Notably, the Go1 quadruped, G1 humanoid, and Inspire Hand use the default mjlab reward functions, while the Jackal vehicle follows the reward in \cite{yu2024adaptive}.

\textbf{Learned Dynamics Model.} 
Fig.~\ref{fig:param-miss} shows the critical parameter miss rate, averaged across Flat and Rough where applicable. As robot degrees of freedom increase, the learned dynamics initially exhibits higher miss rates. However, performance improves significantly as training progresses, stabilizing around \num{750} iterations (i.e., timesteps). 

\textbf{Real-World Wheeled Navigation.} 
In the real-world wheeled-navigation experiment, D{\footnotesize ISAGREEMENT} drops substantially relative to simulation, mainly due to the runtime cost of maintaining and evaluating an ensemble of five dynamics models. We attempted to reduce this overhead with \texttt{torch.compile}, but further work is needed to make ensemble-based exploration practical under onboard compute constraints. Q{\footnotesize OED}-P{\footnotesize HYSICS} primarily fails because real-time sim mirroring is difficult: mapping noisy observations into the simulator introduces additional error, and simulator \texttt{reset} is a major computational bottleneck. While prior work can mitigate this with extensive domain randomization over physics and environment (e.g., terrain elevation), such pipelines typically require offboard compute and are difficult to run fully onboard on a Jetson Orin SoC. S{\footnotesize AC}-A{\footnotesize DAPT} often exhibits wobbling behaviors as exploration, but these motions are less informative than physics-targeted probing. In contrast, physics-driven exploration can induce task-relevant behaviors (e.g., lifting to probe mass and scrubbing to probe friction), as illustrated in our manipulation experiment.

\textbf{Validation of Theorem~\ref{thm:qoed-quasiopt-trace}.}
Using recorded data, we empirically measured the theorem quantities $(\eta, \beta)$ and the induced factor $\rho=\nicefrac{(1-\beta)}{(1+\eta)}$. Since Theorem~\ref{thm:qoed-quasiopt-trace} is stated over the policy class $\Pi$, we report these values as an empirical calibration of the bound in our evaluated settings. Larger $\rho$ is better, and $\rho=1$ is ideal. In simulation, the tuples $(\eta,\beta,\rho)$ are: Go1 $(0.0012, 0.2784, 0.7207)$, G1 $(0.0166, 0.2731, 0.7150)$, Jackal $(0.0011, 0.0008, 0.9981)$, and Hand $(0.0009, 0.0008, 0.9983)$. On real robots, Franka gives $(0.0162, 0.1421, 0.8442)$ and Jackal gives $(0.0147, 0.0353, 0.9507)$. These values show that the bound is near-ideal in Jackal/Hand, strong on Franka and real Jackal, and still substantial in the harder Go1/G1 settings, so it is informative rather than vacuous.

\end{document}